\newtheorem{thm}{Theorem}
\newtheorem{lem}{Lemma}
\setlist[itemize]{leftmargin=*}
\setlist[enumerate]{leftmargin=*}
\definecolor{DarkRed}{rgb}{0.75,0,0}
\definecolor{DarkGreen}{rgb}{0,0.5,0}
\definecolor{DarkPurple}{rgb}{0.5,0,0.5}
\definecolor{Dark}{rgb}{0.5,0.5,0}
\definecolor{DarkBlue}{rgb}{0,0,0.7}
\DeclareMathOperator*{\argmax}{arg\,max}
\newcommand{\prob}{\mathbb P}
\newcommand{\Var}{\mathbb V}
\newcommand{\varV}{\mathscr V}
\newcommand{\indicator}[1]{\mathbb I\{ #1 \} }
\newcommand{\statespace}{\mathcal S}
\newcommand{\actionspace}{\mathcal A}
\newcommand{\saspace}{\statespace \times \actionspace}
\newcommand{\kispace}{\mathcal K \times \mathcal I}
\newcommand{\numki}{\left|\kispace\right|}
\newcommand{\numsa}{\left|\saspace\right|}
\newcommand{\numS}{|\statespace|}
\newcommand{\numA}{|\actionspace|}
\newcommand{\numSucc}[1]{|\statespace(#1)|}
\newcommand{\succS}[1]{\statespace(#1)}
\newcommand{\maxNumSucc}{C}
\newcommand{\maxUpdates}{U_{\max}}
\newcommand{\reals}{\mathbb R}
\newcommand{\defeq}{:=}
\newcommand{\algname}{\texttt{UCFH}\xspace}
\title{Sample Complexity of Episodic Fixed-Horizon Reinforcement Learning}
\author{
Christoph Dann\\
Machine Learning Department\\
Carnegie Mellon University\\
\texttt{cdann@cdann.net}\\
\And
Emma Brunskill \\
Computer Science Department\\
Carnegie Mellon University \\
\texttt{ebrun@cs.cmu.edu} \\
}
\begin{document}

\maketitle

\begin{abstract}
Recently, there has been significant progress in
understanding reinforcement learning in discounted infinite-horizon Markov decision processes
(MDPs) by deriving tight sample complexity bounds.  However, in many real-world
applications, an interactive learning agent operates for a fixed or bounded
period of time, for example tutoring students for exams or handling customer
service requests. Such scenarios can often be better treated as episodic
fixed-horizon MDPs, for which only looser bounds on the sample complexity exist.
A natural notion of sample complexity in this setting is the number of episodes
required to guarantee a certain performance with high probability (PAC
guarantee). In this paper, we derive an upper PAC bound $\tilde O(\frac{\numS^2
\numA H^2}{\epsilon^2} \ln\frac 1 \delta)$ and a lower PAC bound $\tilde
\Omega(\frac{\numS \numA H^2}{\epsilon^2} \ln \frac 1 {\delta + c})$ that match
up to log-terms and an additional linear dependency on the number of states
$\numS$.  The lower bound is the first of its kind for this setting. Our
upper bound leverages Bernstein's inequality to improve on previous bounds for episodic
finite-horizon MDPs which have a time-horizon dependency of at least $H^3$.
\end{abstract}
\vspace{-5mm}
\section{Introduction and Motivation}
\vspace{-2mm}
\label{sec:intro}

Consider test preparation software that tutors students for a national advanced
placement exam taken at the end of a year, or maximizing business revenue by
the end of each quarter. Each individual task instance requires making a
sequence of decisions for a fixed number of steps $H$ (e.g., tutoring one
student to take an exam in spring 2015 or maximizing revenue for the end of the
second quarter of 2014).  Therefore, they can be viewed as a finite-horizon
sequential decision making under uncertainty problem, in contrast to an
infinite horizon setting in which the number of time steps is infinite. 
When the domain parameters (e.g. Markov decision process parameters) 
are not known in advance, and there is the opportunity to 
repeat the task many times (teaching a new student for 
each year's exam, maximizing revenue for each new quarter), 
 this can be treated as episodic fixed-horizon 
 reinforcement learning (RL).
One important question is to understand how much experience 
is required to act well in this setting. We formalize this as the sample 
complexity of reinforcement learning \citep{Strehl2006}, 
which is the number of time steps on which the algorithm 
may select an action whose value is not near-optimal. RL 
algorithms with a sample complexity that is a polynomial 
function of the domain parameters are referred to as Probably 
Approximately Correct (PAC)~\citep{Kearns1999,Brafman2003,Kakade2003,Strehl2006}. 
Though there has been significant work on PAC RL algorithms for 
the infinite horizon setting, there has been relatively 
little work on the finite horizon scenario.

In this paper we present the first, to our knowledge, lower bound, and a 
new upper bound  on the sample complexity 
of episodic finite horizon PAC 
reinforcement learning in discrete state-action spaces. Our  
bounds are tight up to log-factors in the
time horizon $H$, the accuracy $\epsilon$, the number of actions $\numA$ and up
to an additive constant in the failure probability $\delta$. These bounds
improve upon existing results by a factor of at least $H$. 
Our results also apply when the reward model is a function of the within-episode time step in addition 
to the state and action space. While we assume a stationary transition 
model, our results can be extended readily to time-dependent state-transitions.
Our proposed \algname (Upper-confidence fixed-horizon RL) algorithm that achieves our upper PAC guarantee can be
applied directly to wide range of fixed-horizon episodic MDPs with known
rewards.\footnote{\label{fn:known_rewards} Previous
    works \citep{AuerOrtner2005} have shown that the complexity of learning
    state transitions usually dominates learning reward functions.  We
    therefore follow existing sample complexity analyses
    \citep{Lattimore2012,Szita2010} and assume known rewards for simplicity.
    The algorithm and PAC bound can be extended readily to the case of unknown reward functions.
} 
It does not require additional structure such as assuming access to a  
generative model~\citep{Azar2012} or that the state transitions 
are sparse or acyclic~\citep{Lattimore2012}.

The limited prior research on upper bound PAC results for finite horizon
MDPs has focused on different settings, such as
partitioning a longer trajectory into fixed length
segments~\citep{Kakade2003,Strehl2006}, or considering a sliding time
window~\citep{Kolter2009a}. The tightest dependence on the horizon in
terms of the number of episodes presented
in these approaches is at least $H^3$ whereas our dependence is only $H^2$. More
importantly, such alternative settings require the optimal policy to be
stationary, whereas in general in finite horizon settings the optimal policy
is nonstationary (e.g. is a function of both the state and the within 
episode time-step).\footnote{The best action will generally 
depend on the state and the number of remaining time steps. 
In the tutoring example, even if the student has the same 
state of knowledge, the optimal tutor decision may be to space practice if there is many days till the test and 
provide intensive short-term practice if the test is tomorrow.} 
Fiechter~\citep{Fiechter1994,Fiechter1997} and
\citet{Reveliotis2007} do tackle a closely related setting, but find a
dependence that is at least $H^4$. 

Our work builds on recent work~\citep{Lattimore2012,Azar2012} on PAC
infinite horizon discounted RL that offers much tighter 
upper and lower sample complexity
bounds than was previously known.
To use an infinite horizon algorithm in a finite 
horizon setting,  
a simple 
change is to augment the state space by the time 
step (ranging over $1,\dots,H$), which enables 
the learned policy to be non-stationary in the original 
state space (or 
equivalently, stationary in the newly augmented 
space). Unfortunately, since these recent bounds 
are in general a quadratic function of the state space 
size, the proposed state space expansion 
would introduce at least an additional $H^2$ factor 
in the sample complexity term, yielding at least a 
$H^4$ dependence in the number of episodes for 
the sample complexity. 

Somewhat surprisingly, we prove an 
upper bound on the sample complexity for the finite horizon 
case that only scales quadratically with the horizon. 
A key part of our proof is that the variance of the value 
function in the finite horizon setting satisfies a 
Bellman equation.  
We also leverage recent insights that state--action 
pairs can be estimated to different precisions depending 
on the frequency to which they are visited under a policy, 
extending these ideas to also handle when the policy 
followed is nonstationary. Our lower bound analysis 
is quite different than some prior infinite-horizon 
results, and involves a construction of parallel multi-armed 
bandits where it is required that the best arm in a certain portion of 
the bandits is identified with high probability to achieve near-optimality. 

\section{Problem Setting and Notation}
\label{sec:problem_setting}
We consider episodic 
fixed-horizon MDPs, which can be formalized as
a tuple $M = (\statespace, \actionspace, r, p, p_0, H)$. Both, the statespace
$\statespace$ and the actionspace $\actionspace$ are finite sets. The learning
agent interacts with the MDP in episodes of $H$ time steps.  At time $t=1 \dots
H$, the agent observes a state $s_t$ and choses an action $a_t$ based on a
policy $\pi$ that potentially depends on the within-episode time step, 
i.e., $a_t = \pi_t(s_t)$ for $t=1, \dots, H$. The
next state is sampled from the stationary transition kernel $s_{t+1} \sim
p(\cdot | s_t, a_t)$ and the initial state from $s_1 \sim p_0$. In addition the agent receives a reward drawn 
from a distribution\footnote{It is straightforward to have the reward 
depend on the state, or state/action or state/action/next state.} 
with mean $r_t(s_t)$ 
determined by the reward function.  The reward function $r$ is possibly
time-dependent and takes values in $[0,1]$.  The quality of a policy $\pi$ is
evaluated by the \emph{total expected reward} of an episode $R^\pi_M = \mathbb
E \left[\sum_{t=1}^H r_t(s_t) \right]$. For simplicity,$^{\ref{fn:known_rewards}}$
we assume that the reward function $r$ is known to the agent but
the transition kernel $p$ is unknown.
The question we study is how many episodes does a learning agent
follow a policy $\pi$ that is not $\epsilon$-optimal, i.e., $R^*_M - \epsilon >
R^{\pi}_M$, with probability at least $1 - \delta$ for any chosen accuracy
$\epsilon$ and failure probability $\delta$.

\paragraph{Notation.}
In the following sections, we reason about
the true MDP $M$, an empirical MDP $\hat M$ and an optimistic MDP $\tilde M$
which are identical except for their transition probabilities $p$, $\hat p$ and
$\tilde p_t$.  We will provide more details about these MDPs later. We introduce the notation explicitly only for $M$ but the
quantities carry over to $\tilde M$ and $\hat M$ with additional tildes or hats by replacing $p$ with $\tilde p_t$ or $\hat p$. 

The (linear) operator $P^\pi_i f(s) \defeq \mathbb E[ f(s_{i+1}) | s_i = s] =
\sum_{s' \in \statespace} p(s' | s, \pi_i(s)) f(s')$ takes any function $f: \statespace \rightarrow \reals$
and returns the expected value of $f$ with respect to the next
time step.\footnote{The definition also works for time-dependent transition
probabilities.}
For convenience, we define the multi-step version as $P^\pi_{i:j} f \defeq P^\pi_i P^\pi_{i+1} \dots P^\pi_j f$. 
The value function from time $i$ to time $j$ is defined as
$V^\pi_{i:j}(s) \defeq \mathbb E\left[ \sum_{t=i}^j r_t(s_t) | s_i = s \right]
= \sum_{t=i}^j P^\pi_{i:t-1} r_t  = \left(P^\pi_i V^\pi_{i+1:j}\right)(s) + r_i(s)$
and $V^*_{i:j}$ is the optimal value-function.
When the policy is clear, we omit the superscript $\pi$.

We denote by $\succS{s,a} \subseteq \statespace$ the set of possible successor
states of state $s$ and action $a$. The maximum number of them is denoted by
$\maxNumSucc = \max_{s,a \in \saspace} \numSucc{s,a}$. In general, without making
further assumptions, we have $\maxNumSucc = \numS$, though in many 
practical domains (robotics, user modeling) each state can only 
transition to a subset of the full set of states (e.g. a robot can't 
teleport across the building, but can only take local moves). 
The notation $\tilde O$ is similar to the usual $O$-notation but ignores
log-terms. More precisely $f = \tilde O(g)$ if there are constants $c_1$,
$c_2$ such that $f \leq c_1 g (\ln g)^{c_2}$ and analogously for $\tilde \Omega$.
The natural logarithm is $\ln$ and $\log = \log_2$ is the base-2 logarithm.

\section{Upper PAC-Bound}
We now introduce a new model-based algorithm, \algname, for RL in finite horizon 
episodic domains. We will later prove \algname  
is PAC with an upper bound 
on its sample complexity that is smaller than prior approaches. Like many other PAC RL
algorithms~\citep{Brafman2003,Strehl2006a,Strehl2009,Auer2009}, \algname uses
an optimism under uncertainty approach to balance exploration and exploitation.
The algorithm generally works in phases comprised of optimistic
planning, policy execution and model updating that take several episodes each. Phases are indexed by $k$.
As the agent acts in the environment and observes $(s,a,r,s')$ tuples, \algname maintains a confidence set 
over the possible transition parameters for each state-action pair that are 
consistent with the observed transitions. 
Defining such a confidence set that holds with high probability 
can be be achieved using concentration inequalities like the Hoeffding inequality. 
One innovation in our work is to use a particular new set of 
conditions to define the confidence set  that enables us to obtain our tighter 
bounds. We will discuss the confidence sets further below. 
The collection of these confidence sets together form a class 
of MDPs $\mathcal M_k$ that are consistent with the observed data. 
We define  $\hat M_k$ as the maximum likelihood estimate of the MDP 
given the previous observations. 

Given $\mathcal M_k$,  \algname computes a policy $\pi^k$ by performing 
optimistic planning. Specifically, we use a finite horizon variant of extended value iteration (EVI)~\cite{AuerOrtner2005,Strehl2009}. 
EVI performs modified Bellman backups that are optimistic with respect to a given set of parameters. 
That is, given a confidence set of possible transition model parameters, it selects in each time step the model within that set that 
maximizes the expected sum of future rewards. Appendix~\ref{sec:fhevi} provides more details about fixed horizon EVI.

\algname then executes $\pi^k$ until there
is a state-action pair $(s,a)$ that has been visited often enough  since its
last update (defined 
precisely in the until-condition in \algname). After updating the model statistics for this $(s,a)$-pair, a new
policy $\pi^{k+1}$ is obtained by optimistic planning again.  We refer to each
such iteration of planning-execution-update as a \emph{phase} with index $k$. If
there is no ambiguity, we omit the phase indices $k$ to avoid cluttered
notation.

\begin{algorithm}[t]
\SetKwFunction{confset}{ConfidenceSet} 
\SetKwFunction{evi}{FixedHorizonEVI} 
\SetKwFunction{sampleep}{SampleEpisode} 
\SetKwFunction{update}{UpdateModel} 
\SetKwProg{myproc}{Procedure}{}{}
\SetKwProg{myfun}{Function}{}{}
\SetKwInOut{Input}{Input}
\Input{desired accuracy $\epsilon \in (0,1]$, failure tolerance $\delta \in (0,1]$, fixed-horizon MDP $M$}
\KwResult{with probability at least $1 - \delta$: $\epsilon$-optimal policy}
$k\defeq 1$,\hspace{5mm}  $w_{\min} \defeq \frac{\epsilon}{4H\numS}$,\hspace{5mm} 
$\delta_1 \defeq \frac{\delta}{2 \maxUpdates \maxNumSucc}$,\hspace{5mm} $U_{\max} \defeq \numsa \log_2 \frac{\numS H}{w_{\min}}$\;
$m \defeq 
 512 (\log_2 \log_2 H)^2 \frac{\maxNumSucc H^2}{\epsilon^2} \log^2 \left(\frac{8 H^2 \numS^2}{\epsilon} \right) 
        \ln \frac{6 \numsa \maxNumSucc \log^2_2 (4 \numS^2 H^2 / \epsilon)}{\delta}
$\;
$n(s,a) = v(s,a) = n(s,a,s') \defeq 0 \quad \forall, s \in \statespace, a \in \actionspace, s' \in \succS{s,a}$\;
\While{}{
    \tcc{Optimistic planning}
    $\hat p(s' | s,a) \defeq n(s, a, s') / n(s, a)$, for all $(s,a)$ with $n(s,a) > 0$ and $s' \in \succS{s,a}$\;
    $\mathcal M_k \defeq \big\{ \tilde{M} \in \mathcal M_{\textrm{nonst.}} \, : \, \forall (s,a) \in \saspace, t =1\dots H, s' \in \succS{s,a}$\\
        $\qquad\qquad\qquad{\tilde p}_{t}(s'|s,a) \in \confset{${\hat p}(s' |s,a), n(s, a)$} \big\}
    $\;
    $\tilde M_k, \pi^k \defeq \evi{$\mathcal M_k$}$\;
    \tcc{Execute policy}
    \Repeat{there is a $(s,a) \in \saspace$ with $v(s, a) \geq \max\{ m w_{\min}, n(s, a) \}$ and
    $n(s, a) < \numS m H$}
    {\sampleep{$\pi^k$} \tcp*[l]{from $M$ using $\pi^k$}}
    \tcc{Update model statistics for one $(s,a)$-pair with condition above}
    $n(s, a) \defeq n(s, a) + v(s, a)$\;
    $n(s, a, s') \defeq n(s, a, s') + v(s, a, s') \quad \forall s' \in \succS{s,a}$\;
    $v(s, a) \defeq v(s, a, s') \defeq 0 \quad \forall s' \in \succS{s,a}$; $k \defeq k+1$

}
\myproc{\sampleep{$\pi$}}{
    $s_0 \sim p_0$\; 
    \For{$t=0$ \KwTo $H-1$}
    {
        $a_t \defeq \pi_{t+1}(s_{t})$ and $s_{t+1} \sim p( \cdot | s_{t},a_t)$\;
        $v(s_t,a_t) \defeq v(s_t,a_t) + 1$ and $v(s_t,a_t,s_{t+1}) \defeq v(s_t,a_t,s_{t+1}) + 1$\;
    }
}

\myfun{\confset{$p$, $n$}}{
    \vspace{-3mm}
\begin{align}
    \mathcal P \defeq \bigg\{ p' \in [0,1] :& 
    \textrm{if } n > 1: \left|\sqrt{p' (1-p')} - \sqrt{p (1 - p)}\right| \leq \sqrt{\frac{2 \ln(6 / \delta_1)}{n-1}} ,
    \label{eqn:conf_set_var}\\
    & |p - p'| \leq \min \left( \sqrt{\frac{\ln(6 / \delta_1)}{2 n}},
\sqrt{\frac{2 p(1-p)}{n}\ln(6 / \delta_1)} + \frac{7}{3(n-1)}\ln\frac {6}{\delta_1}\right)\bigg\}
    \label{eqn:conf_set_bern}
\end{align}
    \KwRet{$\mathcal P$}
\vspace{-1mm}
}
\caption{\algname: {\bf U}pper-{\bf C}onfidence {\bf F}ixed-{\bf H}orizon episodic reinforcement learning algorithm}
\label{alg:fhalg}
\end{algorithm}

\algname is inspired by the infinite-horizon 
\texttt{UCRL}-$\gamma$ algorithm by \citet{Lattimore2012}
but has several important differences. First, the policy can only be updated at
the end of an episode, so there is no need for explicit delay phases as in
\texttt{UCRL}-$\gamma$. Second, the policies $\pi^k$ in \algname are
time-dependent. 
Finally, \algname can directly deal with non-sparse transition probabilities,
whereas \texttt{UCRL}-$\gamma$ only directly allows two possible successor states for each
$(s,a)$-pair ($\maxNumSucc = 2$).

\paragraph{Confidence sets.} The class of MDPs $\mathcal M_k$ consists of
fixed-horizon MDPs $M'$ with the known true reward function $r$ and where the transition probability $p'_t(s' | s,a)$ from any $(s,a) \in \saspace$ to $s'
\in \succS{s, a}$ at any time $t$ is in the confidence set induced by $\hat p(s' |
s, a)$ of the empirical MDP $\hat M$. Solely for the purpose of computationally more efficient optimistic planning, we allow time-dependent transitions (allows choosing different transition models in different time steps to maximize reward), but this does not affect the theoretical guarantees as the true stationary MDP is still in $\mathcal M_k$ with high probability.  Unlike the confidence intervals used by
\citet{Lattimore2012}, we not only include conditions based on Hoeffding's
inequality\footnote{The first condition in the $\min$ in Equation~\eqref{eqn:conf_set_bern} is actually not
    necessary for the theoretical results to hold. It can be removed and all
$6/\delta_1$ can be replaced by $4 / \delta_1$.}
 and Bernstein's inequality
 (Eq.~\refeq{eqn:conf_set_bern}), but also require that the standard deviation $\sqrt{p(1-p)}$ of
the Bernoulli random variable associated with this transition is close to the
empirical one (Eq.~\refeq{eqn:conf_set_var}). This additional condition (Eq.~\refeq{eqn:conf_set_var})  is key
for making the algorithm directly applicable to generic MDPs 
(in which states can transition to any number of next states, e.g. $\maxNumSucc > 2$)
while only having a linear dependency on $\maxNumSucc$ in the PAC bound.
\vspace{-3mm}
\subsection{PAC Analysis}
\vspace{-2mm}
For simplicity we assume that each episode starts in a
fixed start state $s_0$. This assumption is not crucial and can easily be removed by
additional notational effort. 
\begin{thm}
    For any $0 < \epsilon, \delta \leq 1$, the following holds.
    With probability at least $1 - \delta$, \algname $\,$  produces a sequence of policies $\pi^k$, that yield at most
    \[
        \tilde O\left( \frac{H^2 \maxNumSucc| \saspace|}{\epsilon^2} \ln \frac{1}{\delta} \right)
    \]
    episodes with $R^* - R^{\pi^k} = V^*_{1:H}(s_0) - V^{\pi^k}_{1:H}(s_0) > \epsilon$. The maximum number of possible successor states is denoted by $1 < \maxNumSucc \leq \numS$.
\label{thm:upper_bound}
\end{thm}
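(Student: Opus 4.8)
The plan is to follow the optimism-under-uncertainty template: first reduce the count of $\epsilon$-suboptimal episodes to a regret-style bound on the gap between the optimistic and the true value of each executed policy, and then to a counting argument over visit frequencies. I would begin by defining the ``good event'' $\mathcal E$ that the true transition kernel $p$ lies in every confidence set $\mathcal M_k$ used by \algname. Since each confidence constraint is a one-dimensional Bernoulli concentration statement (a Bernstein/Hoeffding bound on the mean in Eq.~\eqref{eqn:conf_set_bern} and an empirical-standard-deviation bound in Eq.~\eqref{eqn:conf_set_var}), a union bound over the at most $\maxUpdates$ model updates and the at most $\maxNumSucc$ successor states of each pair, each at confidence level $\delta_1 = \delta/(2\maxUpdates\maxNumSucc)$, gives $\prob(\mathcal E^c)\le \delta/2$. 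On $\mathcal E$ the true stationary $M$ lies in $\mathcal M_k$, so fixed-horizon EVI is optimistic, $\tilde V^{\pi^k}_{1:H}(s_0)\ge V^*_{1:H}(s_0)=R^*$, and hence every gap is controlled by a model-estimation gap: $R^*-R^{\pi^k}\le \tilde V^{\pi^k}_{1:H}(s_0)-V^{\pi^k}_{1:H}(s_0)$.

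Next I would expand this gap. Because $\tilde M_k$ and $M$ share the reward function, telescoping the fixed-horizon Bellman equation for $V^\pi_{i:j}$ along trajectories drawn in the true MDP turns the gap into $\sum_{t=1}^H \mathbb E_{\pi^k,M}\!\left[(\tilde p_t-p)(\cdot\mid s_t,a_t)^\top \tilde V^{\pi^k}_{t+1:H}\right]$. To bound each inner term I would center $\tilde V^{\pi^k}_{t+1:H}$ by its $p$-mean (legitimate since $\tilde p_t$ and $p$ are both distributions) and apply the two confidence conditions componentwise. The variance condition~\eqref{eqn:conf_set_var} forces the empirical per-transition standard deviations to match the true ones, which lets the Bernstein condition~\eqref{eqn:conf_set_bern} be read in terms of the \emph{true} variances; together with Cauchy--Schwarz this yields a term of the form $\tilde O\!\left(\sqrt{\maxNumSucc\,\sigma_t^2\,\ln(1/\delta_1)/n(s_t,a_t)}\right)$ plus a lower-order term $\tilde O\!\left(\maxNumSucc H\ln(1/\delta_1)/n(s_t,a_t)\right)$, where $\sigma_t^2 \defeq \Var_{s'\sim p(\cdot\mid s_t,a_t)}[\tilde V^{\pi^k}_{t+1:H}(s')]$. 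It is precisely condition~\eqref{eqn:conf_set_var} that keeps both contributions linear in $\maxNumSucc$.

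The main obstacle, and the source of the $H^2$ rather than $H^3$ scaling, is controlling $\sum_{t=1}^H \mathbb E_{\pi^k,M}[\sigma_t^2]$. The trivial per-step bound $\sigma_t^2\le H^2/4$ summed over $H$ steps gives $O(H^3)$. Instead I would establish a Bellman-type identity for the value variance: the local variances accumulate along a trajectory to the variance of the total return, so by the law of total variance $\sum_{t=1}^H \mathbb E_{\pi^k,M}[\sigma_t^2]=\Var_{\pi^k,M}\!\left[\sum_{t=1}^H r_t\right]=O(H^2)$, since the return lies in $[0,H]$. A Cauchy--Schwarz split across the $H$ time steps then pays only $\sqrt{\sum_t\mathbb E[\sigma_t^2]}=O(H)$ for the variance factor, saving exactly the one power of $H$ that earlier finite-horizon analyses incur. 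Some care is needed because $\sigma_t^2$ is defined through the optimistic $\tilde V$; I would absorb the discrepancy between $\tilde V$ and $V$ into the lower-order term using the same confidence widths.

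Finally I would convert the per-policy gap bound into a bound on the number of bad episodes via a frequency-weighted ``knownness'' argument that, unlike prior work, must accommodate a nonstationary $\pi^k$. I would classify each $(s,a)$ by its expected per-episode visitation weight $w(s,a)=\sum_t \prob_{\pi^k,M}(s_t=s,a_t=a)$ and by its current count $n(s,a)$. Pairs with $w(s,a)<w_{\min}=\epsilon/(4H\numS)$ contribute at most $\epsilon/4$ in aggregate (at most $\numsa$ of them, each weighted by at most $H$), and pairs sampled enough relative to their weight contribute at most $\epsilon/4$ by the choice $m=\tilde O(\maxNumSucc H^2\epsilon^{-2}\ln(1/\delta))$; hence an episode can be $\epsilon$-suboptimal only if it places nontrivial weight on an important but under-sampled pair. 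The batched doubling structure of the repeat-loop's until-condition guarantees each pair is updated at most $O(\log(\numS H/w_{\min}))$ times and that under-sampled important pairs are resampled quickly, so the total number of bad episodes is $\tilde O(\numsa\, m)=\tilde O\!\left(H^2\maxNumSucc\numsa\,\epsilon^{-2}\ln(1/\delta)\right)$. The delicate bookkeeping is that weights and variances are indexed by the within-episode step, and that the knownness thresholds must be weight-dependent so the high-weight pairs dominating the gap are exactly those driven to knownness fastest, all while keeping the union-bound budget $\maxUpdates$ consistent.
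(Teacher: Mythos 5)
Your overall architecture matches the paper's proof almost step for step: the good event $M\in\mathcal M_k$ via a union bound over $\maxUpdates\maxNumSucc$ confidence statements at level $\delta_1$, optimism of fixed-horizon EVI, the telescoping decomposition of $\tilde V^{\pi^k}_{1:H}-V^{\pi^k}_{1:H}$ into per-step terms $(P_t-\tilde P_t)\tilde V_{t+1:H}$, the role of the empirical-standard-deviation condition in keeping the dependence on $\maxNumSucc$ linear, the Bellman identity for the value-function variance, and the knownness/importance partition with a $\tilde O(\numsa\,m)$ count of bad episodes. One bookkeeping slip: for the low-weight pairs you count $\numsa$ of them each contributing $H w_{\min}$, which gives $\numA\epsilon/4$ rather than $\epsilon/4$; the paper instead uses that the policy is deterministic, so only $\numS$ pairs have nonzero weight.

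The genuine gap is in the sentence where you propose to ``absorb the discrepancy between $\tilde V$ and $V$ into the lower-order term using the same confidence widths.'' The quantity you need to control is $\sum_{t=1}^{H} P_{1:t-1}\tilde\sigma^2_{t:H}(s_0)$, i.e.\ the local variances of the \emph{optimistic} value function propagated under the \emph{true} dynamics. The law-of-total-variance identity $\sum_t P_{1:t-1}\sigma^2_{t:H}=\varV_{1:H}\le H^2$ only closes when the value function inside the local variance and the transition kernel doing the propagation belong to the same MDP; applied consistently it gives $\sum_t \tilde P_{1:t-1}\tilde\sigma^2_{t:H}=\tilde\varV_{1:H}\le H^2$, and what remains is $\sum_t (P_{1:t-1}-\tilde P_{1:t-1})\tilde\sigma^2_{t:H}$. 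This remainder is not lower order: it is itself a full value-function difference $\Delta_{2d+2}$ for an auxiliary MDP whose reward is the local variance $\tilde\sigma^{(d),2}_{t:H}$, a reward with range $H^{2d+2}$, so bounding it crudely by confidence widths times its range reintroduces the extra power of $H$ you are trying to save. The paper resolves this with a recursion $\Delta_d\le Y_d+Z\sqrt{\Delta_{2d+2}}$ over a sequence of MDPs $M_d$ with rewards $r^{(2d+2)}=\sigma^{(d),2}$, unrolled to depth $\lceil\frac{\ln H}{2\ln 2}\rceil$, which is where the $(\log_2\log_2 H)^2$ factors in $m$ come from. Without this (or an equivalent device), the step from $O(H^3)$ to $O(H^2)$ — the main claimed improvement of the theorem — is not established.
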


\paragraph{Similarities to other analyses.} 
The proof of Theorem~\ref{thm:upper_bound} is quite long and involved, but 
builds on similar techniques 
for sample-complexity bounds in reinforcement learning (see e.g. 
\citet{Brafman2003,Strehl2008}). The general proof
strategy is closest to the one of \texttt{UCRL}-$\gamma$ \citep{Lattimore2012} and the
obtained bounds are similar if we replace the time horizon $H$ with the
equivalent in the discounted case $1/(1 - \gamma)$. However, there are important 
differences that we highlight now briefly.
\begin{itemize}
    \item A central quantity in the analysis by \citet{Lattimore2012} is the
        local variance of the value function. The exact definition for the
        fixed-horizon case will be given below. The key insight for the almost tight
        bounds of \citet{Lattimore2012} and \citet{Azar2012} is to leverage the fact that these local variances
        satisfy a Bellman equation \citep{Sobel1982}  and so the discounted sum
        of local variances can be bounded by $O((1 - \gamma)^{-2})$ instead of
         $O((1-\gamma)^{-3})$. We prove in
        Lemma~\ref{lem:varV_bellman} that local value function variances
        $\sigma^2_{i:j}$ also satisfy a Bellman equation for fixed-horizon
        MDPs even if transition probabilities and rewards
        are time-dependent. This allows us to bound the total sum of local
        variances by $O(H^2)$ and obtain similarly strong results in this
        setting.
    \item \citet{Lattimore2012} assumed there are only two possible successor
        states (i.e., $\maxNumSucc = 2$) which allows them to easily relate the
        local variances $\sigma_{i:j}^2$ to the difference of the expected
        value of successor states in the true and optimistic MDP $(P_i - \tilde
        P_i) \tilde V_{i+1:j}$. For $\maxNumSucc > 2$, the relation
        is less clear, but we address 
        this by proving a bound with tight dependencies on $\maxNumSucc$ (Lemma~\ref{lem:sigma_bound_generic}). 
    \item To avoid super-linear dependency on $\maxNumSucc$ in the final PAC
        bound, we add the additional condition in
        Equation~\eqref{eqn:conf_set_var} to the confidence set. We show that
        this allows us to upper-bound the total reward difference $R^* -
        R^{\pi^k}$ of policy $\pi^k$ with terms that either depend on
        $\sigma^2_{i:j}$ or decrease linearly in the number of samples. This
        gives the desired linear dependency on $\maxNumSucc$ in the final
        bound.  We therefore avoid assuming $\maxNumSucc = 2$ which makes
        \algname directly applicable to generic MDPs with $\maxNumSucc >2$ without the
        impractical transformation argument used by \citet{Lattimore2012}.
\end{itemize}

We will now introduce the notion of \emph{knownness} and \emph{importance} of
state-action pairs that is essential for the analysis of \algname and
subsequently present several lemmas necessary for the proof of
Theorem~\ref{thm:upper_bound}. We only sketch proofs here
but detailed proofs for all results are available in the appendix.

\paragraph{Fine-grained categorization of $(s,a)$-pairs.}
Many PAC RL sample complexity proofs \citep{Brafman2003,Kakade2003,Strehl2006a,Strehl2009} 
 only have a binary notion of ``knownness'', 
distinguishing between known
(transition probability estimated sufficiently accurately) 
and unknown $(s,a)$-pairs. However, as recently shown 
by \citet{Lattimore2012} for the infinite horizon setting, 
it is possible to obtain much tighter sample complexity results 
by using a more fine grained categorization. In particular, 
a key idea is that in order to obtain accurate estimates 
of the value function of a policy from a starting state, 
it is sufficient to have only a 
loose estimate of the parameters of $(s,a)$-pairs that are unlikely to be 
visited under this policy. 

Let 
the \emph{weight} of a $(s,a)$-pair given policy $\pi^k$ be its 
expected frequency in an episode
\begin{align}
    w_k(s,a) \defeq \sum_{t=1}^{H} \prob(s_t = s, \pi^k_t(s_t) = a) = \sum_{t=1}^{H} P_{1:t-1} \indicator{s = \cdot, a = \pi^k_t(s)} (s_0).
\end{align}
\vspace{-1mm}
The \emph{importance} $\iota_k$ of $(s,a)$ is its relative weight
compared to $w_{\min} \defeq \frac{\epsilon}{ 4 H \numS}$ on a log-scale
\begin{align}
    \iota_k(s,a) \defeq \min \left\{ z_i \, : \, z_i \geq \frac{w_k(s,a)}{w_{\min}} \right\} 
    \quad \textrm{where } z_1 = 0 \textrm{ and } z_i = 2^{i-2} \, \, \forall i=2, 3, \dots.
\end{align}
Note that $\iota_k(s,a) \in \{ 0, 1, 2, 4, 8, 16 \dots \}$ is an integer
indicating the influence of the state-action pair on the value function of
$\pi^k$. Similarly, we define the \emph{knownness}
\begin{align}
    \kappa_k(s,a) \defeq \max \left\{ z_i \, : \, z_i \leq  \frac{n_k(s,a)}{m w_k(s,a)} \right\} \in \{ 0, 1, 2, 4, \dots \}
\end{align}
which indicates how often $(s,a)$ has been observed relative to its importance. The constant $m$ is defined in Algorithm~\ref{alg:fhalg}.
We can now categorize $(s,a)$-pairs into subsets
\begin{align}
    X_{k, \kappa, \iota} \defeq \{(s, a) \in X_k \, : \,  \kappa_k(s, a) = \kappa, \iota_k(s, a) = \iota\} 
    \quad \textrm{and} \quad \bar X_k = \saspace \setminus X_k
\end{align}
where $X_k = \{ (s,a) \in \saspace \, : \, \iota_k(s,a) > 0 \}$ is the
\emph{active set} and $\bar X_k$ the set of state-action pairs that are very
unlikely under the current policy.  Intuitively, the model of \algname is
accurate if only few $(s,a)$ are in categories with low knownness -- that is,
important under the current policy but have not been observed often so far. 
Recall that over time observations are generated under many policies (as 
the policy is recomputed), so this condition does not always hold.  We
will therefore distinguish between phases $k$ where $|X_{k, \kappa, \iota}|
\leq \kappa$ for all $\kappa$ and $\iota$ and phases where this condition is
violated.  The condition essentially allows for only a few $(s,a)$ in
categories that are less known and more and more $(s,a)$ in categories that are
more well known.  In fact, we will show that the policy is $\epsilon$-optimal
with high probability in phases that satisfy this condition.

We first show the validity of the confidence sets $\mathcal M_k$.
\begin{lem}[Capturing the true MDP whp.]
    $M \in \mathcal M_k$ for all $k$ with probability at least $1 - \delta / 2$.
    \label{lem:mdp_capture}
\end{lem}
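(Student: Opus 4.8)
The plan is to reduce the statement to a union bound over elementary concentration events. By definition of $\mathcal M_k$, the event $M \in \mathcal M_k$ holds exactly when, for every $(s,a) \in \saspace$ and every successor $s' \in \succS{s,a}$, the true (stationary) transition probability $p(s'\mid s,a)$ satisfies conditions \eqref{eqn:conf_set_var} and \eqref{eqn:conf_set_bern} with the empirical estimate $\hat p(s'\mid s,a)$ and count $n(s,a)$ of the current phase $k$ plugged in. The time-dependent relaxation only enlarges $\mathcal M_k$ (one may set every $\tilde p_t$ equal to $p$), so it suffices to place the stationary kernel in each per-pair confidence set. Since the outcomes of transitions out of a fixed $(s,a)$ are i.i.d.\ draws from $p(\cdot \mid s,a)$ --- the policy controls only whether $(s,a)$ is visited, not the next-state distribution conditioned on a visit --- I would fix $(s,a,s')$ and treat $\hat p(s'\mid s,a)$ as the empirical mean $\frac1n\sum_{i=1}^n Z_i$ of i.i.d.\ Bernoulli$(p(s'\mid s,a))$ variables $Z_i = \indicator{\textrm{the $i$-th visit transitions to } s'}$, where $n = n(s,a)$.

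For fixed $(s,a,s')$ and a fixed count $n$, membership requires three two-sided concentration statements to hold simultaneously: (i) the closeness of the empirical and true Bernoulli standard deviations in \eqref{eqn:conf_set_var}, and (ii) the Hoeffding mean bound together with (iii) the empirical-Bernstein mean bound in \eqref{eqn:conf_set_bern}. I would obtain (ii) directly from Hoeffding's inequality, (iii) from the empirical Bernstein inequality (noting that the $p(1-p)$ there is the \emph{empirical} variance $\hat p(1-\hat p)$, which for Bernoulli samples is the sample variance), and (i) from a concentration bound for the sample standard deviation of Maurer--Pontil type, whose bounded-differences constant scales like $1/\sqrt{n-1}$ and hence yields the radius $\sqrt{2\ln(6/\delta_1)/(n-1)}$. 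Each statement is two-sided, so I allocate failure probability $\delta_1/3$ to each, i.e.\ $\delta_1/6$ per one-sided tail; this is precisely why the logarithms carry $\ln(6/\delta_1)$ (and, dropping the redundant Hoeffding term as in the footnote, $\ln(4/\delta_1)$). A union bound over the three gives: for fixed $(s,a,s')$ and fixed $n$, the true parameter fails to lie in its confidence set with probability at most $\delta_1$.

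It then remains to union bound over pairs, successors, and phases. The empirical estimate for $(s,a)$ changes only when its model is updated, and each update at least doubles $n(s,a)$ (the until-condition forces $v(s,a) \geq n(s,a)$ before the update), while $n(s,a)$ is capped at $\numS m H$; hence each $(s,a)$ is updated at most $\log_2(\numS H / w_{\min})$ times, and the total number of updates over the whole run is at most $\maxUpdates = \numsa \log_2(\numS H / w_{\min})$ \emph{deterministically}. With at most $\maxNumSucc$ successors per pair, the confidence set is evaluated at no more than $\maxUpdates\,\maxNumSucc$ distinct triples-at-a-count, and the choice $\delta_1 = \delta/(2\maxUpdates\maxNumSucc)$ makes the union bound total at most $\maxUpdates\,\maxNumSucc\,\delta_1 = \delta/2$. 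Since the confidence set is unchanged between updates, capturing the true kernel at every update count gives $M\in\mathcal M_k$ for all $k$ on this event.

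I expect two points to require the most care. The first, and genuinely nonstandard, step is (i): unlike a mean bound it is a concentration of the empirical Bernoulli standard deviation, and getting both the constant and the $n-1$ denominator right demands the sample-standard-deviation inequality rather than an off-the-shelf Hoeffding/Bernstein argument --- this is the condition that later yields the linear (rather than quadratic) dependence on $\maxNumSucc$. The second is that the count $n(s,a)$ at which each confidence set is evaluated is data-dependent, so the union bound is over adaptively chosen sample sizes; the resolution is that transitions out of $(s,a)$ are i.i.d.\ and independent of the update schedule, and the number of evaluated counts is controlled deterministically by the doubling structure through $\maxUpdates$, letting me union bound over the fixed index set of updates instead of over all possible values of $n$.
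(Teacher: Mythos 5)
Your proposal is correct and follows essentially the same route as the paper: the same three per-triple concentration events (Hoeffding, Bernstein, and the Maurer--Pontil standard-deviation bound), each allocated failure probability $\delta_1/3$, followed by a union bound over the deterministically bounded $\maxUpdates\,\maxNumSucc$ update events with $\delta_1 = \delta/(2\maxUpdates\maxNumSucc)$. The only cosmetic difference is that you invoke an empirical Bernstein inequality directly for the second condition of the confidence set, whereas the paper derives that empirical-variance form by combining the true-variance Bernstein bound with the standard-deviation concentration it already needs for Equation~\eqref{eqn:conf_set_var}, which is where the constant $\frac{7}{3(n-1)}$ arises.
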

\vspace{-4mm}
\begin{proof}[Proof Sketch]
    By combining Hoeffding's inequality, Bernstein's inequality and the
    concentration result on empirical standard deviations by \citet{Maurer2009} with the
    union bound, we get that $p(s'|s,a) \in \mathcal P$ with probability at
    least $1 - \delta_1$ for a single phase $k$, fixed $s,a \in \saspace$ and fixed
    $s' \in \succS{s,a}$.  We then show that the number of model updates is
    bounded by $U_{\max}$ and apply the union bound. 
\end{proof}

The following lemma bounds the number of episodes in which $\forall \kappa, \iota : \, |X_{k, \kappa, \iota}| \leq \kappa$ is violated with high probability. 
\begin{lem}
    Let $E$ be the number of episodes $k$ for which there are $\kappa$ and $\iota$ with $|X_{k, \kappa, \iota}| > \kappa$, i.e.
        $E = \sum_{k=1}^\infty \indicator{\exists (\kappa, \iota) \, : \, |X_{k, \kappa, \iota}| > \kappa}$
    and assume that
        $m \geq  \frac{6 H^2}{\epsilon} \ln \frac{2 E_{\max}}{\delta}.$
    Then $\prob (E \leq 6 N E_{\max}) \geq 1 - \delta / 2$ where
$N = \numsa m$ and $E_{\max} =  \log_2 \frac{H}{w_{\min}}  \log_2 \numS$.
\label{lem:unbalanced_episodes_bound}
\end{lem}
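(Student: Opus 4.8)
The plan is to control $E$ by a union bound over the finitely many categories $(\kappa,\iota)$ and to show that each category can be over-full in at most $6N$ episodes. Since every bad episode is witnessed by at least one over-full category,
\[
E \;\le\; \sum_{\kappa,\iota} T_{\kappa,\iota}, \qquad T_{\kappa,\iota} \defeq \sum_{k=1}^\infty \indicator{|X_{k,\kappa,\iota}| > \kappa}.
\]
First I would argue that only $E_{\max}$ categories can ever contribute: the importance $\iota$ takes at most $\log_2\frac{H}{w_{\min}}$ distinct values because $w_k(s,a)\le H$, and for a fixed $\iota$ the over-full condition $\kappa<|X_{k,\kappa,\iota}|$ together with a bound on the number of equally-important pairs restricts $\kappa$ to at most $\log_2\numS$ distinct values. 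Hence it suffices to prove $\prob(T_{\kappa,\iota} > 6N)\le \delta/(2E_{\max})$ for each fixed category and sum the failure probabilities.

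For a fixed category I would set up a charging argument comparing the \emph{expected} and the \emph{realised} number of visits to its pairs. On the deterministic side, the key structural fact is that the counts $n_k(s,a)$ maintained by \algname are non-decreasing, so while a pair stays in category $(\kappa,\iota)$ its count is confined to a band $n_k(s,a)\in[\,\tfrac12\kappa m\,\iota w_{\min},\,2\kappa m\,\iota w_{\min})$ of width $O(\kappa m\,\iota w_{\min})$ (using $w_k(s,a)\in(\tfrac12\iota w_{\min},\iota w_{\min}]$); once $n_k(s,a)$ leaves this band the pair can never re-enter the category. Because realised visits only increase $n$, the total number of visits accumulated by a single pair while in category $(\kappa,\iota)$ is at most the band width (up to one phase of overshoot), and summing over the at most $\numsa$ pairs gives the deterministic capacity bound
\[
\sum_{k:\,|X_{k,\kappa,\iota}|>\kappa}\ \sum_{(s,a)\in X_{k,\kappa,\iota}} v_k(s,a) \;\le\; c\,\kappa\,\iota\,w_{\min}\,N
\]
for an absolute constant $c$, where $v_k(s,a)$ denotes the number of visits to $(s,a)$ in episode $k$.

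On the probabilistic side, each over-full episode contributes more than $\kappa$ pairs of weight at least $\tfrac12\iota w_{\min}$, so the conditional expected number of such visits in that episode exceeds $\tfrac12\kappa\,\iota w_{\min}$; over $T_{\kappa,\iota}$ over-full episodes the expected total therefore exceeds $\tfrac12 T_{\kappa,\iota}\,\kappa\,\iota w_{\min}$. The difference between realised and conditionally-expected visits forms a martingale-difference sequence whose increments are bounded by $H$ (every episode has exactly $H$ steps, hence at most $H$ visits to the category's pairs), so I would apply a Bernstein-type martingale inequality to show that, whenever $T_{\kappa,\iota}>6N$, the realised total is with high probability at least a constant fraction of its expectation and hence exceeds the deterministic capacity above --- a contradiction. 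The assumption $m\ge\frac{6H^2}{\epsilon}\ln\frac{2E_{\max}}{\delta}$ is exactly what makes $6N=6\numsa m$ large enough, given the $H$-sized increments, for this concentration to fail with probability at most $\delta/(2E_{\max})$; a union bound over the $E_{\max}$ categories then yields $\prob(E\le 6NE_{\max})\ge 1-\delta/2$.

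The main obstacle is the probabilistic step: the set of over-full episodes and the weights $w_k$ are both chosen adaptively from the past, so the conversion of expected into realised visits must be carried out with a martingale concentration inequality rather than a fixed-sample Chernoff bound, and the bookkeeping has to keep the increment range ($\le H$) and the threshold ($6N$) in the regime where the assumed lower bound on $m$ forces concentration. A secondary, more mechanical difficulty is the deterministic capacity estimate: one must handle the granularity of the phase-based updates (so that visits enter $n$ only at phase boundaries) and the factor-two fluctuations of $w_k$ that can briefly push a pair back into a category, which is why the band argument rather than a naive ``each pair is counted once'' argument is needed.
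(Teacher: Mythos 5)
Your proposal follows essentially the same route as the paper's proof: a deterministic capacity bound on the total number of observations a category $(\kappa,\iota)$ can absorb (the paper's Lemma on observations of $X_{\cdot,\kappa,\iota}$, obtained from exactly the band argument $m w_\iota \kappa \le n_k(s,a) \le 4 m w_\iota \kappa$ you describe), combined with a Bernstein-type martingale concentration bound (the paper uses a result of Chung and Lu on a centered, continued sequence $\bar\nu_i$ with increments bounded by $H$ and variance proxy $\le H B_0$) showing that more than $6N$ over-full episodes would force the realised observation count above that capacity, and finally a union bound over the $E_{\max}$ categories with $w_\iota \ge w_{\min}$ feeding into the stated condition on $m$. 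The approach and all key ideas match; no gaps.
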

\vspace{-4mm}
\begin{proof}[Proof Sketch]
    We first bound the total number of times a fixed pair $(s,a)$ can be
    observed while being in a particular category $X_{k, \kappa, \iota}$ in all
    phases $k$ for $1 \leq \kappa < \numS$.  We then show that
    for a particular $(\kappa,\iota)$, the number of episodes where $|X_{k,
    \kappa, \iota}| > \kappa$ is bounded with high probability, as the value of
    $\iota$ implies a minimum probability of observing each $(s,a)$ pair in
    $X_{k, \kappa, \iota}$ in an episode. Since the observations are not
    independent we use martingale concentration results to 
    show the statement
    for a fixed $(\kappa,\iota)$. The desired result follows with the union bound over all relevant
    $\kappa$ and $\iota$.  
\end{proof}

The next lemma states that in episodes where the condition $\forall \kappa, \iota: \,
|X_{k, \kappa, \iota}| \leq \kappa$ is satisfied and the true MDP is in the
confidence set, the expected optimistic policy value is close to the true value. 
This lemma is the technically most involved part of the proof.
\begin{lem}[Bound mismatch in total reward]
    Assume $M \in \mathcal M_k$. If $| X_{k, \kappa, \iota}| \leq \kappa$ for
    all $(\kappa, \iota)$ and $0 < \epsilon \leq 1$  
    and 
    $
        m \geq 512 \frac{\maxNumSucc H^2}{\epsilon^2}(\log_2 \log_2 H)^2 \log_2^2 \left(\frac{8 H^2 \numS^2}{\epsilon}\right)  \ln \frac{6}{\delta_1}.
    $
    Then $| \tilde V^{\pi^k}_{1:H}(s_0) - V^{\pi^k}_{1:H}(s_0) | \leq \epsilon$.
    \label{lem:balanced_eps_good}
\end{lem}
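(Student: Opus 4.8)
The plan is to bound the value mismatch through a telescoping decomposition and then control each one-step term using the confidence sets together with the variance Bellman equation. Since both value functions use the same policy $\pi^k$ and the same known rewards, applying the Bellman equations for $V^{\pi^k}$ and $\tilde V^{\pi^k}$ and unrolling the recursion gives
\[
\tilde V^{\pi^k}_{1:H}(s_0) - V^{\pi^k}_{1:H}(s_0) = \sum_{t=1}^{H} P^{\pi^k}_{1:t-1}\left(\tilde P^{\pi^k}_t - P^{\pi^k}_t\right)\tilde V^{\pi^k}_{t+1:H}(s_0),
\]
so that, with $a=\pi^k_t(s)$, the mismatch is at most $\sum_{t=1}^H\sum_{s}\prob(s_t=s)\,|(\tilde P_t-P_t)\tilde V_{t+1:H}(s)|$. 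First I would bound each inner term. Because $M\in\mathcal M_k$ (Lemma~\ref{lem:mdp_capture}), both $p(\cdot\mid s,a)$ and $\tilde p_t(\cdot\mid s,a)$ lie in the confidence set around $\hat p(\cdot\mid s,a)$, so the Bernstein condition \eqref{eqn:conf_set_bern} controls $|\tilde p_t(s'\mid s,a)-p(s'\mid s,a)|$ for every successor. Centering the successor values at their true-model mean (legitimate since the probability differences sum to zero) and invoking Lemma~\ref{lem:sigma_bound_generic} then yields a bound of the form $|(\tilde P_t-P_t)\tilde V_{t+1:H}(s)| \lesssim \sqrt{\maxNumSucc L / n(s,a)}\,\sigma_{t+1:H}(s) + \maxNumSucc L H / n(s,a)$, where $L=\ln(6/\delta_1)$ and $\sigma_{t+1:H}(s)^2 = \Var_{s'\sim p(\cdot\mid s,a)}[\tilde V^{\pi^k}_{t+1:H}(s')]$ is the local variance of the optimistic value under the true transitions; the extra condition \eqref{eqn:conf_set_var} is what keeps this dependence on $\maxNumSucc$ linear rather than quadratic.

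Next I would organize the sum over $(s,a)$ by the knownness/importance categories $X_{k,\kappa,\iota}$. The balanced hypothesis $|X_{k,\kappa,\iota}|\le\kappa$ has the crucial consequence that $X_{k,0,\iota}=\emptyset$, so every active pair has $\kappa\ge1$ and hence $n(s,a)\ge\kappa\, m\, w_k(s,a)$, while the inactive set $\bar X_k$ carries zero weight under $\pi^k$ and drops out. The lower-order terms are then immediate: summing $\maxNumSucc L H/n(s,a)$ against the weights gives at most $\frac{\maxNumSucc L H}{n(s,a)}w_k(s,a)\le\frac{\maxNumSucc L H}{\kappa m}$ per pair, and the balanced bound of $\kappa$ pairs per category collapses this to $\maxNumSucc L H/m$ times the logarithmically many categories. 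For the leading variance terms I would apply Cauchy--Schwarz three times: once over $t$ (using $\sum_t\prob(s_t=s,a_t=a)=w_k(s,a)$, which cancels the $w_k$ hidden in $n\ge\kappa m w_k$), once over the at most $\kappa$ pairs in each category (cancelling $\kappa$), and once over the categories. This reduces the leading contribution to $\sqrt{\maxNumSucc L\, N_{\mathrm{cat}}/m}$ times $\big(\sum_t P^{\pi^k}_{1:t-1}\sigma^2_{t+1:H}(s_0)\big)^{1/2}$, where $N_{\mathrm{cat}}$ counts the $(\kappa,\iota)$-categories.

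The final ingredient is the variance Bellman equation of Lemma~\ref{lem:varV_bellman}, which bounds $\sum_t P^{\pi^k}_{1:t-1}\sigma^2_{t+1:H}(s_0)$ by $O(H^2)$ since the total return lies in $[0,H]$. Combining, the leading term is $O\big(\sqrt{\maxNumSucc L\, N_{\mathrm{cat}}/m}\,H\big)$, which together with the lower-order $O(\maxNumSucc L H N_{\mathrm{cat}}/m)$ is at most $\epsilon$ once $m$ is as large as the stated $512\,\maxNumSucc H^2\epsilon^{-2}(\log_2\log_2 H)^2\log_2^2(8H^2\numS^2/\epsilon)\ln(6/\delta_1)$; the $\log$-factors in $m$ are exactly the category count and the Cauchy--Schwarz overhead.

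I expect the main obstacle to lie in the variance accounting of the last two paragraphs. The quantity $\sigma^2_{t+1:H}$ that naturally appears is the variance of the \emph{optimistic} value $\tilde V^{\pi^k}$ under the \emph{true} transitions $p$, whereas the recursion of Lemma~\ref{lem:varV_bellman} governs the variance of a single consistent (time-dependent) MDP, namely $\tilde M$ under $\tilde p$. Reconciling the two requires the standard-deviation-matching condition \eqref{eqn:conf_set_var} to show that the empirical, optimistic, and true standard deviations are mutually close, so that $\sum_t P_{1:t-1}\sigma^2_{t+1:H}$ can still be charged to $O(H^2)$ up to lower-order corrections absorbed into the terms already controlled by $m$. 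Keeping every one of these corrections linear in $\maxNumSucc$ --- rather than letting a careless Cauchy--Schwarz or variance swap introduce a $\maxNumSucc^2$ --- is the delicate bookkeeping at the heart of the argument.
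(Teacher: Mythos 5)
Your overall architecture matches the paper's: the telescoping decomposition of Lemma~\ref{lem:V_diff}, the per-step bound of Lemma~\ref{lem:sigma_bound_generic} driven by the Bernstein and standard-deviation conditions, the split by knownness/importance categories with repeated Cauchy--Schwarz, and the variance Bellman equation of Lemma~\ref{lem:varV_bellman}. Two smaller corrections first: the inactive set $\bar X_k$ does \emph{not} carry zero weight and does not drop out --- each such pair has weight below $w_{\min}=\epsilon/(4H\numS)$, and summing over at most $\numS$ nonzero weights yields the $\epsilon/4$ budget term $\hat A_d$; this is precisely why $w_{\min}$ is set as it is. Also, the local variance appearing in Lemma~\ref{lem:sigma_bound_generic} is the variance of $\tilde V$ under the \emph{optimistic} transitions $\tilde p_i$ (the proof centers at $\bar V=\tilde P_i\tilde V_{i+1:j}(s)$ and ends with $\sum_{s'}\tilde p_i(s'|s,a)(\cdot)^2$), not under the true $p$.

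The genuine gap is in your last paragraph, at exactly the point you flag as the main obstacle. After Cauchy--Schwarz you are left with $\sum_{t} P_{1:t-1}\tilde\sigma^{2}_{t:H}(s_0)$: true-model propagation $P_{1:t-1}$ of optimistic-model local variances $\tilde\sigma^2$. Lemma~\ref{lem:varV_bellman} only gives $\sum_t \tilde P_{1:t-1}\tilde\sigma^2_{t:H}=\tilde\varV_{1:H}\le H^2$ for the \emph{consistent} pair $(\tilde P,\tilde\sigma^2)$. Your proposed fix --- using the condition in Equation~\eqref{eqn:conf_set_var} to argue the standard deviations are mutually close so the correction is lower order --- does not work: that condition matches standard deviations of the Bernoulli transition indicators $\sqrt{p(1-p)}$, not of the value function, and the raw discrepancy $\sum_t(P_{1:t-1}-\tilde P_{1:t-1})\tilde\sigma^2_{t:H}$ involves multi-step propagated model error against rewards of size up to $H^2$, so a direct estimate costs an extra factor of $H$ (equivalently, falling back on $\|\tilde\sigma^2\|_\infty\le H^2$ gives only the $O(H^3)$ bound $\hat C_d$ and hence $m\propto H^3/\epsilon^2$). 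The paper's resolution is a different idea entirely: since $\tilde\sigma^{(d),2}_{t:H}=r^{(2d+2)}_t$ by construction, the discrepancy \emph{is itself a value-function mismatch} $\Delta_{2d+2}$ for a derived MDP whose rewards are the local variances. This yields the recursion $\Delta_d\le \hat A_d+\hat B_d+\hat C_d'+\hat C''\sqrt{H^{2d+2}+\Delta_{2d+2}}$ of Lemma~\ref{lem:Delta_bounds}, which must be unrolled to depth $\gamma=\lceil\frac{\ln H}{2\ln 2}\rceil$; that unrolling --- not the category count --- is the source of the $(\log_2\log_2 H)^2$ factor in the required $m$. Without this recursive construction your argument cannot close the variance accounting at the stated value of $m$.
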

\vspace{-6mm}
\begin{proof}[Proof Sketch]
Using basic algebraic transformations, we show that $|p -
\tilde p| \leq \sqrt{\tilde p (1 - \tilde p) }  O\left(\sqrt{\frac{1}{n} \ln
\frac 1 {\delta_1}} \right) + O\left(\frac{1}{n} \ln \frac 1 {\delta_1} \right)$ for
each $\tilde p, p \in \mathcal P$ in the confidence set as defined in
Eq.~\ref{eqn:conf_set_bern}. Since we assume $M \in \mathcal M_k$, we
know that  $p(s'|s,a)$ and $\tilde p(s' | s,a)$ satisfy this bound with $n(s,a)$ for all
$s$,$a$ and $s'$.
We use that to bound the difference of the expected value function of the successor state in $M$ and $\tilde M$, proving that  
$    | (P_i - \tilde P_i) \tilde V_{i+1:j}(s)| \leq 
    O\left(\frac{\maxNumSucc H}{n(s, \pi(s))} \ln \frac 1 {\delta_1} \right) + O\left(\sqrt{\frac{C}{n(s, \pi(s))} \ln
\frac 1 {\delta_1}} \right)\tilde \sigma_{i:j}(s)$,
where the local variance of the value function is defined as
$    \sigma_{i:j}^2(s,a) \defeq  \mathbb E\left[(V^\pi_{i+1:j}(s_{i+1}) - P^\pi_i V^\pi_{i+1:j}(s_i))^2 | s_i = s, a_i = a \right] \quad \textrm{and} \quad
\sigma_{i:j}^2(s) \defeq  \sigma_{i:j}^2(s, \pi_i(s)).$
This bound then is applied to $| \tilde V_{1:H}(s_0) - V_{1:H}(s_0)| \leq
\sum_{t=0}^{H-1} P_{1:t} | (P_t - \tilde P_t) \tilde V_{t+1:H}(s)|$.  
The basic idea is to split the bound into a sum of two parts by 
partitioning of the $(s,a)$ space by knownness, e.g. 
that is $(s_t, a_t) \in \bar X_{\kappa, \iota}$ for
all $\kappa$ and $\iota$ and $(s_t, a_t) \in \bar X$.  Using the fact that $w(s_t, a_t)$ and $n(s_t, a_t)$ are tightly
coupled for each $(\kappa, \iota)$, we can bound the expression eventually
by $\epsilon$.  The final key ingredient in the remainder of the proof is to bound $\sum_{t=1}^{H} P_{1:t-1}
\sigma_{t:H}(s)^2$ by $O(H^2)$ instead of the trivial bound $O(H^3)$. To this
end, we show the lemma below.
\end{proof}

\begin{lem}
    The variance of the value function defined as
    $\varV^\pi_{i:j} (s) \defeq  \mathbb E \left[ \left(\sum_{t=i}^j r_t(s_t) - V_{i:j}^\pi(s_i)\right)^2 | s_i = s \right]$ 
    satisfies a Bellman equation
        $\varV_{i:j} = P_{i} \varV_{i+1:j} + \sigma^2_{i:j}$
    which gives
        $\varV_{i:j} = \sum_{t=i}^j P_{i:t-1} \sigma^2_{t:j}$.
    Since $0 \leq \varV_{1:H} \leq H^2 r_{\max}^2$, it follows that
        $0 \leq \sum_{t=1}^j P_{i:t-1} \sigma^2_{t:j}(s) \leq H^2 r_{\max}^2$ for all $s \in \statespace$.
    \label{lem:varV_bellman}
\end{lem}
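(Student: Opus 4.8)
The plan is to recognize $\varV^\pi_{i:j}(s)$ as the conditional variance of the return $G_{i:j} \defeq \sum_{t=i}^j r_t(s_t)$ given $s_i = s$, and then to read off the Bellman recursion from a single application of the law of total variance conditioned on the next state. First I would note that all randomness in $G_{i:j}$ comes from the state trajectory, since the rewards entering the return are the deterministic means $r_t(s_t)$; conditioning on $s_i = s$ therefore fixes the summand $r_i(s)$, and shifting by a constant leaves the variance unchanged, so $\varV_{i:j}(s) = \Var(G_{i+1:j}\mid s_i=s)$.

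Next I would condition on $s_{i+1}$ and apply the law of total variance,
\[
\Var(G_{i+1:j}\mid s_i=s) = \mathbb E\bigl[\Var(G_{i+1:j}\mid s_{i+1}) \mid s_i=s\bigr] + \Var\bigl(\mathbb E[G_{i+1:j}\mid s_{i+1}] \mid s_i=s\bigr),
\]
using the Markov property to replace conditioning on $(s_{i+1}, s_i)$ by conditioning on $s_{i+1}$ alone inside both inner moments. The inner conditional variance is $\varV_{i+1:j}(s_{i+1})$ and the inner conditional expectation is $V^\pi_{i+1:j}(s_{i+1})$ by definition, so the first term is exactly $P^\pi_i \varV_{i+1:j}(s)$. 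For the second term, since $\mathbb E[V^\pi_{i+1:j}(s_{i+1})\mid s_i=s] = P^\pi_i V^\pi_{i+1:j}(s)$, it coincides verbatim with the definition of $\sigma^2_{i:j}(s)$. This yields the claimed Bellman equation $\varV_{i:j} = P_i \varV_{i+1:j} + \sigma^2_{i:j}$.

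To obtain the closed form I would unroll this recursion with the boundary condition $\varV_{j+1:j} \equiv 0$ (an empty-horizon return has zero variance), writing $\varV_{i:j} = \sigma^2_{i:j} + P_i\bigl(\sigma^2_{i+1:j} + P_{i+1}\varV_{i+2:j}\bigr)$ and continuing; telescoping gives $\varV_{i:j} = \sum_{t=i}^j P_{i:t-1}\sigma^2_{t:j}$ with $P_{i:i-1}$ the identity. Finally, for the quantitative bound I would observe that $G_{1:H} \in [0, H r_{\max}]$, so $\varV_{1:H}(s) \le \mathbb E[G_{1:H}^2 \mid s_1 = s] \le H^2 r_{\max}^2$; combined with the closed form, with $\sigma^2_{t:j} \ge 0$, and with the fact that each $P_{i:t-1}$ preserves non-negativity, this gives $0 \le \sum_{t=1}^H P_{1:t-1}\sigma^2_{t:H}(s) \le H^2 r_{\max}^2$ for all $s$.

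I do not expect a serious obstacle here: the argument is Sobel's variance Bellman equation specialized to the finite-horizon, possibly time-dependent setting. The only step requiring care is the bookkeeping in the total-variance decomposition, namely correctly matching the variance-of-conditional-expectation term with the defined local variance $\sigma^2_{i:j}$, and confirming that the Markov property legitimately removes the dependence on $s_i$ inside the inner moments even when $p$ and $r$ vary with $t$ — which it does, since the one-step Markov property holds regardless of time-inhomogeneity.
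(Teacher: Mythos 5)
Your proof is correct and is essentially the paper's argument: the paper expands the square of $\bigl(\sum_{t=i+1}^{j} r_t(s_t) - V_{i+1:j}(s_{i+1})\bigr) + \bigl(V_{i+1:j}(s_{i+1}) + r_i(s_i) - V_{i:j}(s_i)\bigr)$ and kills the cross term via the tower property, which is exactly the computation that the law of total variance packages for you. The identification of the two resulting terms with $P_i\varV_{i+1:j}$ and $\sigma^2_{i:j}$, the unrolling with zero boundary condition, and the $H^2 r_{\max}^2$ bound all match the paper's proof.
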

\begin{proof}[Proof Sketch]
The proof works by induction and uses fact that the value function satisfies
the Bellman equation and the tower-property of conditional expectations. 
\end{proof}

\paragraph{Proof Sketch for Theorem~\ref{thm:upper_bound}.}
The proof of Theorem~\ref{thm:upper_bound} consists of the following major parts:
\vspace{-3mm}
\begin{enumerate}
    \item The true MDP is in the set of  MDPs $\mathcal M_k$ for all phases $k$
        with probability at least $1- \frac{\delta}{2}$ (Lemma~\ref{lem:mdp_capture}).
    \item The \evi algorithm computes a value function whose optimistic
     value is higher than the optimal reward in the true MDP with 
probability at least $1-\delta / 2$ (Lemma~\ref{lem:planning}).
    \item 
        The number of episodes with $|X_{k, \kappa, \iota}| > \kappa$
        for some $\kappa$ and $\iota$ are bounded with probability at least $1 - \delta/2$ by
        $\tilde O(\numsa m)$ if $m = \tilde \Omega\left( \frac{H^2}{\epsilon} \ln \frac{\numS}{\delta} \right)$ (Lemma~\ref{lem:unbalanced_episodes_bound}). 
    \item If $|X_{k, \kappa, \iota}| \leq \kappa$ for all $\kappa$, $\iota$, i.e.,
        relevant state-action pairs are sufficiently known and $m = \tilde \Omega\left(\frac{\maxNumSucc H^2}{\epsilon^2} \ln \frac{1}{\delta_1} \right)$, 
        then the optimistic value computed is $\epsilon$-close to the 
true MDP value. 
Together with part 2, we get that with high probability, the policy $\pi^k$ is $\epsilon$-optimal in this case. 
\item From parts~3 and ~4, with probability $1 - \delta$,
there are at most $\tilde O \left( \frac{C \numsa H^2}{\epsilon^2} \ln
\frac{1}{\delta}\right)$ episodes that are not $\epsilon$-optimal.  
\end{enumerate}

\vspace{-3mm}
\section{Lower PAC Bound}
\vspace{-3mm}

\begin{figure}
\tikzset{every state/.style={minimum size=0pt}}
\centering 
\scalebox{.65}{
\begin{tikzpicture}[node distance=1.1cm,>=stealth',bend angle=60,auto,decoration={
    markings,
    mark=at position 0.5 with {\arrow{>}}}]

    \node [state] (s0) {$0$};
    \node [state, right of=s0, node distance=3cm] (s2) {$2$};
    \node [state, above of=s2] (s1) {$1$};
    \node [below of=s2, node distance=.6cm] (dots) {$\vdots$};
    \node [state, below of=s2, node distance=1.4cm] (sn) {$n$};
    \node [state, right of=s1, node distance=3cm] (sp) {$+$};
    \node [state, right of=sn, node distance=3cm] (sm) {$-$};
    \node [right of=sp, node distance=2cm] (rp) {$r(+) = 1$};
    \node [right of=sm, node distance=2cm] (rm) {$r(-) = 0$};
    \node [above of=s0, node distance=1.2cm, anchor=west] (p0) {$p(i | 0, a) = \frac 1 n$};
    \node [below of=rp, node distance=.7cm, ] (pp) {$p(+ | i, a) = \frac 1 2 + \epsilon'_i(a)$};
    \node [above of=rm, node distance=.7cm, ] (pm) {$p(- | i, a) = \frac 1 2 - \epsilon'_i(a)$};
\path[->] 
(s0) edge  (s1)
(s0) edge  (s2)
(s0) edge  (sn)
(s1) edge  (sp)
(s1) edge  (sm)
(s2) edge  (sp)
(s2) edge  (sm)
(sn) edge  (sp)
(sn) edge   (sm)
(sp) [loop right, looseness=10] edge (sp)
(sm) [loop right, looseness=10] edge (sm)
;
\end{tikzpicture}}
\vspace{-3mm}
\caption{Class of a hard-to-learn finite horizon MDPs.
    The function $\epsilon'$ is defined as $\epsilon'(a_1) = \epsilon / 2$,
$\epsilon'(a_i^*) = \epsilon$ and otherwise $\epsilon'(a) = 0$ where $a_i^*$ is
an unknown action per state $i$ and $\epsilon$ is a
parameter.}
\vspace{-2mm}
    \label{fig:finite_hard_mdp}
\end{figure}
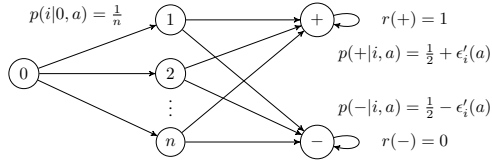

\begin{thm}
    There exist positive constants $c_1$, $c_2$, $\delta_0$, $\epsilon_0$ such
    that for every $\delta \in (0, \delta_0 )$ and $\epsilon
    \in (0, \epsilon_0)$ and for every algorithm A that satisfies a PAC
    guarantee for $(\epsilon, \delta)$ and outputs a deterministic policy,
    there is a fixed-horizon episodic MDP $M_{hard}$ with
    \begin{align}
        \mathbb E[n_A] \geq & \frac{c_1 (H-2)^2 (\numA-1) (\numS
        -3)}{\epsilon^2} \ln \left( \frac{c_2}{\delta + c_3 }\right) 
        =  \Omega\left( \frac{|\saspace| H^2}{\epsilon^2} \ln \left(\frac{c_2}{\delta + c_3} \right)\right)
        \label{eqn:lower_bound}
    \end{align}
    where $n_A$ is the number of episodes until the algorithm's policy is
    $(\epsilon, \delta)$-accurate. The constants can be set to $ \delta_0 =\frac{ e^{-4}}{80} \approx \frac{1}{5000}$, $\epsilon_0 = \frac{H-2}{640 e^4} \approx H/35000$,
$c_2 =4$ and $c_3 = e^{-4}/80$.
\label{thm:lower_bound}
\end{thm}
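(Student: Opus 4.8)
The plan is to instantiate the family of MDPs in Figure~\ref{fig:finite_hard_mdp} and reduce the PAC requirement to identifying the best arm in each of $n=\numS-3$ parallel bandits. In this construction every episode spends step $1$ in state $0$, is routed by the environment to one of the $n$ intermediate states uniformly at random, plays a single action there, and is then absorbed into $+$ or $-$ for the remaining $H-2$ steps, collecting reward $1$ per step in $+$ and $0$ in $-$. Hence each intermediate state $i$ is an independent Bernoulli bandit with $\numA-1$ candidate arms: a hidden arm $a_i^*$ has success probability $\tfrac12+\Delta$, a fixed reference action $a_1$ has $\tfrac12+\tfrac{\Delta}{2}$, and the remaining candidates have $\tfrac12$. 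I would treat $\Delta$ (the transition advantage $\epsilon'_i(a_i^*)$ in the figure) as a free construction parameter and set it to $\Delta=\Theta\!\left(\tfrac{\epsilon}{H-2}\right)$; the upper cutoff $\epsilon_0=\tfrac{H-2}{640e^4}$ is exactly what keeps $\Delta$ a valid, small probability offset. The family of hard instances is then indexed by the assignment $\sigma$ of the hidden best arm in each state, together with the all-$\tfrac12$ base instance.

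First I would carry out the value bookkeeping to convert near-optimality into identification. Since state $i$ is reached with probability $1/n$ and contributes $(H-2)\,p_i$ to the total reward when the policy plays an action of success probability $p_i$ there, one gets $R^*_M-R^{\pi}_M=\frac{H-2}{n}\sum_{i=1}^{n}\big((\tfrac12+\Delta)-p_i\big)$. A deterministic policy that does not play $a_i^*$ in a state pays at least $\tfrac{(H-2)\Delta}{2n}$ there, the smallest penalty arising against the safe reference $a_1$. With $\Delta=\Theta(\epsilon/(H-2))$ chosen so that sacrificing half the states already costs more than $\epsilon$, any $\epsilon$-optimal deterministic policy must correctly identify $a_i^*$ in at least a constant fraction of the $n$ states. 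The role of $a_1$ here is to pin the per-state sacrifice at exactly $\Delta/2$ and thereby make the reduction to best-arm identification clean.

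Next I would establish the per-bandit information-theoretic lower bound and aggregate. For a single fixed state, comparing the base measure (all candidates at $\tfrac12$) against the $\numA-1$ alternatives (one candidate raised to $\tfrac12+\Delta$) via a change-of-measure / likelihood-ratio argument, together with the Bernoulli estimate $\mathrm{KL}(\tfrac12\,\|\,\tfrac12+\Delta)=O(\Delta^2)$, shows that selecting the special arm with the required probability demands $\Omega\!\big(\tfrac{\numA-1}{\Delta^2}\ln\tfrac{1}{\delta+c_3}\big)$ visits to that state. Because the environment deals visits to the $n$ states uniformly (one visit per episode) and a constant fraction of the bandits must be solved, I would sum the per-state bounds under a uniformly random $\sigma$ by linearity of expectation, so that $\mathbb E[n_A]$, being the total number of visits, is $\Omega\!\big(n\cdot\tfrac{\numA-1}{\Delta^2}\ln\tfrac{1}{\delta+c_3}\big)$; substituting $\Delta=\Theta(\epsilon/(H-2))$ and $n=\numS-3$ produces the stated $\frac{(H-2)^2(\numA-1)(\numS-3)}{\epsilon^2}\ln\frac{c_2}{\delta+c_3}$.

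The hard part will be this last aggregation step rather than any individual bandit estimate. The algorithm only has to be $\epsilon$-optimal with probability $1-\delta$ \emph{overall}, so it may misidentify $a_i^*$ in a few states and cannot steer how visits are split among bandits (the uniform split is forced by the environment). I would control this by bounding the expected number of misidentified states under the uniform prior on $\sigma$ and arguing that keeping that expectation small enough for $\epsilon$-optimality with probability $1-\delta$ forces the average per-state visit count to the claimed order. The fact that the per-state error probability cannot be pushed below a fixed constant without extra samples is precisely what yields the additive $c_3$ and the constant $c_2$ inside $\ln\frac{c_2}{\delta+c_3}$, and propagating the constants through the Bernoulli KL and the associated Pinsker-type bounds is what determines the admissible $\delta_0$ and $\epsilon_0$.
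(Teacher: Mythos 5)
Your construction, value bookkeeping, and per-bandit step all match the paper's proof: the same MDP family from Figure~\ref{fig:finite_hard_mdp}, the same conversion of $\epsilon$-optimality into ``identify $a_i^*$ in at least a $\phi$-fraction of the $n$ bandits,'' and a per-bandit bound of order $\frac{\numA-1}{\Delta^2}\ln\frac{1}{\delta_i}$ that is equivalent to the paper's invocation of Theorem~1 of Mannor and Tsitsiklis. The gap is in the aggregation step, which you correctly flag as the hard part but then dispose of too quickly. The per-bandit lower bound is a function of the \emph{per-bandit} failure probability $\delta_i$, not of the global $\delta$, and the algorithm is free to allocate these non-uniformly: it may solve some bandits to very high confidence and abandon others entirely, since only a $\phi$-fraction must succeed with overall probability $1-\delta$. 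Because the per-bandit cost $\ln(1/\delta_i)$ is nonlinear in $\delta_i$, and because the Mannor--Tsitsiklis bound simply does not apply once $\delta_i$ exceeds a threshold (so an abandoned bandit contributes \emph{zero} to the lower bound), you cannot ``sum the per-state bounds by linearity of expectation.'' The paper handles this by first using Markov's inequality to turn the high-probability requirement into the constraint $\sum_i \delta_i \leq n(1+\delta\phi-\phi)$ on the per-bandit failure probabilities, and then solving the minimization
\begin{equation}
\min_{\delta_1,\dots,\delta_n}\ \sum_{i=1}^n \ln\frac{1}{\delta_i}\,\indicator{\eta\delta_i \leq 1-\phi+\phi\delta}\quad\text{s.t.}\quad \sum_{i=1}^n\delta_i\leq n(1+\phi\delta-\phi),
\end{equation}
showing in Lemma~\ref{lem:delta_dis} that the minimum is attained at the uniform allocation $\delta_i = c = 1+\delta\phi-\phi$ \emph{provided} $\eta(1-\ln c)\leq 1$, which is why $\eta=1/10$ and the specific constants $\delta_0,\epsilon_0,c_2,c_3$ appear. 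Your proposed substitute --- bounding the expected number of misidentified states under a uniform prior on $\sigma$ --- does not by itself rule out the ``abandon $k$ bandits, solve the rest harder'' strategy; you would still need an argument that trading $k$ zero-contribution bandits against smaller $\delta_i$ on the remaining $n-k$ never lowers the total, which is exactly the convexity analysis of Lemma~\ref{lem:delta_dis}. Without that piece the claimed bound does not follow.
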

The ranges of possible $\delta$ and $\epsilon$ are of similar order than in other state-of-the-art lower bounds for multi-armed bandits
\citep{Mannor2004} and discounted MDPs \citep{Strehl2009,
Lattimore2012}.
They are mostly determined by the bandit result by \citet{Mannor2004} we build on. Increasing the parameter limits $\delta_0$ and $\epsilon_0$ for bandits would immediately result in larger ranges in our lower bound,
but this was not the focus of our analysis.
\vspace{-2mm}
\begin{proof}[Proof Sketch]
    The basic idea is to show that the class of MDPs shown in
    Figure~\ref{fig:finite_hard_mdp} require at least a number of observed
    episodes of the order of Equation~\eqref{eqn:lower_bound}.  From the start
    state $0$, the agent ends up in states $1$ to $n$ with equal probability,
    independent of the action. From each such state $i$, the agent transitions
    to either a good state $+$ with reward $1$ or a bad state $-$ with reward
    $0$ and stays there for the rest of the episode. Therefore, each state
    $i=1, \dots, n$ is essentially a multi-armed bandit with binary rewards of
    either $0$ or $H-2$. For each bandit, the probability of ending up in $+$
    or $-$ is equal except for the first action $a_1$ with $p(s_{t+1} = +| s_t
    = i, a_t = a_1) = 1/2 + \epsilon / 2$ and possibly an unknown optimal action
    $a_i^*$ (different for each state $i$) with $p(s_{t+1} = +| s_t = i, a_t =
    a_i^*) = 1/2 + \epsilon$. 

In the episodic fixed-horizon setting we
    are considering, taking a suboptimal action in one of the bandits does not
    necessarily yield a suboptimal episode. We have to consider the average
    over all bandits instead. In an $\epsilon$-optimal episode, the agent therefore 
    needs to follow a policy that would solve at least a certain portion of all $n$ multi-armed
    bandits with probability at least $1 - \delta$. We show that
    the best strategy for the agent to achieve this is to try to solve all bandits
    with equal probability. The number of samples required to do so then
    results in the lower bound in Equation~\eqref{eqn:lower_bound}. 
\end{proof}
\vspace{-3mm}
Similar MDPs that essentially solve multiple of such multi-armed bandits
    have been used to prove lower sample-complexity bounds for discounted MDPs \citep{Strehl2009,Lattimore2012}. However, the analysis in the
    infinite horizon case as well as for the sliding-window fixed-horizon
    optimality criterion considered by \citet{Kakade2003} is significantly
    simpler. For these criteria, every time step the agent follows a policy that
    is not $\epsilon$-optimal counts as a "mistake". Therefore, every time the
    agent does not pick the optimal arm in any of the multi-armed bandits
    counts as a mistake. This contrasts with our fixed-horizon setting where we must instead consider taking an average over all bandits. 

    \vspace*{-3mm}
\section{Related Work on Fixed-Horizon Sample Complexity Bounds}
\label{sec:literature}
\vspace*{-2mm}
We are not aware of any lower sample complexity bounds beyond multi-armed bandit results that directly apply to our setting. 
Our upper bound in Theorem~\ref{thm:upper_bound} improves upon existing results by at least a factor of $H$. We briefly review those existing results in the following.
\vspace{-3mm}
\paragraph{Timestep bounds.} \citet[Chapter~8]{Kakade2003} proves upper and
lower PAC bounds for a similar setting where the agent interacts indefinitely
with the environment but the interactions are divided in segments of equal
length and the agent is evaluated by the expected sum of rewards until the end
of each segment. The bound
states that there are not more than $\tilde O\left(\frac{\numS^2 \numA
H^6}{\epsilon^3} \ln \frac{1}{\delta}\right)$\footnote{For comparison we adapt existing bounds to our setting. While the original bound stated by \citet{Kakade2003} only has
    $H^3$, an additional $H^3$ comes in through $\epsilon^{-3}$ due to
different normalization of rewards.} 
time steps in which the agents acts $\epsilon$-suboptimal. 
\citet{Strehl2006}
improves the state-dependency of these bounds for their delayed Q-learning algorithm to $\tilde
O\left(\frac{\numS \numA H^5}{\epsilon^4} \ln \frac 1 \delta \right)$.  
However, in episodic MDP it is more natural to consider performance on the
entire episode since suboptimality near the end of the episode is no issue as long as
the total reward on the entire episode is sufficiently high.
\citet{Kolter2009a} use an interesting sliding-window criterion, but prove bounds for a Bayesian setting instead of PAC.
Timestep-based bounds can be applied to the episodic case by
augmenting the original statespace with a time-index per episode to allow
resets after $H$ steps. This adds $H$ dependencies
for each $\numS$ in the original bound which results in a horizon-dependency of at least $H^6$ of
these existing bounds. 
Translating the regret bounds of \texttt{UCRL2} in Corollary~3 by \citet{Jaksch2010} yields
a PAC-bound on the number of episodes of at least $\tilde O\left(\frac{\numS^2 \numA H^3}{\epsilon^2} \ln \frac 1
\delta \right)$ even if one ignores the reset after $H$ time steps.
Timestep-based lower PAC-bounds cannot be applied
directly to the episodic reward criterion.
\vspace{-3mm}
\paragraph{Episode bounds.}
Similar to us, \citet{Fiechter1994} uses the value of
initial states as optimality-criterion, but defines the value w.r.t. the $\gamma$-discounted infinite horizon. His results
of order $\tilde O\left(\frac{\numS^2 \numA
H^7}{\epsilon^2} \ln \frac{1}{\delta}\right)$ episodes of length $\tilde O(1 /
(1 - \gamma)) \approx \tilde O(H)$
are therefore not directly applicable to our setting. 
\citet{AuerOrtner2005} investigate the same setting as we and propose
a UCB-type algorithm that has no-regret, which translates into a basic PAC bound of order $\tilde
O\left(\frac{\numS^{10} \numA H^7}{\epsilon^3} \ln \frac{1}{\delta} \right)$
episodes. We improve on this bound substantially in terms of its dependency on
$H$, $\numS$ and $\epsilon$.
\citet{Reveliotis2007} also consider the episodic undiscounted fixed-horizon
setting and present an efficient algorithm in cases where the transition graph is acyclic 
and the agent knows for each state a policy that
visits this state with a known minimum probability $q$. These assumptions are quite limiting and rarely hold in practice and their bound
of order $\tilde O\left( \frac{\numS \numA H^4 }{\epsilon^2 q} \ln
\frac{1}{\delta} \right)$ explicitly depends on $1/q$.

\vspace{-3mm}
\section{Conclusion}
\vspace{-3mm}
We have shown upper and lower bounds on the sample complexity of episodic
fixed-horizon RL that are tight up to log-factors in the
time horizon $H$, the accuracy $\epsilon$, the number of actions $\numA$ and up
to an additive constant in the failure probability $\delta$. These bounds
improve upon existing results by a factor of at least $H$.  One might hope to
reduce the dependency of the upper bound on $\numS$ to be linear by an analysis
similar to \texttt{Mormax} \citep{Szita2010} for discounted MDPs which has sample
complexity linear in $\numS$ at the penalty of additional dependencies on $H$. 
Our proposed \algname algorithm that achieves our PAC bound can be
applied to directly to a wide range of fixed-horizon episodic MDPs with known
rewards and does not require additional structure such as sparse or acyclic
state transitions assumed in previous work. The empirical evaluation of \algname is an interesting direction for future work.
\vspace{-2mm}
\paragraph{Acknowledgments:} We thank Tor Lattimore for the helpful suggestions and
comments. We are also grateful to Shiau Hong Lim and Ian Osband for discovering small bugs in previous versions of this paper. This work was supported by an NSF CAREER award and the ONR Young
Investigator Program.
\bibliographystyle{unsrtnat-nourl}
\bibliography{mendeley}

\newpage 

\appendix
\counterwithin{lem}{section}
\appendixpage
\startcontents[sections]
\printcontents[sections]{l}{1}{\setcounter{tocdepth}{2}}
\newpage
\section{Fixed-Horizon Extended Value Iteration}
\label{sec:fhevi}

We want to find a policy $\pi^k$ and optimistic ${\tilde M}_k \in
\mathcal M_k$ which have the highest total reward $R^{\pi^k}_{{\tilde M}_k} = \max_{\pi, M' \in \mathcal M_k} R^{\pi}_{M'}$.
Note that $\pi^k$ is an optimal policy for $M_k$ but not necessarily for $M$.
To facilitate planning, we relax this problem and instead compute a policy and optimistic MDP with $R^{\pi^k}_{{\tilde M}_k} = \max_{\pi, M' \in \mathcal M'_k} R^{\pi}_{M'}$
with
\begin{align}
    \mathcal M'_k \defeq \big\{ & \tilde{M} \in \mathcal M_{\textrm{nonst.}} \, : \, \forall (s,a) \in \saspace, t =1\dots H, s' \in \succS{s,a}\\
        &{\tilde p}_{t}(s'|s,a) \in \operatorname{conv}(\confset{${\hat p}(s' |s,a), n(s, a)$}) \big\}.
        \label{eqn:approx_mdp_set}
\end{align}
We only require the transition probabilities to be in the convex hull of the confidence sets instead of the confidence sets. Since this is a relaxation, we have $\mathcal M_k \subseteq \mathcal M'_k$.
We can find such a policy by dynamic programming similar to
extended value iteration \citep{Strehl2008,AuerOrtner2005}. The optimal Q-function can be
computed as 
$\tilde Q_{H:H}^*(s,a) =  r_H(s)$
and for $i=H-1, \dots,2,  1$ as
\begin{align}
    \tilde Q_{i:H}^*(s,a) = & r_i(s) + \max_{\tilde p_i \in \mathcal P_{s,a}}\left\{\sum_{s' \in \succS{s, a}} \tilde p_i \max_{b \in \actionspace} \tilde Q_{i+1:H}^*(s', b) \right\}
\end{align}
The feasible set is defined as $\mathcal P_{s,a} \defeq \{ p \in
[0,1]^{\numSucc{s,a}} \, | \,   \| p \|_1 = 1,  \forall s' \in \succS{s,a}: \, p(s') \in \operatorname{conv}(\confset(\hat
p(s' | s,a), n(s,a))) \}$. 
The optimal policy $\pi^k_t(s)$ at time $t$ is then simply the maximizer of
the inner $\max$ operator and the transition probability $\tilde p_t(\cdot
| s,a)$ is the maximizer of the outer maximum. The inner $\max$ can be solved
efficiently by enumeration and the outer maximum similar to extended value
iteration \citep{Strehl2008}. 
The basic idea is to put as much
probability mass as possible to successor states with highest value. See the following algorithm for the implementation details.
\begin{function}[H]
    $\tilde Q^*_{H:H}(s, a) = r_H(s) \quad \forall s,a \in \saspace$ \tcp*[r]{$O(\numS \numA)$}
    \For(\tcp*[f]{$O(H \numS \log \numS + H \numS \numA \maxNumSucc))$}){$t=H-1$ \KwTo $1$} 
    {
        $\pi_{t+1}(s) := \argmax_{a \in \actionspace} \tilde Q^*_{t+1:H}(s, a) \quad \forall s \in \statespace$
        \tcp*[r]{$O(\numS \numA)$}
        sort states $s^{(1)}, \dots s^{(\numS)}$ such that \\
        $\qquad\tilde Q^*_{t+1:H}(s^{(i)}, \pi_{t+1}(s^{(i)})) \geq \tilde Q^*_{t+1:H}(s^{(i+1)}, \pi_{t+1}(s^{(i+1)}))$
        \tcp*[r]{$O(\numS \log \numS)$}
        \For(\tcp*[f]{$O(\numS \numA \maxNumSucc)$}){$s,a \in \saspace$}{
            $\tilde p_t(s' | s,a) :=  \min \confset(\hat p(s' | s,a), n(s,a)) \quad \forall s' \in \succS{s,a}$
            \tcp*[r]{$O(\maxNumSucc)$}
            $\Delta := 1 - \sum_{s' \in \succS{s,a}} \tilde p_t(s' | s,a)$
            \tcp*[r]{$O(\maxNumSucc)$}
            $i := 1$ 
            \tcp*[r]{$O(1)$}
            \While(\tcp*[f]{$O(\maxNumSucc)$}){$\Delta > 0$}{
                $s' := s^{(i)}$\;
                $\Delta' := \min\{ \Delta,  \max \confset(\hat p(s' | s,a), n(s,a)) - \tilde p_t(s' | s,a)\}$\;
                $\tilde p_t(s' | s,a) := \tilde p_t(s' | s,a) + \Delta'$\;
                $\Delta := \Delta - \Delta'; i := i + 1$\;
            }
            $\tilde Q^*_{t:H}(s,a) = \sum_{s' \in \succS{s,a}} \tilde p_t(s' | s, a) \tilde Q_{t+1:H}^*(s', \pi_{t+1}(s'))$
            \tcp*[r]{$O(\maxNumSucc)$}
        }
    }
    $\pi_{1}(s) := \argmax_{a \in \actionspace} \tilde Q^*_{1:H}(s, a) \quad \forall s \in \statespace$ 
    \tcp*{$O(\numS \numA$}
    \KwRet{MDP with transition probabilities $\tilde p_t$, optimal policy $\pi$}
    \caption{FixedHorizonEVI ($\mathcal M$)}
\label{alg:evi}
\end{function}

Note that due to
the nonlinear constraint in Equation~\eqref{eqn:conf_set_var},
$\confset(\hat p(s' | s,a), n(s,a))$ may be the union of two disjoint intervals
instead of one interval. Still, $\min$- and $\max$-operations on the confidence
sets can be computed readily in constant time. Therefore, the transition
probabilities $\tilde p_t( \cdot | s, a)$ for a single time step $t$ and
state-action pair $s,a$ can be computed in $O(\numS \numA \maxNumSucc)$ given sorted
states. Sorting the states takes $O(\numS \log \numS)$ which results in $O(H
\numS \log \numS + H \numS \numA \maxNumSucc)$ runtime complexity of \evi (see comments
in Function~\ref{alg:evi}).
The Algorithm requires $O(H \numS \numA \maxNumSucc)$ additional space besides the
storage requirements of the input MDP $\mathcal M$ as the transition
probabilities $\tilde p_t$ are returned by the algorithm. If those are not
required and only the optimal policy is of interest, the additional space can
be reduced to $O(\numS \numA)$. 

\begin{lem}[Validity of optimistic planning]
    \evi{$\mathcal M_k$} returns $\tilde M, \pi^k = \argmax_{M \in \mathcal M'_k, \pi} R^\pi_M$. Since $\mathcal M_k \subseteq \mathcal M'_k$, it also holds that $R^{\pi^k}_{\tilde M} \geq \max_{M \in \mathcal M_k, \pi} R^\pi_M$.
    \label{lem:planning}
\end{lem}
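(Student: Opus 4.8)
The plan is to reduce the optimistic planning problem to solving a single finite-horizon MDP by backward dynamic programming. The structural observation that makes this work is that the constraints defining $\mathcal M'_k$ decouple completely: a member of $\mathcal M'_k$ is specified by choosing, \emph{independently} for every time step $t$ and every pair $(s,a) \in \saspace$, a distribution $\tilde p_t(\cdot\,|\,s,a) \in \mathcal P_{s,a}$, where $\mathcal P_{s,a}$ is the probability simplex on $\succS{s,a}$ intersected with the box $\prod_{s' \in \succS{s,a}} \operatorname{conv}(\mathcal P(s'))$ and $\mathcal P(s')$ denotes the confidence set returned for $\hat p(s'\,|\,s,a)$ and $n(s,a)$. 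Since there is no coupling across time steps nor across state--action pairs, and the reward function is fixed and known, the joint optimum $\max_{M' \in \mathcal M'_k, \pi} R^\pi_{M'}$ is exactly the optimal value of an extended finite-horizon MDP in which the decision at each step is the pair (policy action, transition distribution). Such a problem is solved optimally by value iteration, which I will show coincides with the EVI recursion.

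First I would formalize this by induction. Define $\tilde Q^*_{i:H}(s,a)$ to be the maximum, over all nonstationary kernels $\tilde p_i, \dots, \tilde p_{H-1}$ feasible in the corresponding sets $\mathcal P_{\cdot,\cdot}$ and all policy choices at times $i+1, \dots, H$, of the expected reward accumulated from time $i$ when the state at time $i$ is $s$ and the action is $a$. The base case $\tilde Q^*_{H:H}(s,a) = r_H(s)$ is immediate since no transition remains. For the inductive step I would write the reward-to-go as $r_i(s)$ plus the expectation over $s' \sim \tilde p_i(\cdot\,|\,s,a)$ of the optimal downstream value, which by the induction hypothesis equals $\max_{b \in \actionspace} \tilde Q^*_{i+1:H}(s',b)$; because the choice of $\tilde p_i(\cdot\,|\,s,a)$ affects only this single summand and is unconstrained relative to every other decision, the outer maximization splits off and yields precisely the stated EVI recursion. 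The returned policy, which selects $\argmax_{b \in \actionspace} \tilde Q^*_{t+1:H}(s,b)$, is then optimal and the returned kernel $\tilde p_t$ attains the maximum, so $(\tilde M, \pi^k)$ achieves $\max_{M' \in \mathcal M'_k, \pi} R^\pi_{M'}$.

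The step needing the most care, and the main obstacle, is verifying that the greedy routine inside Function~\ref{alg:evi} actually solves the inner linear program $\max_{\tilde p_i \in \mathcal P_{s,a}} \sum_{s' \in \succS{s,a}} \tilde p_i(s')\, q(s')$ with $q(s') \defeq \max_{b \in \actionspace} \tilde Q^*_{i+1:H}(s',b)$. I would first note that although the nonlinear condition \eqref{eqn:conf_set_var} can make a confidence set the union of two disjoint intervals, its convex hull is a single interval $[\ell_{s'}, u_{s'}]$ with $\ell_{s'} = \min \mathcal P(s')$ and $u_{s'} = \max \mathcal P(s')$, so $\mathcal P_{s,a}$ is a simplex-constrained box and the $\min$/$\max$ operations are well defined and computable in constant time. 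The objective is linear over a polytope, so an optimum is attained; I would establish optimality of the greedy fill by a standard exchange argument: the procedure initializes each coordinate to $\ell_{s'}$, then pours the residual mass $\Delta = 1 - \sum_{s'} \ell_{s'}$ into coordinates in order of decreasing $q(s')$ up to their caps $u_{s'}$, and any feasible distribution that placed less mass on a higher-value successor (while that successor is below its cap and some lower-value successor is above its floor) can be strictly improved, or at least not worsened, by shifting mass upward. Feasibility of the fill is guaranteed because $\hat p(\cdot\,|\,s,a)$ lies in every confidence set and sums to one, which forces $\sum_{s'} \ell_{s'} \le 1 \le \sum_{s'} u_{s'}$, so the residual $\Delta \ge 0$ can always be absorbed within the caps.

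Finally I would combine the two pieces: the dynamic program computes the correct optimal value function and the greedy inner step realizes each Bellman backup, so \evi{$\mathcal M_k$} returns $\tilde M, \pi^k = \argmax_{M' \in \mathcal M'_k, \pi} R^\pi_{M'}$. The second claim is then immediate from monotonicity of the maximum under set inclusion: since $\mathcal M_k \subseteq \mathcal M'_k$, we have $R^{\pi^k}_{\tilde M} = \max_{M' \in \mathcal M'_k, \pi} R^\pi_{M'} \ge \max_{M \in \mathcal M_k, \pi} R^\pi_M$.
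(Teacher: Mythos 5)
Your proof is correct and takes essentially the same route as the paper's (very brief) proof sketch: backward induction over time steps, exploiting the fact that the confidence-set constraints decouple across $(s,a,t)$, together with the observation that the greedy mass-filling step solves the inner linear program over the simplex-constrained box $\mathcal P_{s,a}$. You merely supply the details the paper leaves implicit, in particular the exchange argument for the greedy fill and the feasibility check $\sum_{s'} \ell_{s'} \leq 1 \leq \sum_{s'} u_{s'}$ via $\hat p(\cdot\,|\,s,a)$ belonging to every confidence set.
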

\begin{proof}[Proof Sketch]
    This result can be proved straight-forwardly by showing that $\pi^k$ is
    optimal in the last time step $H$ with highest possible reward and then
    subsequently for all previous time steps inductively. It follows
    directly from the definition of the algorithm in Function~\ref{alg:evi}
    that the returned MDP is in $\mathcal M_k'$.
\end{proof}

\section{Runtime- and Space-Complexity of \algname}
Sampling one episode and updating the respective $v$ variables has $O(H)$
runtime. Theorem~\ref{thm:upper_bound} states that after at most 
$\tilde
O\left( \frac{H^2 \maxNumSucc| \saspace|}{\epsilon^2} \ln
\frac{1}{\delta}\right)$ 
observed episodes, the current policy is
$\epsilon$-optimal with sufficiently high probability. This results in a total
runtime for sampling of 
$\tilde
O\left( \frac{H^3 \maxNumSucc| \saspace|}{\epsilon^2} \ln
\frac{1}{\delta}\right)$. 

Each update of the policy involves updating the $n$ variables and $\mathcal M_k$ which takes runtime $O(\maxNumSucc)$ and a call of \evi with runtime cost $O(H \numS \numA \maxNumSucc + H \numS \log \numS)$. 
From Lemma~\ref{lem:num_total_updates} below, we know that the policy can be updated at most $\maxUpdates$ times which a gives total runtime for policy updates of 
\begin{align}
    O(\maxUpdates H \numS (\numA \maxNumSucc + \log \numS)) 
    =& O \left(H \numS^2 \numA (\numA \maxNumSucc + \log \numS) \log \frac{\numS^2 H^2}{\epsilon}\right)\\
    =& \tilde O \left(H \numS^2 \numA^2 \maxNumSucc \log \frac{1}{ \epsilon} \right).
\end{align}
The total runtime of \algname before the policy is $\epsilon$-optimal with probability at least $1- \delta$ is therefore
\begin{align}
    \tilde O \left(\frac{H^3 \numS^2 \numA^2 \maxNumSucc}{\epsilon^2}  \ln \frac{1}{ \delta} \right).
\end{align}

The space complexity of \algname is dominated by the requirement to store statistics for each possible transition which gives $O(\numS \numA \maxNumSucc)$ complexity.

\section{Detailed Proofs for the Upper PAC Bound}

\subsection{Bound on the Number of Policy Changes of \algname}

\begin{lem}
    The total number of updates is bounded by $\maxUpdates = |\mathcal S \times
    \mathcal A| \log_2 \frac{| \mathcal S | H}{w_{\min}}$.
    \label{lem:num_total_updates}
\end{lem}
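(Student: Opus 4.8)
The plan is to bound the number of updates a single fixed pair $(s,a)$ can undergo, and then sum over all $\numsa$ pairs. First I would isolate the dynamics of the counter $n(s,a)$ across the phases in which $(s,a)$ is the pair that triggers the until-condition. The counter starts at $n(s,a) = 0$, and an update for $(s,a)$ occurs only when $v(s,a) \geq \max\{ m w_{\min}, n(s,a)\}$ together with $n(s,a) < \numS m H$; on such an update the algorithm sets $n(s,a) \defeq n(s,a) + v(s,a)$ and resets $v(s,a)$ to $0$.

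The key observation is a geometric-growth argument on $n(s,a)$. On the \emph{first} update for $(s,a)$ we have $n(s,a) = 0$, so the until-condition forces $v(s,a) \geq \max\{m w_{\min}, 0\} = m w_{\min}$, and hence $n(s,a)$ jumps to at least $m w_{\min}$. On every \emph{subsequent} update the condition forces $v(s,a) \geq n(s,a)$, so $n(s,a)$ at least doubles, i.e. the new value is $n(s,a) + v(s,a) \geq 2 n(s,a)$. Writing $n_i$ for the value of the counter after the $i$-th update of this pair (with $n_0 = 0$), these two facts give $n_i \geq 2^{i-1} m w_{\min}$ for all $i \geq 1$.

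Next I would use the stopping rule $n(s,a) < \numS m H$ to cap the number of doublings. Since an update $i$ can only fire when the pre-update value satisfies $n_{i-1} < \numS m H$, the index $I$ of the last update for $(s,a)$ obeys $2^{I-2} m w_{\min} \leq n_{I-1} < \numS m H$, whence $2^{I-2} < \numS H / w_{\min}$ and $I \leq \log_2 \frac{\numS H}{w_{\min}} + O(1)$. Therefore each pair is updated at most $\log_2 \frac{\numS H}{w_{\min}}$ times. Summing this per-pair bound over all $(s,a) \in \saspace$ yields $\maxUpdates = \numsa \log_2 \frac{\numS H}{w_{\min}}$, as claimed.

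I expect no serious obstacle here, as the argument is elementary. The only care required is to correctly separate the first update (which lifts the counter from $0$ to at least $m w_{\min}$) from the later doubling updates, and to track the off-by-one in the exponent so that the geometric chain telescopes to exactly $\log_2 \frac{\numS H}{w_{\min}}$ factors between the starting scale $m w_{\min}$ and the threshold $\numS m H$.
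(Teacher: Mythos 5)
Your proof is correct and follows essentially the same route as the paper's: the first update lifts $n(s,a)$ to at least $m w_{\min}$, each subsequent update at least doubles it, and the cap $n(s,a) < \numS m H$ limits the number of doublings to $\log_2 \frac{\numS H}{w_{\min}}$ per pair, summed over $\numsa$ pairs. The small additive slack you note in the exponent (the first update not being a doubling) is glossed over in the paper's own proof in exactly the same way, so there is no substantive difference.
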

\begin{proof}
    First note that $n(s,a)$ is never never decreasing and no updates happen
    once $n(s,a) \geq \numS m H$ for all $(s,a)$.  In each update, the $n(s,a)$
    of exactly one $(s,a)$ pair increases by $\max\{ m w_{\min}, n(s,a)\}$. For a
    single $(s,a)$ pair, such updates can happen only $\log_2(\numS m H) -
    \log_2(m w_{\min})$ times. Hence, there are at most $| \statespace \times
    \mathcal A| \log_2 \frac{\numS m H}{w_{\min} m}$ updates in total.
\end{proof}

\subsection{Proof of Lemma~\ref{lem:mdp_capture} -- Capturing the true MDP}
\begin{proof}
    For a single $(s,a)$ pair, $s' \in \succS{s,a}$ and $k$, we can treat the event
    that $s'$ is the successor state of $s$ when chosing action $a$ as a Bernoulli
    random variable with probability $p(s' | s,a)$. Using Hoeffding's
    inequality,\footnote{While the considered random variables are strictly
    speaking not necessarily independent, they can be treated as such for the
    concentration inequalities applied here. See Appendix~A of \citet{Strehl2008}
    for details.} we then realize that
    \[
        |p(s' | s, a) - \hat p(s' | s,a) | \leq \sqrt{\frac{\ln(6 / \delta_1)}{2n}}
    \]
    and by Bernstein's inequality
    \begin{align}
        |p(s' | s, a) - \hat p(s' | s,a) | \leq \sqrt{\frac{2 p(s' | s,a) (1 - p(s' | s,a)) \ln(6 / \delta_1)}{n}} + \frac{1}{3n} \ln(6 / \delta_1)
    \end{align} 
    with probability at least $1 - \delta_1 / 3$ respectively.
    Using both inequalities of Theorem~10 by \citet{Maurer2009}\footnote{The empirical variance denoted by $V_n(\mathbf X)$ by \citet{Maurer2009} is $\tilde p(s' | s,a) (1 - \tilde p(s' | s,a))$ in our case and $\mathbb E V_n$ is the true variance which amounts to $p(s' | s,a) (1 - p(s' | s,a))$ for us.}, we have
    \begin{align}
        |\sqrt{p(s' | s,a) (1 - p(s' | s,a))} - \sqrt{\hat p(s' | s,a) (1 - \hat p(s' | s,a))}|  \leq \sqrt{\frac{2 \ln(6 / \delta_1)}{n-1}}
        \label{eqn:maurer_std_bound}
    \end{align}
    for $n > 1$ with probability at least $1 - \delta_1 / 3$. All three inequalities hold with probability $1 - \delta_1$ by the union bound.
    Applying Inequality~\eqref{eqn:maurer_std_bound} to Bernstein's inequality, we obtain
    \begin{align}
        |p(s' | s, a) - \hat p(s' | s,a) | 
        \leq & \sqrt{\frac{2 p(s' | s,a) (1 - p(s' | s,a)) \ln(6 / \delta_1)}{n}} + \frac{1}{3n} \ln(6 / \delta_1)\\
        \leq & \left( \sqrt{\hat p(s' | s,a) (1 - \hat p(s' | s,a))} +  \sqrt{\frac{2 \ln(6 / \delta_1)}{n-1}} \right)\sqrt{ \frac{2  \ln(6 / \delta_1)}{n}} + \frac{1}{3n} \ln(6 / \delta_1)\\
        \leq & \sqrt{ \frac{2 \hat p(s' | s,a) (1 - \hat p(s' | s,a)) \ln(6 / \delta_1)}{n}} + \frac{7}{3(n-1)} \ln(6 / \delta_1).
    \end{align}
    By Lemma~\ref{lem:num_total_updates}, there are at most $\maxUpdates$ updates
    and so there are at most $\maxUpdates$ different $k$ to consider.  Since in
    each update, only a single $(s,a)$ pair with at most $\maxNumSucc$ successor states is updated, for all $k$ and
    $(s,a)$, there are only $\maxUpdates \maxNumSucc$ different ${\hat p}(s' | s,a)$ to consider. Applying the union bound, we
    get that
    $M \notin \mathcal M_k$ for any $k$ with probability at most
    $\maxUpdates \maxNumSucc \delta_1$.
    By setting $\delta_1 = \frac{\delta}{ 2 \maxNumSucc \maxUpdates}$ we get the desired result.
\end{proof}

\subsection{Bounding the number of episodes with $\kappa > |X_{k, \kappa, \iota}|$ for some $\kappa, \iota$}
\label{sec:proof_unbalanced_eps_bound}
Before presenting the proof of Lemma~\ref{lem:unbalanced_episodes_bound} which bounds the total number of episodes where there is a $\kappa$ and $\iota$ such that $\kappa > |X_{k, \kappa, \iota}|$, we establish a bound for each individual $\kappa$ and $\iota$ in the following two additional lemmas.
\begin{lem}[Bound on observations of $X_{\cdot, \kappa, \iota}$]
    The total number of observations of $(s,a) \in X_{k, \kappa, \iota}$ where $\kappa \in [1, \numS -1]$ and $\iota > 0$ over
    all phases $k$ is at most $3 | \saspace|  m w_\iota \kappa$. The variable $w_\iota$ is the smallest possible weight of a $(s,a)$-pair that has importance $\iota$.
    \label{lem:ki_obs_bound}
\end{lem}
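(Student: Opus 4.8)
The plan is to fix a single pair $(s,a)\in\saspace$ together with a category $(\kappa,\iota)$ with $\kappa\in[1,\numS-1]$ and $\iota>0$, bound the observations of $(s,a)$ accrued while it sits in $X_{k,\kappa,\iota}$ by an interval-of-counts argument, and then sum the per-pair bound over the at most $|\saspace|$ pairs. The key observation is that membership in $X_{k,\kappa,\iota}$ pins the current count $n_k(s,a)$ to a narrow window. Indeed, knownness $\kappa_k(s,a)=\kappa$ means $\kappa\le n_k(s,a)/(m\,w_k(s,a))<2\kappa$, i.e.\ $\kappa m\,w_k(s,a)\le n_k(s,a)<2\kappa m\,w_k(s,a)$, while importance $\iota_k(s,a)=\iota$ confines the weight to within a factor of two of $w_\iota$, namely $w_\iota<w_k(s,a)\le 2w_\iota=\iota w_{\min}$. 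Combining the two gives $\kappa m\,w_\iota< n_k(s,a)<4\kappa m\,w_\iota$, an interval of width $3\kappa m\,w_\iota$.

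Next I would exploit that $n(s,a)$ is monotonically non-decreasing over the whole run and changes only at the discrete model-update events for $(s,a)$; between two such updates it is constant while the executed policy --- and hence $w_k(s,a)$ and the category of $(s,a)$ --- may drift. Every visit to $(s,a)$ made during a phase with $(s,a)\in X_{k,\kappa,\iota}$ is recorded in the visit counter $v(s,a)$ at a phase-start count $n_k(s,a)$ lying in the window above, and is folded into $n(s,a)$ at the next update of $(s,a)$. Since $n(s,a)$ can never re-enter the window once it has exceeded $4\kappa m\,w_\iota$, the number of category-$(\kappa,\iota)$ visits of $(s,a)$ is at most the total increase of $n(s,a)$ accumulated over the contiguous block of updates during which $n_k(s,a)$ remains in the window. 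Telescoping this increase bounds it by the window width plus the size of the single update that finally carries the count out of the window.

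The final step is to control that last, possibly overshooting, update and then sum. By the until-condition of \algname an update adds $v(s,a)\ge\max\{m w_{\min},n_k(s,a)\}$ to the count, so the terminal increment is itself $O(\kappa m\,w_\iota)$; bookkeeping the window endpoints against this increment yields the per-pair bound $3m\,w_\iota\kappa$. Summing over the at most $|\saspace|$ pairs gives the claimed total of $3|\saspace|m\,w_\iota\kappa$. I expect the main obstacle to be exactly this last step: because visits accumulate continuously in $v(s,a)$ but are only transferred to $n(s,a)$ in discrete, potentially overshooting jumps, and because the weight (and thus the category boundary) shifts between updates, one must argue carefully that only counts inside the window contribute and that the terminal update does not inflate the constant beyond the stated value.
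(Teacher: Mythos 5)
Your proof takes essentially the same route as the paper's: both derive the window $m w_\iota \kappa \le n_k(s,a) \le 4 m w_\iota \kappa$ by combining $w_\iota \le w_k(s,a) < 2w_\iota$ (from $\iota_k(s,a)=\iota$) with $\kappa \le n_k(s,a)/(m w_k(s,a)) < 2\kappa$ (from $\kappa_k(s,a)=\kappa$), then use monotonicity of $n(s,a)$ to bound the per-pair observations by the window width $3 m w_\iota \kappa$ and multiply by $|\saspace|$. The $v$-versus-$n$ bookkeeping subtlety you flag at the end is real, but the paper's own proof glosses over it too, asserting the per-pair bound directly once the window is established.
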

\begin{proof}
    We denote the smallest possible weight for any $(s,a)$ pair such that $\iota(s,a) = \iota$ by
    $w_\iota \defeq \min\{ w(s,a) : \iota_k(s,a) = \iota \}$. Note that $w_{\iota+1} = 2 w_\iota$ for $\iota > 0$. 
    Consider any phase $k$ and fix $(s,a) \in X_{k, \kappa, \iota}$ with $\iota > 0$. 
    Since we assumed
    $\iota_k(s,a) = \iota > 0$, we have
    $w_\iota \leq w_k(s,a) <  2w_{\iota}$.
     From $\kappa_k(s,a) = \kappa$, it follows that 
     \begin{align}
        \frac{n_k(s,a)}{2 m w_k(s,a)} \leq \kappa \leq \frac{n_k(s,a)}{m w_k(s,a)}
    \end{align}
    which implies that
    \begin{align}
        m w_{\iota}\kappa  \leq m w_k(s,a) \kappa \leq n_k(s,a) \leq 2 m w_k(s,a) \kappa \leq 4 m w_{\iota} \kappa.
        \label{eqn:ki_n}
    \end{align}
    Hence, each state can only be observed $3 m w_\iota$ times while being in
    $\{(s,a) \in X_{k, \kappa, \iota}\,: \, k \in \mathbb N \}$.
\end{proof}

\begin{lem}
    The number of episodes $E_{\kappa, \iota}$ in phases with $|X_{k, \kappa,
    \iota}| > \kappa$ is bounded for every $\alpha \geq 3$ with high probability,
    \[
        P(E_{\kappa, \iota} > \alpha N) \leq \exp\left( - \frac{\beta w_\iota (\kappa + 1) N}{H} \right)
    \]
    where $N = |\saspace|m$ and $\beta =  \frac{\alpha (3 / \alpha - 1)^2}{7/3 - 1 / \alpha}$.
    \label{lem:bound_ki_epis}
\end{lem}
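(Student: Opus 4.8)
\noindent The plan is to set up a martingale over episodes and show that having more than $\alpha N$ \emph{bad} episodes (those occurring in phases with $|X_{k,\kappa,\iota}|>\kappa$) would force the number of observations of category-$(\kappa,\iota)$ pairs to overshoot the deterministic ceiling from Lemma~\ref{lem:ki_obs_bound}, an event of exponentially small probability. Index episodes by $e=1,2,\dots$, write $k(e)$ for the phase containing episode $e$, and let $\mathcal F_{e-1}$ denote the history up to the start of episode $e$. Since the policy $\pi^{k}$ and the counts $n_k$ are fixed throughout a phase, the bad-episode indicator $B_e \defeq \indicator{|X_{k(e),\kappa,\iota}| > \kappa}$ and the category membership are $\mathcal F_{e-1}$-measurable. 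Let $O_e = \sum_{t=1}^H \indicator{(s_t,a_t)\in X_{k(e),\kappa,\iota}}$ be the number of visits during episode $e$ to pairs currently in that category. Two facts drive the argument. First, on $\{B_e=1\}$ there are at least $\kappa+1$ such pairs, each of weight at least $w_\iota$, so the predictable mean satisfies $\mathbb E[O_e \mid \mathcal F_{e-1}] = \sum_{(s,a)\in X_{k(e),\kappa,\iota}} w_{k(e)}(s,a) \geq (\kappa+1) w_\iota$. Second, every such visit is an observation of a pair while it lies in category $(\kappa,\iota)$, so by Lemma~\ref{lem:ki_obs_bound} the total over all episodes is bounded deterministically, $\sum_e B_e O_e \leq 3\,\numsa m\, w_\iota \kappa = 3 N w_\iota \kappa$.

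\noindent First I would introduce the stopping time $\tau$ equal to the index of the $\lceil\alpha N\rceil$-th bad episode. On the event $\{E_{\kappa,\iota} > \alpha N\}$ we have $\tau<\infty$, and summing the conditional-mean lower bound over the first $\lceil\alpha N\rceil$ bad episodes gives $\sum_{e\leq\tau} B_e\,\mathbb E[O_e\mid\mathcal F_{e-1}] \geq \alpha N (\kappa+1) w_\iota$, while the observation ceiling still forces $\sum_{e\leq\tau} B_e O_e \leq 3 N w_\iota \kappa$. Because $\alpha\geq 3$ the forced mean strictly exceeds the realized-observation ceiling, since $\alpha(\kappa+1)-3\kappa=(\alpha-3)(\kappa+1)+3>0$, so the realized count sits far below its predictable mean. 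I would then apply a Bernstein/Freedman-type concentration inequality for the lower tail of the martingale $\sum_e B_e\big(O_e-\mathbb E[O_e\mid\mathcal F_{e-1}]\big)$, using the range bound $O_e\leq H$ and, crucially, the \emph{self-bounding} variance control $\mathbb E[O_e^2\mid\mathcal F_{e-1}]\leq H\,\mathbb E[O_e\mid\mathcal F_{e-1}]$ that follows from $O_e\leq H$. Plugging the forced mean $\alpha N(\kappa+1)w_\iota$, the ceiling $3N\kappa w_\iota$, and the range $H$ into the Bernstein exponent and simplifying yields the factor $\beta=\frac{\alpha(3/\alpha-1)^2}{7/3-1/\alpha}$, in which $3/\alpha$ is essentially the ratio of the observation ceiling to the forced conditional-mean sum, while the $7/3-1/\alpha$ term collects the variance and the additive range contributions in the denominator.

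\noindent The main obstacle is the interplay of the variance bookkeeping with the random stopping. Using only the crude increment bound $|O_e-\mathbb E[O_e\mid\mathcal F_{e-1}]|\leq H$ in an Azuma-style step would give a predictable quadratic variation of order $\alpha N H^2$ and hence an exponent scaling like $w_\iota^2/H^2$, far weaker than the claimed $w_\iota/H$. The self-bounding property is exactly what keeps the variance proportional to the mean, of order $N w_\iota H$, and recovers the correct $w_\iota/H$ dependence; its subtlety is that the variance is governed by the very mean-sum we are lower-bounding, so the inequality must be used in its self-bounded (exponential-supermartingale) form rather than by substituting a fixed variance bound into Freedman. A secondary care point is that the per-step visits within a single episode are not independent, which is precisely why a martingale concentration result is required instead of a Chernoff bound for independent summands. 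The stated probability bound then follows by evaluating the supermartingale at $\tau$ on the event $\{E_{\kappa,\iota}>\alpha N\}$.
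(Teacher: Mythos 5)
Your proposal is correct and follows essentially the same route as the paper: the conditional-mean lower bound $(\kappa+1)w_\iota$ on bad episodes, the deterministic observation ceiling $3Nw_\iota\kappa$ from Lemma~\ref{lem:ki_obs_bound}, and a Bernstein/Freedman-type lower-tail martingale bound with variance proportional to the mean, giving the same exponent $\beta$. The only difference is bookkeeping: where you use a stopping time and a self-bounding supermartingale form, the paper extends the sequence past $E_{\kappa,\iota}$ with deterministic values and normalizes each increment by its conditional mean (so the conditional mean is exactly $w_\iota(\kappa+1)$ and the conditional variance is deterministically at most $Hw_\iota(\kappa+1)$), which lets it plug a fixed variance bound directly into the Chung--Lu inequality.
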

\begin{proof}
    Let $\nu_i \defeq \sum_{t=1}^H \indicator{(s_t,a_t) \in X_{k, \kappa, \iota}}$ be
    the number of observations of $(s,a)$ in $X_{k,\kappa, \iota}$ in the
    $i$th epsiode with $X_{k, \kappa, \iota} > \kappa$. We have $i \in \{1, \dots
    E_{\kappa, \iota}\}$) and $k$ is the phase that episode $i$ belongs to. 

    Since $X_{k, \kappa, \iota} \geq \kappa + 1$ and all states in
    partition $(\kappa, \iota)$ have $w_k(s,a) \geq w_\iota$ , we get 
    \begin{equation}
        \mathbb E[ \nu_i |
    \nu_1, \dots \nu_{i-1}] \geq (\kappa + 1) w_\iota.
    \label{eqn:exp_nu_lb}
    \end{equation}
    Also $\Var[\nu_i | \nu_1 \dots \nu_{i-1}] \leq \mathbb E[\nu_i | \nu_1,
    \dots \nu_{i-1}] H$ as $\nu_i \in [0, H]$.

    To reason about $E_{\kappa, \iota}$, we define the continuation
    \[
        \nu_i^+ \defeq \begin{cases} \nu_i & \textrm{if }i \leq E_{\kappa, \iota}\\
        w_\iota (\kappa + 1) & \textrm{otherwise}
                            \end{cases}
    \]
    and the centered auxiliary sequence
    \[
        \bar \nu_i \defeq \frac{\nu_i^+ w_\iota (\kappa + 1)}{\mathbb E[\nu_i^+ | \nu_1^+, \dots \nu_{i-1}^+]}.
    \]
    By construction 
    \[
        \mathbb E[\bar \nu_i | \bar \nu_1, \dots \bar \nu_{i-1}] = 
        w_\iota (\kappa + 1) \frac{\mathbb E[\nu_i^+ | \bar \nu_1, \dots, \bar \nu_{i-1}]}{ \mathbb E[\nu_i^+ | \nu_1^+, \dots \nu_{i-1}^+]} = 
        w_\iota (\kappa + 1).
    \]
    By Lemma~\ref{lem:ki_obs_bound}, we have that $E_{\kappa, \iota} > \alpha N$ only if
    \[
        \sum_{i=1}^{\alpha N} \bar \nu_i \leq 3 N w_\iota \kappa \leq 3 N w_\iota (\kappa + 1).
    \]
    Define now the martingale 
    \begin{equation}
        B_i \defeq \mathbb E\left[ \sum_{j=1}^{\alpha N} \bar \nu_j | \bar \nu_1, \dots \bar \nu_i\right]
        = \sum_{j=1}^{i} \bar \nu_j + \sum_{j=i+1}^{\alpha N} \mathbb E[ \bar \nu_j | \bar \nu_1 \dots \bar \nu_{i}] 
    \end{equation}
which gives $B_0 = \alpha N w_\iota (\kappa + 1)$ and $B_{\alpha N} =
\sum_{i=1}^{\alpha N} \bar \nu_i$. Further, since $\nu_i^+ \in [0, H]$ and Equation~\eqref{eqn:exp_nu_lb}, we have
\begin{align}
    |B_{i+1} - B_i| &= | \bar \nu_i - \mathbb E[ \bar \nu_i | \bar \nu_1, \dots, \bar \nu_{i-1}]|
= \left| \frac{w_\iota (\kappa + 1) (\nu_i^+ - \mathbb E[ \nu_i^+ | \bar \nu_1, \dots \bar \nu_{i-1}])}
{\mathbb E[\nu_i^+ | \nu_1^+, \dots \nu_{i-1}^+]} \right| \\
& \leq \left| \nu_i^+ - \mathbb E[ \nu_i^+ | \bar \nu_1, \dots \bar \nu_{i-1}] \right| \leq H.
\end{align}
Using 
\begin{align}
    \sigma^2 \defeq& \sum_{i=1}^{\alpha N} \Var[B_i - B_{i-1} | B_1 - B_0, \dots B_{i-1} - B_{i-2}] \\
    =& \sum_{i=1}^{\alpha N} \Var[\bar \nu_i | \bar \nu_1, \dots \bar \nu_{i-1}] 
    \leq  \alpha N H w_\iota (\kappa + 1) = H B_0
\end{align}
we can apply Theorem~22 by \citet{Chung2006} and obtain
\begin{align}
    \prob(E_{\kappa, \iota} > \alpha N) 
    & \leq \prob \left( \sum_{i=1}^{\alpha N} \bar \nu_i \leq 3 N w_\iota (\kappa + 1) \right)\\
    & = \prob ( B_{\alpha N} - B_0 \leq 3 B_0 / \alpha -  B_0)
    = \prob ( B_{\alpha N} - B_0 \leq  - \left(1 - 3 / \alpha\right) B_0)\\
    & \leq  \exp \left(- \frac{(3 / \alpha - 1)^2 B_0^2}{2 \sigma^2 + H (1/3 - 1 / \alpha) B_0} \right)
\end{align}
for $\alpha \geq 3$. We can further simplify the bound to
\begin{align}
    \prob(E_{\kappa, \iota} > \alpha N) 
    & \leq  \exp \left(- \frac{(3 / \alpha - 1)^2 B_0^2}{2 H B_0 + H (1/3 - 1 / \alpha) B_0} \right)\\
    & \leq  \exp \left(- \frac{(3 / \alpha - 1)^2}{2 + (- 1/ \alpha + 1/3)} \frac{B_0}{H} \right)\\
    &  =  \exp \left(- \frac{\alpha (3 / \alpha - 1)^2}{7/3 - 1 / \alpha} \frac{N w_\iota (\kappa + 1)}{H} \right).
\end{align}
\end{proof}
We are now ready to prove Lemma~\ref{lem:unbalanced_episodes_bound} by combining the bound in the previous lemma for all $\kappa$ and $\iota$.
\begin{proof}[\textbf{Proof of Lemma~\ref{lem:unbalanced_episodes_bound}}]
    Since $w_k(s,a) \leq H$, we have that $\frac{w_k(s,a)}{w_{\min}} <
    \frac{H}{w_{\min}} $ and so $\iota_k(s,a) \leq  H / w_{\min} = 4 H^2 \numS
    / \epsilon$. In addition, $|X_{k, \kappa, \iota}| \leq \numS$ for all
$k, \kappa, \iota$ and so $|X_{k, \kappa, \iota}| > \kappa$ can only be true
    for $\kappa \leq \numS$.  Hence, only 
    \[
        E_{\max} = \log_2 \frac{H}{w_{\min}}  \log_2 \numS 
    \]
    possible values for $(\kappa, \iota)$ exists that can have $|X_{k, \kappa, \iota}| > \kappa$.
    Using the union bound over all $(\kappa, \iota)$ and Lemma~\ref{lem:bound_ki_epis}, we get that 
    \begin{align}
        \prob ( E \leq  \alpha N E_{\max}) \geq & \prob ( \max_{(\kappa, \iota)} E_{\kappa, \iota} \leq \alpha N) \geq
        1 - E_{\max} \exp\left( - \frac{\beta w_\iota (\kappa + 1) N}{H} \right)\\
        \geq & 1 - E_{\max} \exp\left( - \frac{\beta w_{\min} N}{H} \right)
    = 1 - E_{\max} \exp\left( - \frac{\beta w_{\min} m |\saspace|}{H} \right)\\
    = & 1 - E_{\max} \exp\left( - \frac{\beta \epsilon m |\saspace|}{4 H^2 \numS} \right)
    \end{align}
    Bounding the right hand-side by $1 - \delta / 2$ and solving for $m$ gives
    \begin{align}
        1 - E_{\max} \exp\left( - \frac{\beta \epsilon m |\saspace|}{4 H^2 \numS} \right) \geq & 1 - \delta / 2\quad \Leftrightarrow \quad
        m \geq  \frac{4 H^2 \numS}{ |\saspace| \beta \epsilon} \ln \frac{2 E_{\max}}{\delta}
    \end{align}
    Hence, the condition 
    \begin{align}
        m \geq  \frac{4 H^2}{ \beta \epsilon} \ln \frac{2 E_{\max}}{\delta}
    \end{align}
    is sufficient for the desired result to hold. By plugging in $\alpha = 6$
    and $\beta = \frac{\alpha (3 / \alpha - 1)^2}{7/3 - 1 / \alpha} =
    \frac{9}{13} \geq \frac{2}{3}$, we obtain the statement to show.
\end{proof}

\subsection{Bound on the value function difference for episodes with $\forall \kappa, \iota: \, |X_{k, \kappa, \iota}| \leq \kappa$ }

To prove Lemma~\ref{lem:balanced_eps_good}, it is sufficient to consider a
fixed phase $k$. To avoid notational clutter, we therefore omit the phase
indices $k$ in this section.

For the proof, we reason about a sequence of MDPs $M_d$ which have the same
transition probabilities but different reward functions $r^{(d)}$. For $d=0$,
the reward function is the original reward function $r$ of $M$, i.e. $r^{(0)}_t
= r_t$ for all $t=1 \dots H$. The following reward functions are then defined
recursively as $r^{(2d+2)}_t = \sigma_{t:H}^{(d), 2}$, where
$\sigma_{t:H}^{(d), 2}$ is the local variance of the value function w.r.t. the
rewards $r^{(d)}$. Note that for every $d$ and $t =1\dots H$ and $s \in
\statespace$, we have $r_t^{(d)}(s) \in [0, H^d]$. In complete analogy, we
define $\tilde M_d$ and $\hat M_d$.

We first prove a sequence of lemmas necessary for Lemma~\ref{lem:balanced_eps_good}.
\begin{lem}
    \begin{equation}
        V_{i,j} - \tilde V_{i,j} = \sum_{t=i}^{H-1} P_{i:t-1}(P_t - \tilde P_t) \tilde V_{t+1:j}
    \end{equation}
    
    \label{lem:V_diff}
\end{lem}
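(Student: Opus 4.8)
The plan is to derive the claimed identity directly from the Bellman equations satisfied by both value functions, introduce a recursion via a single add-and-subtract step, and then unroll that recursion by backward induction on $i$.

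First I would use that $M$ and $\tilde M$ share the same known reward function $r$ and differ only in their transition operators, so both value functions obey the Bellman recursion $V_{i:j} = P_i V_{i+1:j} + r_i$ and $\tilde V_{i:j} = \tilde P_i \tilde V_{i+1:j} + r_i$. Subtracting these and writing $D_i \defeq V_{i:j} - \tilde V_{i:j}$ cancels the common reward term and leaves $D_i = P_i V_{i+1:j} - \tilde P_i \tilde V_{i+1:j}$. Adding and subtracting $P_i \tilde V_{i+1:j}$ splits this into the one-step recursion
\[
D_i = (P_i - \tilde P_i)\tilde V_{i+1:j} + P_i D_{i+1},
\]
which isolates the transition mismatch at time $i$ from the propagated mismatch of later steps.

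Next I would unroll this recursion by backward induction, peeling off one factor of $P_i$ at each step and collapsing the accumulated operators with the semigroup property $P_i P_{i+1:t-1} = P_{i:t-1}$ together with the empty-product convention $P_{i:i-1} = I$. The base case is $D_i = 0$ for $i > j$, which holds because $V_{i:j}$ and $\tilde V_{i:j}$ are then empty sums of rewards (equivalently $D_j = V_{j:j} - \tilde V_{j:j} = r_j - r_j = 0$). Telescoping yields $D_i = \sum_{t=i}^{j-1} P_{i:t-1}(P_t - \tilde P_t)\tilde V_{t+1:j}$; since $\tilde V_{t+1:j} = 0$ whenever $t \geq j$, the upper summation limit can be extended up to $H-1$ without changing the value, giving exactly the claimed expression.

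There is no real obstacle here: the argument is a routine telescoping/induction, and the only things requiring care are the bookkeeping of the multi-step operator indices (in particular the convention $P_{i:i-1} = I$) and the vanishing of the boundary term at the final time step. The substantive content — that the per-step transition mismatch $(P_t - \tilde P_t)\tilde V_{t+1:j}$ is propagated forward through the \emph{true} dynamics $P_{i:t-1}$ — falls out directly from the add-and-subtract step, which is what will later let us apply the confidence-set bound on $(P_t - \tilde P_t)\tilde V_{t+1:j}$ term by term.
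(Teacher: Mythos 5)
Your proposal is correct and follows essentially the same route as the paper's proof: apply the Bellman recursions, add and subtract $P_i \tilde V_{i+1:j}$ to get $D_i = P_i D_{i+1} + (P_i - \tilde P_i)\tilde V_{i+1:j}$, use $V_{j:j} = r_j = \tilde V_{j:j}$ as the base case, and unroll. Your remark that the upper limit can be extended from $j-1$ to $H-1$ because the summands vanish for $t \geq j$ is a small point the paper's proof leaves implicit (it stops at $j-1$ even though the statement is written with $H-1$), but it does not change the argument.
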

\begin{proof}
    \begin{align}
        V_{i,j}(s) - \tilde V_{i,j}(s) = &
        r(s) + P_i V_{i+1:j}(s) - r(s) - \tilde P_i \tilde V_{i+1:j}(s) + P_i \tilde V_{i+1,j}(s) - P_i \tilde V_{i+1:j}(s)\\ 
        =&  
        P_i (V_{i+1:j} - \tilde V_{i+1:j}) + (P_i - \tilde P_i) \tilde V_{i+1:j}(s)
    \end{align}
    Since we have $V_{j:j}(s) = r(s) = \tilde V_{j:j}(s)$, we can recursively expand the first difference until $i=j$ and get 
    \begin{align}
        V_{i,j} - \tilde V_{i,j} = &
        \sum_{t=i}^{j-1} P_{i:t-1}(P_t - \tilde P_t) \tilde V_{t+1:j}
    \end{align}
\end{proof}

\begin{lem}
    Assume $p, \hat p, \tilde p \in [0,1]$ satisfy
    $p \in \mathcal P$ and $\tilde p \in \operatorname{conv}(\mathcal P)$ where
    \begin{align}
        \mathcal P \defeq \bigg\{ p' \in [0,1] :& |\hat p - p'| \leq  \sqrt{\frac{\ln(6 / \delta_1)}{2 n}}, \label{eqn:pcond1}\\
                                                & |\hat p - p'| \leq \sqrt{\frac{2 \hat p(1- \hat p)}{n}\ln(6 / \delta_1)} + \frac{7}{3(n-1)}\ln(6 / \delta_1),\label{eqn:pcond2}\\
                                            & \textrm{if } n > 1: \left|\sqrt{p' (1-p')} - \sqrt{\hat p (1 - \hat p)}\right| \leq \sqrt{\frac{2 \ln(6 / \delta_1)}{n-1}} \bigg\}.\label{eqn:pcond3}
    \end{align}
    Then
    \begin{align}
        | p - \tilde p | \leq & \sqrt{\frac{8 \tilde p (1 - \tilde p)}{n}\ln(6 / \delta_1)} + \frac{26}{3(n-1)} \ln(6 / \delta_1).
    \end{align}
    \label{lem:confset_pp_bound}
\end{lem}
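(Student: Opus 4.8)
The plan is to first determine which of the three defining conditions of $\mathcal P$ survive the passage from a point of $\mathcal P$ to a point of its convex hull, since this is the only structural difference between the hypotheses on $p$ and on $\tilde p$. Conditions \eqref{eqn:pcond1} and \eqref{eqn:pcond2} each cut out a closed interval centered at $\hat p$ (their right-hand sides do not depend on the free variable $p'$) and are therefore convex, whereas condition \eqref{eqn:pcond3}, which constrains $\sqrt{p'(1-p')}$, is in general non-convex and may carve out two disjoint intervals. Because $\mathcal P$ is contained in the convex set defined by \eqref{eqn:pcond1} and \eqref{eqn:pcond2} alone, so is $\operatorname{conv}(\mathcal P)$. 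Hence $\tilde p$ still satisfies the Bernstein bound \eqref{eqn:pcond2}, i.e. $|\hat p - \tilde p| \le \sqrt{2\hat p(1-\hat p)L/n} + \tfrac{7}{3(n-1)}L$ with $L \defeq \ln(6/\delta_1)$, even though it need not satisfy \eqref{eqn:pcond3}. Getting this asymmetry right is the conceptual starting point.

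Next I would route the estimate through $\hat p$. The triangle inequality gives $|p - \tilde p| \le |p - \hat p| + |\hat p - \tilde p|$, and applying \eqref{eqn:pcond2} to each summand — to $p$ because $p \in \mathcal P$, and to $\tilde p$ by the inheritance just argued — yields the intermediate bound $|p - \tilde p| \le 2\sqrt{2\hat p(1-\hat p)L/n} + \tfrac{14}{3(n-1)}L$, now expressed in the \emph{empirical} variance $\hat p(1-\hat p)$.

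The heart of the argument is to convert the empirical variance into the optimistic variance $\tilde p(1-\tilde p)$. For this I would use the elementary identity $\hat p(1-\hat p) - \tilde p(1-\tilde p) = (\hat p - \tilde p)(1 - \hat p - \tilde p)$, which (since $|1-\hat p-\tilde p|\le 1$) gives $\hat p(1-\hat p) \le \tilde p(1-\tilde p) + |\hat p - \tilde p|$, and then bound $|\hat p - \tilde p|$ once more by \eqref{eqn:pcond2}. Writing $S \defeq \sqrt{\hat p(1-\hat p)}$, this produces the self-referential inequality $S^2 \le \tilde p(1-\tilde p) + \sqrt{2L/n}\,S + \tfrac{7}{3(n-1)}L$, which is quadratic in $S$; solving it bounds $S$ by $\sqrt{\tilde p(1-\tilde p)}$ plus terms of order $\sqrt{L/n}$. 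Substituting back into the intermediate bound, the product $2\sqrt{2L/n}\cdot\sqrt{\tilde p(1-\tilde p)}$ collapses to the stated leading term $\sqrt{8\tilde p(1-\tilde p)L/n}$ — this is where the constant $8=(2\sqrt 2)^2$ comes from — while every remaining cross term is $O(L/n)$.

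I expect the main obstacle to be exactly this variance swap: the quantity $|\hat p - \tilde p|$ we want to bound reappears, through $\hat p(1-\hat p)$, inside its own bound, and the cleanest way I see to break the circularity is to read the resulting relation as a quadratic in $S$ and solve it rather than iterating. The rest is careful bookkeeping of the lower-order terms — repeated use of $\sqrt{x+y}\le\sqrt x+\sqrt y$, $\sqrt{ab}\le(a+b)/2$, and replacements of $n$ by $n-1$ — to collapse the several $O(L/n)$ contributions (the $2\cdot 2L/n$ term from the quadratic, the cross term between $\sqrt{L/n}$ and $\sqrt{L/(n-1)}$, and the $\tfrac{14}{3(n-1)}L$ from the triangle step) into the single additive term $\tfrac{26}{3(n-1)}L$. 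Notably, condition \eqref{eqn:pcond3} does not appear essential to the structure of this particular inequality — the Bernstein condition \eqref{eqn:pcond2}, which both $p$ and $\tilde p$ satisfy, already yields the correct order $\sqrt{\tilde p(1-\tilde p)L/n}+O(L/n)$ — although it may be used to sharpen the constant; its real role (controlling the dependence on $\maxNumSucc$) enters elsewhere in the analysis.
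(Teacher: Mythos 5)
Your overall architecture coincides with the paper's: route the estimate through $\hat p$ by the triangle inequality, apply the Bernstein condition to both legs (correctly noting that the first two conditions are convex and hence inherited by $\tilde p\in\operatorname{conv}(\mathcal P)$), and then swap the empirical variance for the optimistic one. The divergence --- and the gap --- is in the swap. The paper's key observation, which you explicitly dismiss, is that the standard-deviation condition is the conjunction of two one-sided inequalities, and the one that \emph{lower}-bounds $p'(1-p')$ is a superlevel set of the concave function $p'\mapsto p'(1-p')$, hence convex; intersecting it with the two intervals gives a convex set $\mathcal P_1\supseteq\operatorname{conv}(\mathcal P)$, so $\tilde p$ inherits $\sqrt{\hat p(1-\hat p)}\le\sqrt{\tilde p(1-\tilde p)}+\sqrt{2L/(n-1)}$ with $L=\ln(6/\delta_1)$. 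Substituting this into $2\sqrt{2\hat p(1-\hat p)L/n}+\tfrac{14}{3(n-1)}L$ produces the single cross term $\sqrt{2L/(n-1)}\cdot\sqrt{8L/n}\le 4L/(n-1)$, and $14/3+4=26/3$ exactly.

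Your substitute --- $\hat p(1-\hat p)\le\tilde p(1-\tilde p)+|\hat p-\tilde p|$ followed by solving a quadratic in $S=\sqrt{\hat p(1-\hat p)}$ --- is sound and yields the right order, but the bookkeeping does not close on the stated constant. From $S^2\le\tilde p(1-\tilde p)+\sqrt{2L/n}\,S+\tfrac{7}{3(n-1)}L$ one gets $S\le\sqrt{\tilde p(1-\tilde p)}+\sqrt{2L/n}+\sqrt{7L/(3(n-1))}$, and after multiplying by $2\sqrt{2L/n}$ the additive remainder is $\tfrac{4L}{n}+2L\sqrt{14/(3n(n-1))}+\tfrac{14}{3(n-1)}L\approx\tfrac{39}{3(n-1)}L$, which overshoots $\tfrac{26}{3(n-1)}L$. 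So as proposed you would prove the lemma only with a larger constant (which would then propagate into the constants of Lemma~\ref{lem:Delta_bounds} and the definition of $m$). Your closing remark that the third condition is inessential here is therefore wrong in the sense that matters: the paper uses precisely its convex half to land on $26/3$.
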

\begin{proof}
    We have $\mathcal P = \mathcal P_1 \cap \mathcal P_2$ with
    \begin{align}
        \mathcal P_1 = \bigg\{ p' \in [0,1] :& |\hat p - p'| \leq  \sqrt{\frac{\ln(6 / \delta_1)}{2 n}},\\
                                                & |\hat p - p'| \leq \sqrt{\frac{2 \hat p(1- \hat p)}{n}\ln(6 / \delta_1)} + \frac{7}{3(n-1)}\ln(6 / \delta_1),\\
                                            & \textrm{if } n > 1: \left(\max \left\{0, \sqrt{\hat p (1 - \hat p)} -  \sqrt{\frac{2 \ln(6 / \delta_1)}{n-1}}\right\}\right)^2 \leq  p' (1-p') \bigg\}.
    \end{align}
   and 
    \begin{align}
        \mathcal P_2 = \bigg\{ p' \in \reals \, : \textrm{if } n > 1: \sqrt{p' (1-p')} \leq  \sqrt{\hat p (1 - \hat p)} + \sqrt{\frac{2 \ln(6 / \delta_1)}{n-1}} \bigg\}.
    \end{align}
    Note that the last condition of $\mathcal P_1$ is equivalent to $ \sqrt{\hat p (1 - \hat p)}  \leq \sqrt{p' (1-p')} + \sqrt{\frac{2 \ln(6 / \delta_1)}{n-1}} $ as $p' \in [0, 1]$.
    As an intersection of a polytope and the superlevel set of a concave function $p' ( 1 - p')$, the set $\mathcal P_1$ is convex. 
    Hence $\operatorname{conv}(\mathcal P) = \operatorname{conv}(\mathcal P_1 \cap \mathcal P_2) \subseteq
    \operatorname{conv}(\mathcal P_1) = \mathcal P_1$. It therefore follows that $\tilde p \in \mathcal P_1$. We now bound
    \begin{align}
        | p - \tilde p | \leq & | p - \hat p | + | \hat p - \tilde p| 
        \leq  
        2\sqrt{\frac{2 \hat p (1 - \hat p)}{n} \ln(6/ \delta_1)} 
        + 2\frac{7}{3(n-1)} \ln(6 / \delta_1)\\
        = &
        \sqrt{ \hat p (1 - \hat p)}\sqrt{\frac{8 }{n} \ln(6/ \delta_1)} 
        + \frac{14}{3(n-1)} \ln(6 / \delta_1)\\
        \leq &
        \left(\sqrt{ \tilde p (1 - \tilde p)} + \sqrt{\frac{2 \ln(6 / \delta_1)}{n-1}} \right) \sqrt{\frac{8 }{n} \ln(6/ \delta_1)} 
        + \frac{14}{3(n-1)} \ln(6 / \delta_1)\\
        \leq &
        \sqrt{\frac{8 \tilde p (1 - \tilde p)}{n} \ln(6/ \delta_1)} 
        + \frac{26}{3(n-1)} \ln(6 / \delta_1)
    \end{align}
\end{proof}

\begin{lem}
    Assume
    \begin{align}
        |p(s' | s,a) - \tilde p_i(s' | s,a)| \leq c_1(s,a) + c_2(s,a) \sqrt{\tilde p_i(s' | s,a) (1 - \tilde p_i(s' | s,a))}
    \end{align}
    for $a = \pi_i(s)$ and all $s', s \in \statespace$. Then
    \begin{align}
        |(P_i - \tilde P_i) \tilde V_{i+1:j}(s)| \leq c_1(s,a) \numSucc{s,a} \|  \tilde V_{i+1:j} \|_\infty + c_2(s,a) \sqrt{\numSucc{s,a}} \tilde \sigma_{i:j}(s)
    \end{align}
    for any $(s,a) \in \saspace$ where $\succS{s,a}$ denotes the set of possible successor states of state $s$ and action $a$.
    \label{lem:sigma_bound_generic}
\end{lem}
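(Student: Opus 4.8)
The plan is to expand the operator difference as a single sum over successor states and to exploit that both $p(\cdot\,|\,s,a)$ and $\tilde p_i(\cdot\,|\,s,a)$ are probability distributions. Writing $a = \pi_i(s)$ and $\bar V \defeq \tilde P_i \tilde V_{i+1:j}(s) = \sum_{s' \in \succS{s,a}} \tilde p_i(s'|s,a)\, \tilde V_{i+1:j}(s')$, I would use $\sum_{s'}\left(p(s'|s,a) - \tilde p_i(s'|s,a)\right) = 0$ to recenter the value function, giving
\[
    (P_i - \tilde P_i)\tilde V_{i+1:j}(s) = \sum_{s' \in \succS{s,a}} \left(p(s'|s,a) - \tilde p_i(s'|s,a)\right)\left(\tilde V_{i+1:j}(s') - \bar V\right).
\]
This centering is the key step: it is what later lets the second term collapse to the local standard deviation $\tilde\sigma_{i:j}(s)$ rather than to the raw magnitude of the value function.

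Next I would apply the triangle inequality together with the hypothesis $|p - \tilde p_i| \leq c_1(s,a) + c_2(s,a)\sqrt{\tilde p_i(1-\tilde p_i)}$ and split the resulting bound into two sums. For the $c_1$-sum, since the rewards and hence $\tilde V_{i+1:j}$ are nonnegative and bounded by $\|\tilde V_{i+1:j}\|_\infty$, the mean $\bar V$ lies in the same range, so each centered term satisfies $|\tilde V_{i+1:j}(s') - \bar V| \leq \|\tilde V_{i+1:j}\|_\infty$; summing over the at most $\numSucc{s,a}$ successor states yields the first term $c_1(s,a)\,\numSucc{s,a}\,\|\tilde V_{i+1:j}\|_\infty$.

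For the $c_2$-sum I would first drop the factor $(1 - \tilde p_i)\leq 1$ to replace $\sqrt{\tilde p_i(1-\tilde p_i)}$ by $\sqrt{\tilde p_i}$, and then apply Cauchy--Schwarz across the $\numSucc{s,a}$ successor states to the vectors $\big(\sqrt{\tilde p_i(s'|s,a)}\,|\tilde V_{i+1:j}(s') - \bar V|\big)_{s'}$ and $(1)_{s'}$. This produces the factor $\sqrt{\numSucc{s,a}}$ together with $\sqrt{\sum_{s'} \tilde p_i(s'|s,a)\,(\tilde V_{i+1:j}(s') - \bar V)^2} = \tilde\sigma_{i:j}(s)$, which is exactly the definition of the local standard deviation under $\tilde M$. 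Combining the two sums gives the claimed inequality. I expect the only genuinely subtle point to be the recentering argument and lining up the Cauchy--Schwarz application with the variance definition; everything after that is routine manipulation.
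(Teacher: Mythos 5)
Your proposal is correct and follows essentially the same route as the paper's proof: the same recentering by the constant $\bar V = \tilde P_i \tilde V_{i+1:j}(s)$ (your observation that $\sum_{s'}(p - \tilde p_i) = 0$ is exactly why the paper may subtract this constant inside the operator difference), the same $\|\tilde V_{i+1:j}\|_\infty$ bound on the $c_1$-sum, and the same Cauchy--Schwarz step yielding $\sqrt{\numSucc{s,a}}\,\tilde\sigma_{i:j}(s)$. The only cosmetic difference is that you drop the factor $(1-\tilde p_i)\leq 1$ before applying Cauchy--Schwarz while the paper does so afterwards, which changes nothing.
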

\begin{proof}
    Let $s$ and $a= \pi_i(s)$ be fixed and define for this fixed $s$ the constant function
    $\bar V(s') = \tilde P_i \tilde V_{i+1:j}(s)$ \emph{[sic]} as the expected
    value function of the successor states of $s$. Note that $\bar V(s')$ is a
    constant function and so $\bar V = \tilde P_i \bar V = P_i \bar V$. 
    \begin{align}
        & |(P_i - \tilde P_i) \tilde V_{i+1:j}(s)| 
        = |(P_i - \tilde P_i) \tilde V_{i+1:j}(s) + \bar V(s) - \bar V(s)| \\
        =& |(P_i - \tilde P_i) (\tilde V_{i+1:j} - \bar V)(s)|\\
        \leq & \sum_{s' \in \succS{s,a}} |p(s' | s,a) - \tilde p_i(s' | s, a)| |\tilde V_{i+1:j}(s') - \bar V(s')|
        \label{eqn:sigma_bound_trang}\\
        \leq & \sum_{s' \in \succS{s,a}} \left(c_1(s,a) + c_2(s,a) \sqrt{ \tilde p_i(s' | s,a) (1 - \tilde p_i(s' | s,a))}\right) |\tilde V_{i+1:j}(s') - \bar V(s')|\\
        \leq & \numSucc{s,a} c_1(s,a) \| \tilde V_{i+1:j} \|_\infty + c_2(s,a) \sum_{s' \in \succS{s,a}} \sqrt{ \tilde p_i(s' | s,a) (1 - \tilde p_i(s' | s,a)) (\tilde V_{i+1:j}(s') - \bar V(s'))^2}\\
        \leq & \numSucc{s,a} c_1(s,a) \| \tilde V_{i+1:j} \|_\infty + 
        c_2(s,a)  \sqrt{ \numSucc{s,a} \sum_{s' \in \succS{s,a}}\tilde p_i(s' | s,a) (1 - \tilde p_i(s' | s,a)) (\tilde V_{i+1:j}(s') - \bar V(s'))^2}
        \label{eqn:sigma_bound_cs}
        \\
        \leq & \numSucc{s,a} c_1(s,a) \| \tilde V_{i+1:j} \|_\infty + 
        c_2(s,a)  \sqrt{ \numSucc{s,a}\sum_{s' \in \succS{s,a}}\tilde p_i(s' | s,a) (\tilde V_{i+1:j}(s') - \bar V(s'))^2}\\
        = & \numSucc{s,a} c_1(s,a) \| \tilde V_{i+1:j} \|_\infty + 
        c_2(s,a) \sqrt{\numSucc{s,a}} \tilde \sigma_{i:j}(s)
    \end{align}
    In Inequality~\eqref{eqn:sigma_bound_trang}, we wrote out the definition of
    $P_i$ and $\tilde P_i$ and applied the triangle inequality.  We then
    applied the assumed bound and bounded $|\tilde V_{i+1:j}(s') - \bar V(s')|$
    by $\| V_{i+1:j} \|_\infty$ as all value functions are nonnegative.  In
    Inequality~\eqref{eqn:sigma_bound_cs}, we applied the Cauchy-Schwarz
    inequality and subsequently used the fact that each term is the sum is
    nonnegative and that $(1 - \tilde p_i(s'| s,a)) \leq 1$. The final equality
    follows from the definition of $\tilde \sigma_{i:j}$.
\end{proof}
\subsubsection{Bounding the difference in value function}
\begin{lem}
    Assume $M \in \mathcal M_k$. If $| X_{\kappa, \iota}| \leq \kappa$ for
    all $(\kappa, \iota)$.
    Then
    \begin{equation}
        |V_{1:H}^{(d)}(s_0) - \tilde V_{1:H}^{(d)}(s_0) | =: \Delta_d \leq \hat A_d + \hat B_d + \min \{ \hat C_d, \hat C_d' + \hat C'' \sqrt{\Delta_{2d+2}}\}
    \end{equation}
    where
    \begin{equation}
        \hat A_d = \frac{\epsilon}{4} H^d, \qquad \hat B_d = \frac{52 H^{d+1} \numki \maxNumSucc}{3m} \ln \frac{6}{\delta_1},
    \end{equation}
    and
    \begin{equation}
\hat C_d' = \sqrt{ \maxNumSucc \numki \frac{8}{m} H^{2d+2}\ln\frac{6}{\delta_1} }
\qquad
\hat C_d = \hat C_d' \sqrt{H}, 
        \qquad
\hat C'' = \sqrt{ \maxNumSucc \numki \frac{8}{m}\ln\frac{6}{\delta_1}}.
    \end{equation}
    \label{lem:Delta_bounds}
\end{lem}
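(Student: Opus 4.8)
The plan is to convert the value gap into a sum of one-step transition mismatches and then control each via the confidence set. By Lemma~\ref{lem:V_diff} and the triangle inequality,
\[
\Delta_d \;\le\; \sum_{t=1}^{H-1} P_{1:t-1}\,\big|(P_t-\tilde P_t)\tilde V^{(d)}_{t+1:H}\big|(s_0).
\]
Since $M\in\mathcal M_k$, the true $p(s'|s,a)$ lies in $\mathcal P$ while the optimistic $\tilde p_t(s'|s,a)$ lies in $\operatorname{conv}(\mathcal P)$ (it is returned by fixed-horizon extended value iteration on the relaxed set $\mathcal M_k'$). Lemma~\ref{lem:confset_pp_bound} then gives, for $a=\pi_t(s)$ and every $s'$,
\[
|p(s'|s,a)-\tilde p_t(s'|s,a)| \le c_1(s,a) + c_2(s,a)\sqrt{\tilde p_t(s'|s,a)\,(1-\tilde p_t(s'|s,a))},
\]
with $c_1(s,a)=\tfrac{26}{3(n(s,a)-1)}\ln\tfrac{6}{\delta_1}$ and $c_2(s,a)=\sqrt{\tfrac{8}{n(s,a)}\ln\tfrac{6}{\delta_1}}$. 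This is exactly the hypothesis of Lemma~\ref{lem:sigma_bound_generic}; applying it with $\numSucc{s,a}\le\maxNumSucc$ and $\|\tilde V^{(d)}_{t+1:H}\|_\infty\le H^{d+1}$ (as $r^{(d)}\in[0,H^d]$ over at most $H$ steps) yields the per-step bound $|(P_t-\tilde P_t)\tilde V^{(d)}_{t+1:H}(s)|\le \maxNumSucc\,c_1(s,a)H^{d+1}+\sqrt{\maxNumSucc}\,c_2(s,a)\,\tilde\sigma^{(d)}_{t:H}(s)$.

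Next I split the outer sum by the knownness/importance partition, using $\sum_{t}P_{1:t-1}g(s_0)=\sum_{(s,a)}w(s,a)g(s,a)$ for $(s,a)$-functions. For the inactive pairs $\bar X$ (those of weight below $w_{\min}$) I discard the confidence bound and use only $|(P_t-\tilde P_t)\tilde V^{(d)}_{t+1:H}|\le H^{d+1}$; since their total weight is at most $\numS w_{\min}=\epsilon/(4H)$, this contributes at most $H^{d+1}\cdot\epsilon/(4H)=\hat A_d$. For the active pairs the assumption $|X_{\kappa,\iota}|\le\kappa$ forces $X_{0,\iota}=\emptyset$, so every active pair has $\kappa\ge1$ and hence $n(s,a)\ge \kappa\,m\,w(s,a)\ge m w_{\min}\ge 2$ (by the lower bound on $m$). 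The $c_1$-part then obeys $\sum_{(s,a)\in X_{\kappa,\iota}}\tfrac{w(s,a)}{n(s,a)-1}\le \tfrac{2}{m}\tfrac{|X_{\kappa,\iota}|}{\kappa}\le\tfrac2m$; summing over the $\numki$ categories and multiplying by $\tfrac{26}{3}\maxNumSucc H^{d+1}\ln\tfrac6{\delta_1}$ gives exactly $\hat B_d$.

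The two entries of the $\min$ come from two ways of bounding the remaining variance part $\sqrt{8\maxNumSucc\ln(6/\delta_1)}\sum_{t}\sum_s \prob(s_t=s)\tfrac{\tilde\sigma^{(d)}_{t:H}(s)}{\sqrt{n(s,\pi_t(s))}}$. The crude branch bounds $\tilde\sigma^{(d)}_{t:H}\le H^{d+1}$ pointwise and applies Cauchy--Schwarz with $\sum_{(s,a)}w(s,a)=H$ and $\sum_{(s,a)}\tfrac{w(s,a)}{n(s,a)}\le\tfrac{\numki}{m}$, giving $H^{d+1}\sqrt{H}\sqrt{\numki/m}$, i.e.\ $\hat C_d=\hat C_d'\sqrt H$. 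The tight branch instead applies Cauchy--Schwarz over the index $(t,s)$ against the visitation measure, factoring the sum into $\sqrt{\sum_{t,s}\prob(s_t=s)/n}\cdot\sqrt{\sum_{t,s}\prob(s_t=s)\tilde\sigma^{(d),2}_{t:H}(s)}$; the first factor is again $\le\sqrt{\numki/m}$, and for the second I use the recursively-defined reward $r^{(2d+2)}=\sigma^{(d),2}$ together with the variance Bellman equation of Lemma~\ref{lem:varV_bellman}: the true-transition expectation of the level-$(2d+2)$ variance rewards is $\varV^{(d)}_{1:H}(s_0)\le H^{2d+2}$, and replacing the true local variances by the optimistic ones costs at most the level-$(2d+2)$ value gap $\Delta_{2d+2}$, so the second factor is $\le\sqrt{H^{2d+2}+\Delta_{2d+2}}\le H^{d+1}+\sqrt{\Delta_{2d+2}}$. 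This produces $\hat C_d'+\hat C''\sqrt{\Delta_{2d+2}}$, and taking the smaller of the two branches completes the bound.

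The main obstacle is exactly this last step: controlling $\sum_t P_{1:t-1}\tilde\sigma^{(d),2}_{t:H}(s_0)$, a true-transition expectation of \emph{optimistic} local variances. The clean $O(H^{2d+2})$ bound from Lemma~\ref{lem:varV_bellman} applies to the true variances, so one must carefully quantify the discrepancy between the optimistic and true local variances and show it is absorbed by $\Delta_{2d+2}$; this is what closes the recursion across the levels $d\mapsto 2d+2$ and, crucially, saves a factor of $H$ relative to the crude branch. By comparison, the application of Lemmas~\ref{lem:confset_pp_bound} and \ref{lem:sigma_bound_generic}, the partition bookkeeping ($w(s,a)\in[w_\iota,2w_\iota)$, $n(s,a)\ge\kappa m w(s,a)$, $|X_{\kappa,\iota}|\le\kappa$), and the estimate of the inactive-set weight are routine.
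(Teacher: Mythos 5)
Your proposal follows the paper's proof essentially step for step: the telescoping via Lemma~\ref{lem:V_diff}, the per-transition mismatch bound from Lemmas~\ref{lem:confset_pp_bound} and~\ref{lem:sigma_bound_generic}, the knownness/importance bookkeeping yielding $\hat A_d$ and $\hat B_d$, and the two Cauchy--Schwarz branches for the variance term, with the tight branch closed via the variance Bellman equation of Lemma~\ref{lem:varV_bellman} and the recursion onto $\Delta_{2d+2}$. The only cosmetic difference is that you write the residual cross-term as $\sum_t P_{1:t-1}\bigl(\tilde\sigma^{(d),2}_{t:H}-\sigma^{(d),2}_{t:H}\bigr)$ whereas the paper uses $\sum_t (P_{1:t-1}-\tilde P_{1:t-1})\tilde\sigma^{(d),2}_{t:H}$; both are identified with $\Delta_{2d+2}$ at the same level of informality as the paper's own step.
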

\begin{proof}
    \begin{align}
        \Delta_d =& |V_{1:H}^{(d)}(s_0) - \tilde V_{1:H}^{(d)}(s_0) |
        = \left| \sum_{t=1}^{H-1}P_{1:t-1}(P_t - \tilde P_t) \tilde V^{(d)}_{t+1:H}(s_0) \right| \\
        \leq &  \sum_{t=1}^{H-1}P_{1:t-1}|(P_t - \tilde P_t) \tilde V^{(d)}_{t+1:H}|(s_0) \\
        = & \sum_{t=1}^{H-1}P_{1:t-1}  \left( \sum_{s,a \in \saspace} \indicator{s = \cdot, a = \pi_t(s)} |(P_t - \tilde P_t) \tilde V^{(d)}_{t+1:H}| \right) (s_0)\\
        = & \sum_{s,a \in \saspace}  \sum_{t=1}^{H-1}P_{1:t-1}  \left(  \indicator{s = \cdot, a = \pi_t(s)} |(P_t - \tilde P_t) \tilde V^{(d)}_{t+1:H}| \right)(s_0)\\
        = & \sum_{s,a \in \saspace}  \sum_{t=1}^{H-1}P_{1:t-1}  \left(  \indicator{s = \cdot, a = \pi_t(s)} |(P_t - \tilde P_t) \tilde V^{(d)}_{t+1:H}(s)| \right)(s_0)
    \end{align}
    The first equality follows from Lemma~\ref{lem:V_diff}, the second step
    from the fact that $V_{t+1:H} \geq 0$ and $P_{1:t-1}$ being non-expansive.
    In the third, we introduce an indicator function which does not change the
    value as we sum over  all $(s,a)$ pairs. The fourth step relies on the
    linearity of the $P_{i:j}$ operators. In the fifth step, we realize that
    $\indicator{s = \cdot, a = \pi_t(s)} |(P_t - \tilde P_t) \tilde
    V_{t+1:H}^{(d)}(\cdot)$ is a function that takes nonzero values only for input
    $s$. We can therefore replace the argument of the second term with $s$
    without changing the value. The term then becomes constant and by linearity
    of $P_{i:j}$, we can write

    \begin{align}
        & |V_{1:H}^{(d)}(s_0) - \tilde V_{1:H}^{(d)}(s_0)| = \Delta_d 
        \leq  \sum_{s,a \in \saspace}  \sum_{t=1}^{H-1} |(P_t - \tilde P_t) \tilde V^{(d)}_{t+1:H}(s)| (P_{1:t-1}  \indicator{s = \cdot, a = \pi_t(s)})(s_0)\\
        \leq & \sum_{s,a \notin X}  \sum_{t=1}^{H-1} \| \tilde V^{(d)}_{t+1:H} \|_\infty (P_{1:t-1}  \indicator{s = \cdot, a = \pi_t(s)})(s_0)\\
             & + \sum_{s,a \in X}  \sum_{t=1}^{H-1} |(P_t - \tilde P_t) \tilde V^{(d)}_{t+1:H}(s)| (P_{1:t-1}  \indicator{s = \cdot, a = \pi_t(s)})(s_0)\\
        \leq & \sum_{s,a \notin X}  \sum_{t=1}^{H-1} H^{d+1} (P_{1:t-1}  \indicator{s = \cdot, a = \pi_t(s)})(s_0)\\
             & + \sum_{s,a \in X}  \sum_{t=1}^{H-1} |(P_t - \tilde P_t) \tilde V^{(d)}_{t+1:H}(s)| (P_{1:t-1}  \indicator{s = \cdot, a = \pi_t(s)})(s_0)\\
        \leq & \sum_{s,a \notin X}  \sum_{t=1}^{H-1} H^{d+1} (P_{1:t-1}  \indicator{s = \cdot, a = \pi_t(s)})(s_0)\\
             & + \sum_{s,a \in X}  \sum_{t=1}^{H-1} \left|\numSucc{s, a} c_1(s,a) H^{d+1} + c_2(s,a) \sqrt{\numSucc{s, a}}\tilde \sigma^{(d)}_{t:H}(s,a) \right| (P_{1:t-1}  \indicator{s = \cdot, a = \pi_t(s)})(s_0)\\
    \leq & \sum_{s,a \notin X}  \sum_{t=1}^{H} H^{d+1} (P_{1:t-1}  \indicator{s = \cdot, a = \pi_t(s)})(s_0)\\
             & + \sum_{s,a \in X}  \sum_{t=1}^{H} \left|\numSucc{s, a} c_1(s,a) H^{d+1} \right| (P_{1:t-1}  \indicator{s = \cdot, a = \pi_t(s)})(s_0)\\
             & + \sum_{s,a \in X}  \sum_{t=1}^{H-1} \left|c_2(s,a) \sqrt{\numSucc{s, a}}\tilde \sigma^{(d)}_{t:H}(s,a) \right| (P_{1:t-1}  \indicator{s = \cdot, a = \pi_t(s)})(s_0)\\
        \leq & \sum_{s,a \notin X} H^{d+1} w(s,a) 
        + \sum_{s,a \in X} \numSucc{s, a} c_1(s,a) H^{d+1} w(s,a)\\
        & + \sum_{s,a \in X}  \sqrt{\numSucc{s, a}} c_2(s,a)\sum_{t=1}^{H-1}  \tilde \sigma^{(d)}_{t:H}(s,a) (P_{1:t-1}  \indicator{s = \cdot, a = \pi_t(s)})(s_0)\\
        \leq & \sum_{s,a \notin X} H^{d+1} w(s,a) 
        + \sum_{s,a \in X} \maxNumSucc c_1(s,a) H^{d+1} w(s,a)\\
        & + \sum_{s,a \in X}  \sqrt{\maxNumSucc} c_2(s,a)\sum_{t=1}^{H-1}  \tilde \sigma^{(d)}_{t:H}(s,a) (P_{1:t-1}  \indicator{s = \cdot, a = \pi_t(s)})(s_0)
    \end{align}
    In the second inequality, we split the sum over all $(s,a)$ pairs and used
    the fact that $P_t$ and $\tilde P_t$ are non-expansive, i.e., $|(P_t -
    \tilde P_t) \tilde V^{(d)}_{t+1:H}(s)| \leq \| V^{(d)}_{t+1:H} \|_\infty$.
    The next step follows from  $\| V^{(d)}_{t+1:H} \|_\infty \leq \|
    V^{(d)}_{1:H} \|_\infty \leq H^{d+1}$. We then apply
    Lemma~\ref{lem:sigma_bound_generic} and subsequently use that all terms are
    nonnegative and the definition of $w(s,a)$.  Eventually, we use that
    $\numSucc{s,a} \leq \maxNumSucc$ for all $s,a$.  Using the assumption that
    $M \in \mathcal M_k$ and $\tilde M \in \mathcal M'_k$ from
    Lemma~\ref{lem:planning}, we can apply Lemma~\ref{lem:confset_pp_bound} and
    get that
    \[
        c_2(s,a) = \sqrt{\frac{8}{n(s,a)} \ln \frac{6}{\delta_1}}  
    \quad
    \textrm{and}
    \quad
    c_1(s,a) = \frac{26}{3(n(s,a)-1)} \ln \frac{6}{\delta_1}.
    \]
    Hence, we can bound
    \[
        \Delta_d \leq A(s_0) + B(s_0) + C(s_0)
    \]
    as a sum of three terms which we will consider individually in the following.
    The first term is
    \[
        A(s_0) =  \sum_{s,a \notin X} H^{d+1} w(s,a) \leq w_{\min} \numS H^{d+1} 
        \leq \frac{\epsilon H^{d+1} \numS}{4 H \numS} = \frac{\epsilon}{4} H^{d} = \hat A_d
    \]
    as $w(s,a) \leq w_{\min}$ for all $s,a$ not in the active set
    and that the policy is deterministic, which implies that there are only
    $\numS$ nonzero $w$.  The next term is
    \begin{align}
        B(s_0) = & \maxNumSucc \sum_{s,a \in X} w(s,a) H^{d+1} \frac{26}{3 (n(s,a)-1)} \ln \frac{6}{\delta_1} \\
        = & H^{d+1} \maxNumSucc \ln \frac{6}{\delta_1} \sum_{\kappa, \iota} \sum_{s,a \in X_{\kappa, \iota}} w(s,a)  \frac{26}{3 (n(s,a)-1)}  \\
        \leq & H^{d+1} \frac{26\maxNumSucc}{3} \ln \frac{6}{\delta_1}  \sum_{\kappa, \iota}  \sum_{s,a \in X_{\kappa, \iota}} \frac{w(s,a)}{n(s,a) } \frac{n(s,a)}{n(s,a) - 1}.
    \end{align}
    For $s,a \in X_{\kappa, \iota}$, we have $n(s,a) \geq m w(s,a) \kappa$ (see Equation~\eqref{eqn:ki_n}) and so
    \begin{equation}
        \frac{w(s,a)}{n(s,a)} \leq \frac{1}{\kappa m}. \label{eqn:w_over_n_bound}
    \end{equation}
    Further, for all relevant $(s,a)$-pairs, we have $n(s,a) > 1$ (follows from $|X_{\kappa, \iota}| \leq \kappa$) which implies
    \begin{align}
        B(s_0) \leq & H^{d+1} \frac{52\maxNumSucc}{3} \ln \frac{6}{\delta_1}  \sum_{\kappa, \iota}  \frac{|X_{\kappa, \iota}|}{\kappa m}
    \end{align}
    and since we assumed $| X_{\kappa, \iota} | \leq \kappa$ 
    \begin{align}
        B(s_0)  
        \leq \frac{52 H^{d+1} \numki \maxNumSucc}{3 m} \ln \frac{6}{\delta_1} = \hat B_d
    \end{align}
    where $\kispace$ is the set of all possible $(\kappa, \iota)$-pairs.
     The last term is
     \begin{align}
         C(s_0) =& 
         \sqrt{\maxNumSucc} \sum_{s,a \in X}  c_2(s,a) \sum_{t=1}^{H-1} \tilde \sigma^{(d)}_{t:H}(s,a)) P_{1:t-1} \indicator{ s = \cdot, a = \pi_t (s) } \\
         \leq & \sqrt{\maxNumSucc} \sum_{s,a \in X} c_2(s,a) \sum_{t=1}^{H-1} \tilde \sigma^{(d)}_{t:H}(s,a)) P_{1:t-1} \indicator{ s = \cdot, a = \pi_t (s) } \\
         \leq & \sqrt{\maxNumSucc} \sum_{s,a \in X} c_2(s,a) 
         \sqrt{\sum_{t=1}^{H-1} P_{1:t-1} \indicator{ s = \cdot, a = \pi_t (s) }}
         \sqrt{\sum_{t=1}^{H-1} \tilde \sigma^{(d),2}_{t:H}(s,a)) P_{1:t-1} \indicator{ s = \cdot, a = \pi_t (s) }} \\
         \leq & \sqrt{\maxNumSucc} \sum_{s,a \in X}  
     \sqrt{ \frac{8 w(s,a)}{n(s,a)}\ln\frac{6}{\delta_1} \sum_{t=1}^{H-1} \tilde \sigma^{(d),2}_{t:H}(s,a)) P_{1:t-1} \indicator{ s = \cdot, a = \pi_t (s) }} \end{align}
     where we first applied the Cauchy-Schwarz inequality and then used the definition of $c_2(s,a)$ and $w(s,a)$.
     \begin{align}
         C(s_0)
         \leq & \sqrt{\maxNumSucc} \sum_{\kappa, \iota} \sum_{s,a \in X_{\kappa, \iota}}  
         \sqrt{ \frac{8 w(s,a)}{n(s,a)}\ln\frac{6}{\delta_1} \sum_{t=1}^{H-1} \tilde \sigma^{(d),2}_{t:H}(s,a)) P_{1:t-1} \indicator{ s = \cdot, a = \pi_t (s) }(s_0)} \\
         \leq & \sqrt{\maxNumSucc} \sum_{\kappa, \iota} 
         \sqrt{|X_{\kappa, \iota}| \sum_{s,a \in X_{\kappa, \iota}} \frac{8 w(s,a)}{n(s,a)}\ln\frac{6}{\delta_1} \sum_{t=1}^{H-1} \tilde \sigma^{(d),2}_{t:H}(s,a)) P_{1:t-1} \indicator{ s = \cdot, a = \pi_t (s) }(s_0)} \\
         \leq & \sqrt{\maxNumSucc} \sum_{\kappa, \iota} 
         \sqrt{\sum_{s,a \in X_{\kappa, \iota}} \frac{8}{m}\ln\frac{6}{\delta_1} \sum_{t=1}^{H-1} \tilde \sigma^{(d),2}_{t:H}(s,a)) P_{1:t-1} \indicator{ s = \cdot, a = \pi_t (s) }(s_0)} \\
         \leq  & \sqrt{\maxNumSucc \numki \frac{8}{m}\ln\frac{6}{\delta_1} \sum_{s,a \in X} \sum_{t=1}^{H-1} \tilde \sigma^{(d),2}_{t:H}(s,a)) P_{1:t-1} \indicator{ s = \cdot, a = \pi_t (s) }(s_0)} \\
         \leq & \sqrt{\maxNumSucc \numki \frac{8}{m}\ln\frac{6}{\delta_1} 
     \sum_{s,a \in \saspace} \sum_{t=1}^{H-1} \tilde \sigma^{(d),2}_{t:H}(s,a)) P_{1:t-1} \indicator{ s = \cdot, a = \pi_t (s) }(s_0)} \\
     = & \sqrt{\maxNumSucc \numki \frac{8}{m}\ln\frac{6}{\delta_1} 
 \sum_{t=1}^{H-1}  P_{1:t-1} \tilde \sigma^{(d),2}_{t:H} (s_0)}
         \label{eqn:D_bound_generic}\\
         \leq & \sqrt{ \maxNumSucc \numki  \frac{8 H^{2d+3}  \ln(6 / \delta_1)}{ m}} = \hat C_d
 \end{align}
 We first split the sum and applied the Cauchy-Schwarz inequality. Then we used
 again Inequality~\eqref{eqn:w_over_n_bound} and $|X_{\kappa, \iota}| \leq
 \kappa$.  In the fourth step, we applied Cauchy-Schwarz and the final
 inequality follows from $\| \tilde \sigma_{t:H}^{(d),2} \|_\infty \leq H^{2d +
 2}$ and the fact that $P_{1:t-1}$ is non-expansive.  Alternatively, we can
 rewrite the bound in Equation~\eqref{eqn:D_bound_generic} as
 \begin{align}
     C(s_0) 
     \leq &  \sqrt{\maxNumSucc \numki \frac{8}{m}\ln\frac{6}{\delta_1} 
 \sum_{t=1}^{H-1}  P_{1:t-1} \tilde \sigma^{(d),2}_{t:H} (s_0)} \\
     = &  \sqrt{ \maxNumSucc \numki \frac{8}{m}\ln\frac{6}{\delta_1} 
 \sum_{t=1}^{H-1}  P_{1:t-1} \tilde \sigma^{(d),2}_{t:H} (s_0) - \tilde P_{1:t-1} \tilde \sigma^{(d),2}_{t:H} (s_0) + \tilde P_{1:t-1} \tilde \sigma^{(d),2}_{t:H} (s_0)}.
 \end{align}
 Lemma~\ref{lem:varV_bellman} shows that the variance $\tilde
 \varV^{(d)}_{1:H}$ also satisfies the Bellman equation with the local
 variances $\tilde \sigma^{(d),2}_{i:j}$.  This insight allows us to bound
 $\sum_{t=1}^{H-1} \tilde P_{1:t-1} \tilde \sigma^{(d),2}_{t:H}(s_0) = \tilde
 \varV^{(d)}_{1:H}(s_0) \leq H^{2d+2}$. 
 Also, note that 
 $\tilde \sigma^{(d),2}_{t:H} = r_t^{(2d+2)}$ which gives us
 \begin{align}
     C(s_0) 
     \leq  & \sqrt{ \maxNumSucc \numki \frac{8}{m}\ln\frac{6}{\delta_1} 
 \left( H^{2d+2} + \sum_{t=1}^{H-1}  P_{1:t-1} r^{(2d+2)}_{t}(s_0) - \tilde P_{1:t-1} r^{(2d+2)}_{t} (s_0) \right)}\\
     = &  \sqrt{ \maxNumSucc \numki \frac{8}{m}\ln\frac{6}{\delta_1} 
 \left( H^{2d+2} + V_{1:H}^{(2d+2)}(s_0) - \tilde V_{1:H}^{(2d+2)}(s_0) \right)}\\
     \leq &  \sqrt{ \maxNumSucc \numki \frac{8}{m}\ln\frac{6}{\delta_1} 
 \left( H^{2d+2} + \Delta_{2d+2} \right)}\\
     \leq &  \sqrt{ \maxNumSucc \numki \frac{8}{m} H^{2d+2}\ln\frac{6}{\delta_1} }
 + \sqrt{ \maxNumSucc \numki \frac{8}{m}\Delta_{2d+2}\ln\frac{6}{\delta_1}} = \hat C_d' + \hat C'' \sqrt{\Delta_{2d+2}}
 \end{align}
 \end{proof}

 \subsubsection{Proof of Lemma~\ref{lem:varV_bellman} (Bellman equation of local value function variances)}
 \begin{proof}[Proof of Lemma~\ref{lem:varV_bellman}]
    \begin{align}
        \varV_{i:j}(s) 
        = & \mathbb E\left[ \left( \sum_{t=i}^{j} r_t(s_t) - V_{i:j}(s_i) \right)^2 | s_i = s \right] \\
        = & \mathbb E\left[ \left( \sum_{t=i+1 }^{j} r_t(s_t) - V_{i+1:j}(s_{i+1}) +
V_{i+1:j}(s_{i+1}) + r_i(s_i) - V_{i:j}(s_i) \right)^2 | s_i = s \right] \\
= & \mathbb E\left[ \left( \sum_{t=i+1 }^{j} r(s_t) - V_{i+1:j}(s_{i+1})\right)^2 | s_i = s \right] \\
        &+
        2 \mathbb E\left[ \left( \sum_{t=i+1 }^{j} r_t(s_t) - V_{i+1:j}(s_{i+1})\right) \left(V_{i+1:j}(s_{i+1}) + r_i(s_i) - V(s_i) \right) |  s_i = s \right] \\
        & + \mathbb E \left[\left(V_{i+1:j}(s_{i+1}) + r_i(s_i) - V_{i:j}(s_i) \right)^2 | s_i = s \right]\\
        = & \mathbb E\left[ \varV_{i+1:j}(s_{i+1}) | s_i = s \right] \\
        &+
        2 \mathbb E\left[ \mathbb E \left[ \left( \sum_{t=i+1 }^{j} r_t(s_t) - V_{i+1:j}(s_{i+1})\right) \left(V_{i+1:j}(s_{i+1}) + r_i(s_i) - V_{i:j}(s_i) \right) | s_{i+1} \right] |  s_i = s \right] \\
        & + \mathbb E \left[\left(V_{i+1:j}(s_{i+1}) - P_i V_{i+1:j}(s_i) \right)^2 | s_i = s \right]
    \end{align}
    where the final equality follows from the tower property of conditional expectations, and the fact that $V_{i:j}(s_i) = P_i V_{i+1:j}(s_i) + r_i(s_i)$. Since by the definition of the value function
    \[ 
        \mathbb E \left[ \left( \sum_{t=i+1 }^{j} r_t(s_t) - V_{i+1:j}(s_{i+1})\right) | s_{i+1} \right] = 0
    \]
    the middle term vanishes and the last term is by definition $\sigma_{i:j}^2 (s)$ we obtain
    \begin{align}
        \varV_{i:j}(s)  = P_i \varV_{i+1:j}(s) + \sigma_{i:j}^2(s).
    \end{align}
    Noting that $\varV_{j:j}(s) = (r_j(s) - r_j(s))^2 = 0$, we can unroll the equation and obtain
    \begin{equation}
        \varV_{i:j}(s) = \sum_{t=i}^j P_{i:t-1} \sigma^2_{t:j}(s).
    \end{equation}
    From the definition of $\varV_{1:H}$ and the fact that $0 \leq r(\cdot) \leq
    r_{\max}$, we see that $0 \leq \varV_{1:H} \leq H^2 r_{\max}^2$ and the final statement of
    the lemma follows.

\end{proof}
\subsubsection{Proof of Lemma~\ref{lem:balanced_eps_good}}
\begin{proof}[Proof of Lemma~\ref{lem:balanced_eps_good}]
     The recursive bound from Lemma~\ref{lem:Delta_bounds}
     \[
         \Delta_d \leq \hat A_d + \hat B_d + \hat C_d' + \hat C'' \sqrt{\Delta_{2d+2}}
 \]
 has the form $\Delta_d \leq Y_d + Z \sqrt{\Delta_{2d+2}}$. Expanding this form and using the triangle inequality gives
 \begin{align}
     \Delta_0 \leq & Y_0 + Z \sqrt{\Delta_{2}} \leq Y_0 + Z \sqrt{Y_2 + Z \sqrt{\Delta_{6}}}
     \leq Y_0 + Z \sqrt{Y_2} + Z^{3/2} \Delta_{6}^{1/4}\\
     \leq & Y_0 + Z \sqrt{Y_2} + Z^{3/2} Y_6^{1/4} + Z^{7/4} \Delta_{14}^{1/8} \leq \dots
 \end{align}
 and by doing this up to level $\gamma = \lceil \frac{\ln H}{2 \ln 2} \rceil$, we obtain
 \begin{align}
     \Delta_0 \leq \sum_{d \in \mathcal D \setminus \{ \gamma \}} Z^{\frac{2d}{2 + d}} Y_d^{\frac{2}{2+d}} + Z^{\frac{2\gamma}{2 + \gamma}} \Delta_\gamma ^{\frac{2}{2 + \gamma}}
 \end{align}
 where
 $\mathcal D = \{ 0, 2, 6, 14, \dots \gamma \}$.
Note that the exponent of $H$ compared to $m$ is the larger in $\hat C_d'$ than in $\hat B_d$. Therefore, for sufficiently large $m$, $\hat C_d'$ dominates the other term. More precisely, for
\begin{align}
    m \geq \frac{338H}{9} \maxNumSucc \numki \ln \frac{6}{\delta_1}
    \label{eqn:m_cond_ABC}
\end{align}
we have $\hat B_d \leq \hat C_d'$.
We can therefore consider
$Z = \hat C''$ and $Y_d = 2 \hat C_d' + \hat A_d$. Also, since $\hat C_d \geq \hat C'_d$, we can bound $\Delta_\gamma \leq \hat A_d + 2 \hat C_d$.
For notational simplicity, we will use the auxiliary variable
\[
    m_1 = \frac{8 \maxNumSucc | \mathcal K \times \mathcal I| H^2 }{m \epsilon^2} \ln \frac{6}{\delta_1}.
\]
and get
\begin{align}
    Z &= \hat C'' = \sqrt{m_1}\frac{\epsilon}{H} \quad \textrm{and}\\
    Y_d &= \hat A_d + 2 \hat C'_d = (1/4 + 2 \sqrt{m_1}) H^d \epsilon\quad \textrm{and}\\
    \Delta_\gamma & \leq \hat A_\gamma + 2 \hat C_\gamma = (1/4 + 2 \sqrt{m_1 H}) H^\gamma \epsilon.
\end{align}
Then 
\begin{align}
    \left(Z^{2d} Y_d^{2} \right)^{(2+d)^{-1}} 
    =& \left(m_1^d \epsilon^{2d+2} (1/4 + 2 \sqrt{m_1})^2 \right)^{(2+d)^{-1}}
    = \epsilon \left(m_1^d \epsilon^{d} (1/4 + 2 \sqrt{m_1})^2 \right)^{(2+d)^{-1}}
\end{align}
and
\begin{align}
    \left(Z^{2\gamma} \Delta_\gamma \right)^{(2+\gamma)^{-1}} 
    =& \left(m_1^\gamma \epsilon^{2\gamma+2} (1/4 + 2 \sqrt{m_1 H})^2 \right)^{(2+\gamma)^{-1}}
    = \epsilon  \left(m_1^\gamma \epsilon^{\gamma} (1/4 + 2 \sqrt{m_1 H})^2 \right)^{(2+\gamma)^{-1}}.
\end{align}
Putting these pieces together, we obtain
\begin{align}
    \frac{\Delta_0}{\epsilon} \leq & \sum_{d \in \mathcal D \setminus \{ \gamma \}} (\epsilon m_1)^{\frac{d}{2+d}} \left( \frac{1}{4} + 2 \sqrt{m_1} \right)^{\frac{2}{d + 2}}
    + (\epsilon m_1)^{\frac{\gamma}{\gamma + 2}} \left(\frac{1}{4} + 2 \sqrt{H m_1} \right)^{\frac{2}{\gamma + 2}}
    \\
    = & 
    \frac{1}{4} + 2 \sqrt{m_1} + 
    \sum_{d \in \mathcal D \setminus \{0, \gamma \}} (\epsilon m_1)^{\frac{d}{2+d}} \left( \frac{1}{4} + 2 \sqrt{m_1} \right)^{\frac{2}{d + 2}}
    + (\epsilon m_1)^{\frac{\gamma}{\gamma + 2}} \left(\frac{1}{4} + 2 \sqrt{H m_1} \right)^{\frac{2}{\gamma + 2}}\\
    \leq & 
    \frac{1}{4} + 2 \sqrt{m_1} + 
    \sum_{d \in \mathcal D \setminus \{0, \gamma \}}(\epsilon m_1)^{\frac{d}{2+d}} \left[ \left( \frac{1}{4} \right)^{\frac{2}{d + 2}} 
      + \left(  2 \sqrt{m_1} \right)^{\frac{2}{d + 2}} \right] \\
      & + (\epsilon m_1)^{\frac{\gamma}{\gamma + 2}} \left[ \left(\frac{1}{4} \right)^{\frac{2}{\gamma + 2}}+ \left( 2 \sqrt{H m_1} \right)^{\frac{2}{\gamma + 2}}\right]
\end{align}
where we used the fact that $(a+b)^\phi \leq a^\phi + b^\phi$ for $a,b >0$ and $0 < \phi <1$.
We now bound the $H^{1/(2+\gamma)}$ by using the definition of $\gamma$. Since
\[
    \frac{1}{2 + \gamma} = \frac{2 \ln 2}{4 \ln 2 + \ln H} \leq 2 \log_H 2
\] 
and since $H \geq 1$, we have $H^{1/(2+\gamma)} \leq 4$.
Therefore
\begin{align}
    \frac{\Delta_0}{\epsilon} 
    \leq & 
    \frac{1}{4} + 2 \sqrt{m_1} + 
    \sum_{d \in \mathcal D \setminus \{0, \gamma \}}(\epsilon m_1)^{\frac{d}{2+d}} \left[ \left( \frac{1}{4} \right)^{\frac{2}{d + 2}} 
      + \left(  2 \sqrt{m_1} \right)^{\frac{2}{d + 2}} \right] \\
      & + (\epsilon m_1)^{\frac{\gamma}{\gamma + 2}} \left[ \left(\frac{1}{4} \right)^{\frac{2}{\gamma + 2}}+ 4 \left( 2 \sqrt{m_1} \right)^{\frac{2}{\gamma + 2}}\right]\\
    \leq & 
    \frac{1}{4} + 2 \sqrt{m_1} + 
    \sum_{d \in \mathcal D \setminus \{0 \}}(\epsilon m_1)^{\frac{d}{2+d}} \left[ \left( \frac{1}{4} \right)^{\frac{2}{d + 2}} 
      + 4\left(  2 \sqrt{m_1} \right)^{\frac{2}{d + 2}} \right]\\
    \leq & 
    \frac{1}{4} + 2 \sqrt{m_1} + 
    \sum_{i=1}^{\log_2 \gamma}(\epsilon m_1)^{1 - 2^{-i}} \left[ \left( \frac{1}{4} \right)^{2^{-i}} 
    + 4\left(  2 \sqrt{m_1} \right)^{2^{-i}} \right]\\
    \leq & 
    \frac{1}{4} + 2 \sqrt{m_1} + 
    \sum_{i=1}^{\log_2 \gamma} m_1^{1 - 2^{-i}} \left[ \left( \frac{1}{4} \right)^{2^{-i}} 
    + 4\left(  2 \sqrt{m_1} \right)^{2^{-i}} \right]
\end{align}
In the first inequality, we used the bound for $H^{1/(2+\gamma)}$ and in the
second inequality we simplified the expression by noting that all terms are
nonnegative. In the next step, we re-parameterized the sum. In the final
inequality, we used the assumption that $0 < \epsilon \leq 1$ and therefore
$\epsilon^{1 - 2^{-i}} \leq 1$.
\begin{align}
    \frac{\Delta_0}{\epsilon} 
    \leq & 
    \frac{1}{4} + 2 \sqrt{m_1} + 
    \frac{1}{4}\sum_{i=1}^{\log_2 \gamma} (4m_1)^{1 - 2^{-i}}  
    + 4 \sum_{i=1}^{\log_2 \gamma} (m_1)^{1 - 2^{-i}}  \left(  4 m_1 \right)^{2^{-i-1}} \\
    \leq & 
    \frac{1}{4} + 2 \sqrt{m_1} + 
    \frac{1}{4}\sum_{i=1}^{\log_2 \gamma} (4m_1)^{1 - 2^{-i}}  
    + 16 \sum_{i=1}^{\log_2 \gamma} \left(\frac{m_1}{4}\right)^{1 - 2^{-i-1}}.
\end{align}
By requiring that
\begin{equation}
    m_1 \leq \frac{1}{4}
\end{equation}
and noting that $1 - 2^{-i} \geq 1/2$ and $1 - 2^{-i-1} \geq 3/4$ for $i \geq 1$, we can bound the expression by
\begin{align}
    \frac{\Delta_0}{\epsilon} 
    \leq & 
    \frac{1}{4} + 2 \sqrt{m_1} + 
    \frac{1}{4} \log_2(\gamma) \sqrt{4m_1}  
    + 16 \log_2(\gamma) \left(\frac{m_1}{4}\right)^{3/4}.
\end{align}
By requiring that $m_1 \leq 1 / 64$ and $m_1 \leq (2 \log_2 \gamma)^{-2}$ and
$m_1 \leq 1/ 64 (\log_2 \gamma)^{-4/3}$, we can assure that $\Delta_0
\leq \epsilon$.
Taking all assumptions on $m_1$ we made above together, we realize that 
\begin{align}
    m_1 \leq \left(\frac{1}{8 \log_2 \log_2 H}\right)^2 \leq
 \left(\frac{1}{8 \log_2 \gamma}\right)^2
\end{align}
is sufficient for them to hold where we used $\log_2 \gamma = \log_2( \lceil \frac{1}{2} \log_2 H \rceil) \leq \log_2 \log_2 H$. 
This gives the following condition on $m$
\begin{align}
    m \geq 512 \maxNumSucc (\log_2 \log_2 H)^2 | \mathcal K \times \mathcal I| \frac{H^2}{\epsilon^2} \ln \frac{6}{\delta_1} 
\end{align}
which is a stronger condition that the one in Equation~\eqref{eqn:m_cond_ABC}.

By construction of $\iota(s,a)$, we have $\iota(s,a) \leq 2\frac{H}{w_{\min}} =
\frac{8 \numS H^2}{ \epsilon} = \frac{8 H^2 \numS}{\epsilon}$.  Also,
$\kappa_k(s,a) \leq \frac{\numS m H}{m w_{\min}} = \frac{ 4 \numS^2
H^2}{\epsilon}$. Therefore
\begin{align}
     \numki \leq \log_2 \frac{ 4 \numS^2  H^2}{\epsilon} \log_2 \frac{8 H^2 \numS}{\epsilon} \leq \log_2^2 \frac{8 H^2 \numS^2}{\epsilon}
 \end{align}
which let us conclude that 
\begin{align}
    m \geq 512 \frac{\maxNumSucc H^2}{\epsilon^2}(\log_2 \log_2 H)^2 \log_2^2 \left(\frac{8 H^2 \numS^2}{\epsilon}\right)  \ln \frac{6}{\delta_1} 
\end{align}
is a sufficient condition and thus, the statement to show, holds.
\end{proof}

\subsection{Proof of Theorem~\ref{thm:upper_bound}}

\begin{proof}[Proof of Theorem~\ref{thm:upper_bound}]
    By Lemma~\ref{lem:unbalanced_episodes_bound}, we know that the number of
    episodes where $| X_{\kappa, \iota}| > \kappa$ for some $\kappa, \iota$ is
    bounded by $6 E_{\max} | \saspace| m$ with probability at least $1 -
    \delta / 2$. For all other episodes, we have by
    Lemma~\ref{lem:balanced_eps_good} that $| \tilde R^{\pi_k} - R^{\pi_k} | < \epsilon$. Since, with probability at least $1 - \delta / 2$, we have by Lemma~\ref{lem:mdp_capture}
    $M \in \mathcal M_k$, we can use Lemma~\ref{lem:planning} which gives $\tilde R^{\pi_k} > R^* \geq R^{\pi_k}$ to conclude that with probabilty at least $1 - \delta / 2$, for all episodes with
    $| X_{\kappa, \iota} | \leq \kappa$ for all $\kappa, \iota$, we have $R^* - R^{\pi_k} < \epsilon$. Applying the union bound, we get the desired result, if $m$ satisfies
    \begin{align}
        m \geq & 512 \frac{\maxNumSucc H^2}{\epsilon^2}(\log_2 \log_2 H)^2 \log_2^2 \left(\frac{8 H^2 \numS^2}{\epsilon}\right)  \ln \frac{6}{\delta_1} 
    \quad \textrm{and} \\
        m \geq &  \frac{6 H^2}{ \epsilon} \ln \frac{2 E_{\max}}{\delta}.
  \end{align}
From the definitions, we get
\[
\ln \frac{6}{\delta_1} = \ln \frac{6 \maxNumSucc \maxUpdates}{\delta} 
= \ln \frac{6 |\saspace| \maxNumSucc \log_2 (\numS H / w_{\min})}{\delta}
= \ln \frac{6 |\saspace| \maxNumSucc \log_2 (4 \numS^2 H^2 / \epsilon)}{\delta}
\]
and
\[
    E_{\max} = \log_2 \numS \log_2 \frac{4 H^2 \numS}{\epsilon} \leq \log_2^2 \frac{4 H^2 \numS}{\epsilon} 
\]
and
\begin{align}
    \ln \frac{2 E_{\max}}{\delta} =& \ln \frac{2 \log_2 \numS \log_2 (4 H^2 \numS / \epsilon)}{\delta} 
    \leq \ln  \frac{2\log^2_2 (4 H^2 \numS / \epsilon)  }{\delta} \\
    \leq & \ln \frac{6 \numsa \log^2_2 (4 \numS^2 H^2 / \epsilon)}{\delta}.
\end{align}
Setting
\[
    m = 512 (\log_2 \log_2 H)^2 \frac{\maxNumSucc H^2}{\epsilon^2} \log^2 \left(\frac{8 H^2 \numS^2}{\epsilon} \right) 
        \ln \frac{6 \numsa \maxNumSucc \log^2_2 (4 \numS^2 H^2 / \epsilon)}{\delta}
\]
is therefore a valid choice for $m$ to ensure that with probability at least $1 - \delta$ , there are at most 

\begin{align}
    6 m E_{\max} =& 
    3072 (\log_2 \log_2 H)^2 \frac{\maxNumSucc H^2 \numsa}{\epsilon^2}\\
    & \times \log_2^2 \left( \frac{4 H^2 \numS}{\epsilon}\right) \log^2 \left(\frac{8 H^2 \numS^2}{\epsilon} \right) \ln \frac{6 |\saspace| \maxNumSucc \log^2_2 (4 \numS^2 H^2 / \epsilon)}{\delta}
\end{align}

$\epsilon$-suboptimal episodes.

\end{proof}

\section{Proof of the Lower PAC Bound}
\label{sec:lower_proofs}
\begin{proof}[Proof of Theorem~\ref{thm:lower_bound}]

We consider the class of MDPs shown in Figure~\ref{fig:finite_hard_mdp}. The MDPs
essentially consist of $n$ parallel multi-armed bandits. For each bandit, there
exist $m+1 = \numA$ possible instantiations, which we denote by $I_i = 0
\dots m$.  The instantiation, or \emph{hypothesis}, $I_i = 0$ corresponds to
$\epsilon_i(a) = \indicator{a = a_0} \epsilon' / 2$, that is, only action $a_0$
has a small bias. The other hypotheses $I_i = j$ for $j=1\dots m$ correspond to
$\epsilon_i(a) = \indicator{a = a_0} \epsilon' / 2 + \indicator{a = a_j}
\epsilon'$. We use $I = (I_1, \dots I_n)$ to indicate the instance of the
entire MDP.

We define $G_i = \{ \omega \in \Omega : \pi(i) = a_{I_i} \}$, the event that
$\pi$, the policy generated by $A$ chooses optimally in bandit $i$.  For a
given instance $I$, the difference between the optimal expected cumulative
reward $R^*_I$ and the expected cumulative reward $R^\pi_I$ of policy $\pi$ is at least
\[
    R^*_I - R^\pi_I \geq (H-2) \left(1 - \frac 1 n \sum_{i=1}^n \indicator{G_i}\right) \frac {\epsilon'} 2.
\]
For $\pi$ to be $\epsilon$-optimal, we therefore need
\begin{align}
    \epsilon \geq & R^*_I - R^\pi_I \geq (H-2) \left(1 - \frac 1 n \sum_{i=1}^n \indicator{G_i}\right) \frac {\epsilon'} 2, \\
    \frac{2\epsilon}{(H-2)\epsilon'} \geq & \left(1 - \frac 1 n \sum_{i=1}^n \indicator{G_i}\right), \\
   \frac 1 n \sum_{i=1}^n \indicator{G_i}  \geq & \left(1 - \frac{2\epsilon}{(H-2)\epsilon'}\right), \\
    \frac 1 n \sum_{i=1}^n \indicator{G_i}  \geq & \left(1 - \frac{2\epsilon (H-2) \eta}{(H-2) 16 \epsilon e^4}\right) = 1 - \frac{\eta}{8e^4}
\end{align}
where we chose value $\epsilon' \defeq \frac{16 \epsilon e^4}{(H-2)\eta}$ for
$\epsilon'$. We will specify the exact value of parameter $\eta$ later. 
The condition basically states that at least a fraction of $\phi
\defeq 1 - \eta / (8 e^4)$ bandits need to be solved optimally by $A$ for the
resulting policy $\pi$ to be $\epsilon$-accurate. For $A$ to be $(\epsilon,
\delta)$-correct, we therefore need 
\begin{equation}
    \prob_I\left( \frac 1 n \sum_{i=1}^n \indicator{G_i} \geq \phi \right)\geq \prob_I(R^*_I - R^\pi_I \geq \epsilon) \geq 1 - \delta
\end{equation}
for each instance $I$. Using Markov's inequality, we obtain
\begin{align}
    1 - \delta \leq  \prob_I\left( \frac 1 n \sum_{i=1}^n \indicator{G_i} \geq \phi\right)
        \leq \frac{1}{n\phi}\sum_{i=1}^n \mathbb E_I[\indicator{G_i}]
        \leq \frac{1}{n\phi}\sum_{i=1}^n \prob_I(G_i)
\end{align}

All $G_i$ are independent of each other by construction of the MDP. In fact 
$ \sum_{i=1}^n  \indicator{G_i}$ is Poisson-binomial distributed as $\indicator{G_i}$ are independent Bernoulli random variables with potentially different mean.
Therefore,
upper bounds $\delta_i$ must exist such that $\delta_i \geq P_I(G_i^C)$ for all
hypotheses $I$ and such that $1 - \delta \leq \frac{1}{n\phi} \sum_{i=1}^n (1 -
\delta_i)$ or equivalently
    $n(1 + \delta \phi - \phi) \geq \sum_{i=1}^n \delta_i$.
Since all $G_i$ are independent of each other and
\begin{equation}
    \epsilon' =  \frac{16 \epsilon e^4}{(H-2)\eta} \leq \frac{16 (H-2) e^4 \eta}{(H-2)64 e^4 \eta} = \frac 1 4
\end{equation}
we can apply Theorem~1 by \citet{Mannor2004} 
in cases where 
\begin{equation}
    \delta_i \leq \frac{1}{\eta}(1 - \phi + \delta \phi) \leq \frac{1}{\eta}(1 - \phi + \delta) 
    \leq  \frac{1}{8e^4} + \frac{\delta}{\eta} \leq \frac{2}{8e^4}.
\end{equation}
This result gives us the minimum
expected number of times $\mathbb E_I[n_i]$ we need to observe state $i$ to ensure that $P_I(G_i^C) \leq \delta_i$
\begin{equation}
    \mathbb E_I[n_i] \geq \left[\frac{c_{1}(\numA - 1)}{\epsilon'^2} 
    \ln\left( \frac{c_{2}}{\delta_i} \right)\right] \indicator{\eta \delta_i \leq 1 - \phi + \phi \delta},
\end{equation}
for appropriate constants $c_1$ and $c_2$ (e.g. $c_1 = 400$ and $c_2 = 4$).  We can find a valid lower bound
for the total number of samples for any $\delta_1, \dots \delta_n$ by
considering the worst bound over all $\delta_1, \dots \delta_n$. The following
optimization problem encodes this idea 
\begin{align}
    \label{eqn:bound_prob}
    \min_{\delta_1, \dots \delta_n} & \sum_{i=1}^n \ln \frac{1}{\delta_i} 
    \indicator{\eta \delta_i \leq 1 - \phi + \phi \delta}\\
    \textrm{s.t.} \,& \sum_{i=1}^n \delta_i  \leq n(1 + \phi \delta - \phi)
\end{align}
As shown in Lemma~\ref{lem:delta_dis} in the supplementary material, the optimal solution of the
optimization problem in Equation~\eqref{eqn:bound_prob} is $\delta_1 = \dots =
\delta_n = c$ if $\eta  (1 - \ln c) \leq 1$ with $c = 1 + \delta \phi - \phi$.
Since the left-hand side of this condition is decreasing in $c$, we can plug in
a lower bound of $c \geq 1 - \phi = \frac{\eta}{8 e^4}$ and get the sufficient
condition
\begin{equation}
    \eta ( 1 - \ln \frac{\eta}{8 e^4}) = \eta(1 - \ln \eta + 4  + \ln 8) \leq 1.
\end{equation}
It is easy to verify that $\eta = 1 / 10$ satisfies this condition. Hence $\delta_1 = \dots = \delta_n = c$ is the optimal solution to the problem in Equation~\eqref{eqn:bound_prob}.
In each episode, we only observe a single state $i$ and therefore, there need to be at least
\[
    \mathbb E_I[n_A] \geq \sum_{i=1}^n \mathbb E_I[n_i] \geq 
    \frac{c_1 (\numA -1) n}{\epsilon'^2}
    \ln\left( \frac{c_2}{\delta_i} \right) 
    \geq 
    \frac{c_1 (\numA -1) n}{\epsilon'^2}
    \ln\left( \frac{c_2}{\delta + \frac{\eta}{8e^4}} \right) 
\]
observed episodes for appropriate constants $c_1$ and $c_2$. Plugging in $\epsilon'$ and $n =
\numS - 3$, we obtain the desired statement.

\end{proof}

\begin{lem}
The optimization problem
\begin{align}
    \min_{\delta_1 \dots \delta_n \in [0, 1]} &\sum_{i=1}^n  \ln \frac 1 \delta_i 
    \indicator{\eta \delta_i \leq c}\\
    \textrm{s.t.} \,& \sum_{i=1}^n \delta_i \leq n c
\end{align}
with $c \in [0, 1]$ and 
\[
    \eta (1 - \ln c) \leq 1 
\]
has optimal solution 
$\delta_1 = \dots = \delta_n = c$.
\label{lem:delta_dis}
\end{lem}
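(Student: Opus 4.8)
The plan is to relax the single budget constraint $\sum_i \delta_i \le nc$ with a Lagrange multiplier and exhibit an explicit affine lower bound (a ``supporting line'') for the per-coordinate objective that is tight at $\delta_i = c$. Since the objective $\sum_i f(\delta_i)$ with $f(\delta) \defeq \ln(1/\delta)\,\indicator{\eta\delta \le c}$ is separable and symmetric, and the feasible set couples the coordinates only through their sum, it suffices to find a slope $\mu \ge 0$ such that
\[
f(\delta) \ge \ln\tfrac1c - \mu(\delta - c) \qquad \text{for all } \delta \in [0,1],
\]
because summing this over $i$ and using $\sum_i \delta_i \le nc$ together with $\mu \ge 0$ immediately yields $\sum_i f(\delta_i) \ge n\ln(1/c)$, which is exactly the value attained at $\delta_1 = \dots = \delta_n = c$. (Here I assume $c \in (0,1]$; the degenerate case $c=0$ forces all $\delta_i=0$.) The only real difficulty is that $f$ is \emph{not} convex: the indicator makes it drop discontinuously to $0$ once $\delta$ exceeds the threshold $c/\eta$, so the naive ``equalize the coordinates'' argument is not automatically valid and one must rule out zeroing out some terms by pushing their $\delta_i$ past the threshold while shrinking the others.

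The concrete steps I would carry out are as follows. First, take $\mu = 1/c$, i.e.\ let $\ell(\delta) \defeq \ln(1/c) - \tfrac1c(\delta - c)$ be the tangent to the convex function $g(\delta) = \ln(1/\delta)$ at $\delta = c$; note $\ell(c) = \ln(1/c) = f(c)$, using that $\eta \le 1$ (which follows from the hypothesis since $1-\ln c \ge 1$) guarantees $\eta c \le c$, so the indicator is active at $c$. Next, verify $f \ge \ell$ on $[0,1]$ by splitting on the indicator: on the active region $\delta \le c/\eta$ one has $f(\delta) = g(\delta) \ge \ell(\delta)$ simply because a tangent line of the convex $g$ lies below $g$; on the inactive region $\delta > c/\eta$ one has $f(\delta) = 0$ and must check $\ell(\delta) \le 0$. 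Finally, sum over $i$, invoke the budget constraint, and observe the bound is met with equality at the uniform point.

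The crux --- and the step that consumes the hypothesis --- is the inactive region. Since $\ell$ is decreasing, $\ell(\delta) \le \ell(c/\eta)$ for all $\delta > c/\eta$, so it is enough to check $\ell(c/\eta) \le 0$. A direct computation gives $\ell(c/\eta) = \ln(1/c) - (1/\eta - 1) = (1 - \ln c) - 1/\eta$, which is $\le 0$ precisely when $\eta(1 - \ln c) \le 1$ --- exactly the standing assumption. Thus the assumption is not merely sufficient but tight for this argument: it is the condition that prevents the drop to zero in the objective from undercutting the tangent line, thereby certifying that no redistribution of mass across the threshold can beat the uniform allocation $\delta_1 = \dots = \delta_n = c$.
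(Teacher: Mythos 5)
Your proof is correct, and it takes a genuinely different route from the paper's. The paper argues by restricting the search: it assumes (essentially WLOG) that an optimal point consists of $k$ coordinates pushed to the threshold $c/\eta$ (so their terms vanish) and the rest equal, reduces to a one-dimensional problem in $k$, relaxes $k/n$ to a continuous variable $\alpha$, checks convexity of the resulting objective on $[0,1/\gamma]$, and shows its derivative at $\alpha=0$ is nonnegative exactly when $\eta(1-\ln c)\le 1$. You instead produce a global optimality certificate: the affine minorant $\ell(\delta)=\ln\frac1c-\frac1c(\delta-c)$ satisfies $f(\delta)=\ln\frac1\delta\,\indicator{\eta\delta\le c}\ge\ell(\delta)$ on $[0,1]$ (tangency of a convex function on the active region; $\ell(c/\eta)=(1-\ln c)-1/\eta\le 0$ on the inactive region, which is precisely the hypothesis), and summing against the budget constraint immediately gives $\sum_i f(\delta_i)\ge n\ln\frac1c$, the value at the uniform point. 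Your version buys two things: it sidesteps the paper's structural reduction, whose ``without loss of generality'' step (that the zeroed-out coordinates sit exactly at the threshold and the rest are equal) is stated rather than fully justified, and it makes transparent that the hypothesis is exactly the condition preventing mass pushed past the threshold from beating the uniform allocation --- the same role it plays, less visibly, in the paper's derivative condition $\gamma\ge 1-\ln c$. The paper's route, in exchange, identifies what the optimizer looks like when the condition fails. Your side remarks (handling $c=0$ separately, and deducing $\eta\le 1$ from $1-\ln c\ge 1$ so that the indicator is active at $\delta=c$) are both needed and correctly handled.
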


\begin{proof}
Without the indicator part in the objective, we can show that $\delta_1 = \dots
= \delta_n = c$ is an optimal solution by checking the KKT conditions and
noting that the problem is convex. 
Let $k$ denote the number of $\delta_j$ that are set such that the indicator
function is $0$.  Without loss of generality we can assume that their value is
$\delta_P \defeq c / \eta$ and the remaining $\delta_j$ take the same value
$\delta_A$ (for a fixed $\delta_P$ and $k$, the problem reduces to the one
without the indicator functions).  Then the problem transforms into
\begin{align}
    \min_{\delta_A \in (0, 1), k \in \{0, 1, \dots n\}}&  (n - k) \ln \frac{1}{\delta_A}\\
                     & (n-k) \delta_A + k \delta_P \leq n c
\end{align}
We can rewrite the constraint as 
\begin{align}
    (n-k) \delta_A + k \delta_P & \leq n c\\
    (n-k) \delta_A & \leq nc - k \delta_P = \left(n - \frac{k}{\eta}\right) c\\
    \delta_A & \leq \frac{n - \frac k \eta}{n - k} c.
\end{align}
Since the objective decreases with $\delta_A$, it is optimal to choose $\delta_A$ as large as possible.
The optimization problem then reduces to
\begin{align}
    \min_{k \in \{0, \dots \lfloor{n / \gamma} \rfloor \}}& (n - k) \ln \left( \frac{n - k}{n - \gamma k} c^{-1} \right).
\end{align}
where we used for convenience $\gamma \defeq 1 / \eta$. 
We want to show that the optimal solution to this problem is $k=0$. 
We can therefore relax the problem to the continuous domain without loss of generality
\begin{align}
\min_{k \in [0,n / \gamma]}& (n - k) \ln \left( \frac{n - k}{n - \gamma k} c^{-1} \right).
\end{align}
By reparameterizing the problem with $\alpha = k / n$, we get
\begin{align}
    \min_{\alpha \in [0, 1 / \gamma]}& n (1 - \alpha) \ln \left( \frac{1 - \alpha}{c(1 - \gamma \alpha)} \right).
\end{align}
We realize that the minimizer does not depend on $n$ (while the value does).
The second derivative of the objective function is
\begin{align}
    n\frac{(\gamma - 1)^2}{(1 - \gamma \alpha)^2 (1 - \alpha)},
\end{align}
which is nonnegative for $\alpha \in [0, 1 / \gamma]$. 
Hence, the objective is convex in the feasible region and the
minimizer of this problem is $\alpha = 0$ if the derivative of the objective is
nonnegative in $0$.  The derivative of the objective in $0$ is given by
\begin{align}
    n(\gamma - 1 + \ln(c)).
\end{align}
A sufficient condition for $\alpha = 0$ being optimal is therefore
\begin{align}
    \gamma \geq 1 - \ln c
\end{align}
or, in terms of the original problem with $\eta = 1 / \gamma$, $\delta_1 = \dots \delta_n = c$ is optimal if
\begin{align}
    \eta ( 1 - \ln c)  \leq 1
\end{align}
\end{proof}

\end{document}